\documentclass{article}

\usepackage{microtype}
\usepackage{graphicx}
\usepackage{subcaption}
\usepackage{booktabs} 

\usepackage{hyperref}

\usepackage[preprint]{icml2026}

\usepackage{amsmath}
\usepackage{amssymb}
\usepackage{mathtools}
\usepackage{amsthm}
\usepackage{listings}
\usepackage{xcolor}
\usepackage{stfloats}
\usepackage{multirow} 

\definecolor{commentpink}{RGB}{255, 20, 147}
\definecolor{keywordblue}{RGB}{0, 0, 255}
\definecolor{stringorange}{RGB}{255, 140, 0}

\lstdefinestyle{torchstyle}{
    language=Python,
    basicstyle=\ttfamily\small,
    keywordstyle=\color{keywordblue}\bfseries,
    commentstyle=\color{commentpink}\itshape, 
    stringstyle=\color{stringorange},
    breaklines=true,
    frame=lines,
    backgroundcolor=\color{gray!5},
    showstringspaces=false,
    captionpos=b,
}

\usepackage[capitalize,noabbrev]{cleveref}

\theoremstyle{plain}
\newtheorem{theorem}{Theorem}[section]

\newtheorem{lemma}[theorem]{Lemma}

\theoremstyle{definition}

\theoremstyle{remark}

\usepackage[textsize=tiny]{todonotes}

\icmltitlerunning{Project and Generate:
  Divergence-Free Neural Operators for Incompressible Flows}

\begin{document}

\twocolumn[
  \icmltitle{Project and Generate:\\
  Divergence-Free Neural Operators for Incompressible Flows
}




    \icmlsetsymbol{equal}{*}

    \icmlsetsymbol{corr}{$\dagger$}
    
    \begin{icmlauthorlist}
        \icmlauthor{Xigui Li}{equal,1,2}
        \icmlauthor{Hongwei Zhang}{equal,1,2}
        \icmlauthor{Ruoxi Jiang}{corr,1,2}
        \icmlauthor{Deshu Chen}{1,2}
        \icmlauthor{Chensen Lin}{1,2}
        \icmlauthor{Limei Han}{1,2}
        \icmlauthor{Yuan Qi}{1,2}
        \icmlauthor{Xin Guo}{corr,1,2}
        \icmlauthor{Yuan Cheng}{corr,1,2}
    \end{icmlauthorlist}
    
    \icmlaffiliation{1}{Artificial Intelligence Innovation and Incubation Institute, Fudan University, Shanghai, China}
    \icmlaffiliation{2}{Shanghai Academy of Artificial Intelligence for Science, Shanghai, China}
    
    \icmlcorrespondingauthor{Ruoxi Jiang}{roxie\_jiang@fudan.edu.cn}
    \icmlcorrespondingauthor{Xin Guo}{guoxin@sais.org.cn}
    \icmlcorrespondingauthor{Yuan Cheng}{cheng\_yuan@fudan.edu.cn}

  \icmlkeywords{Machine Learning, ICML}

  \vskip 0.3in
]



\printAffiliationsAndNotice{\icmlEqualContribution}

\begin{abstract}

Learning-based models for fluid dynamics often operate in unconstrained function spaces, leading to physically inadmissible, unstable simulations. While penalty-based methods offer soft regularization, they provide no structural guarantees, resulting in spurious divergence and long-term collapse. In this work, we introduce a unified framework that enforces the incompressible continuity equation as a hard, intrinsic constraint for both deterministic and generative modeling. First, to project deterministic models onto the divergence-free subspace, we integrate a differentiable spectral Leray projection grounded in the Helmholtz–Hodge decomposition, which restricts the regression hypothesis space to physically admissible velocity fields. Second, to generate physically consistent distributions, we show that simply projecting model outputs is insufficient when the prior is incompatible. To address this, we construct a divergence-free Gaussian reference measure via a curl-based pushforward, ensuring the entire probability flow remains subspace-consistent by construction. Experiments on 2D Navier–Stokes equations demonstrate exact incompressibility up to discretization error and substantially improved stability and physical consistency.
\end{abstract}

\begin{figure*}[t]
    \centering
    \includegraphics[width=0.95\linewidth]{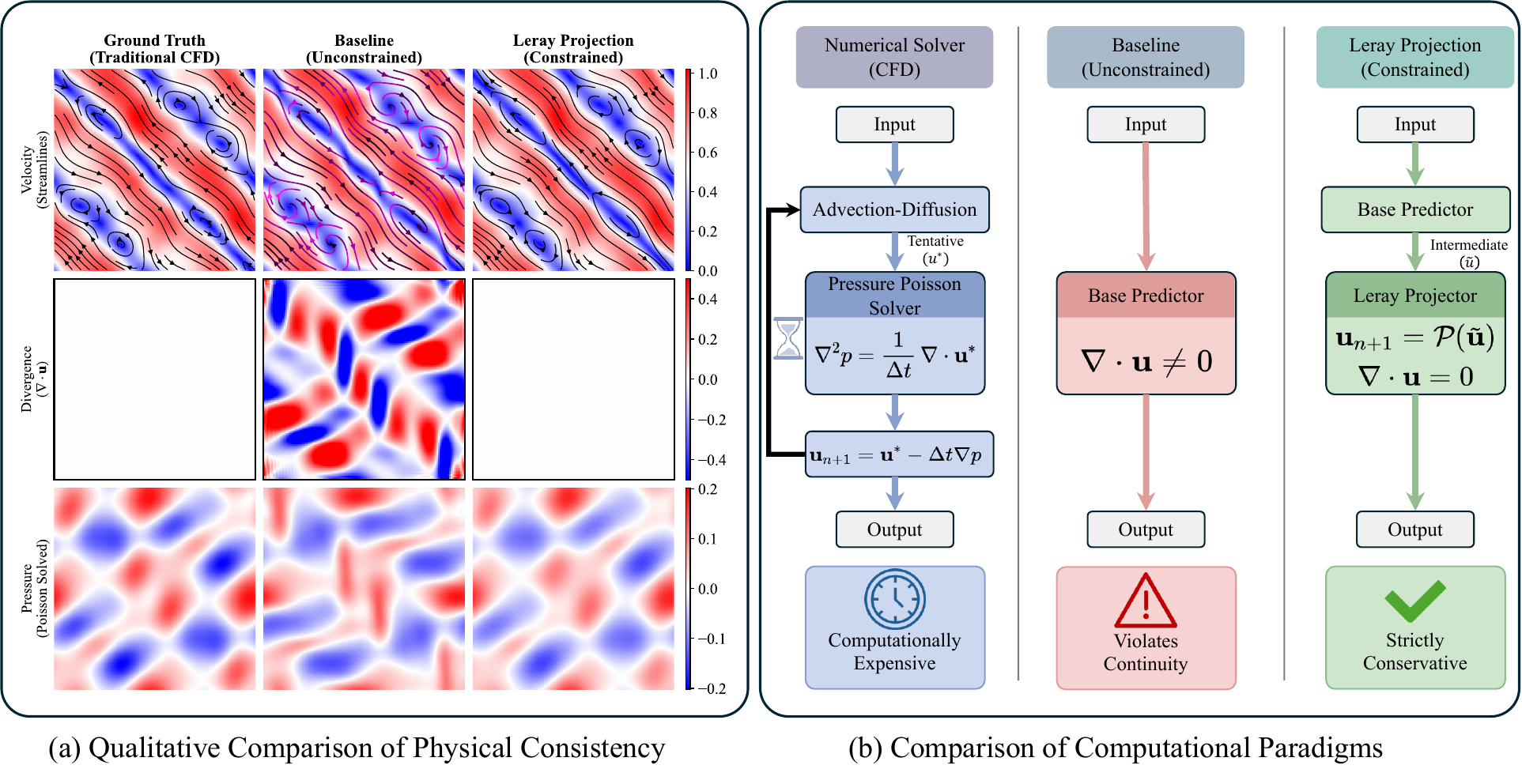} 
    \caption{
    \textbf{Motivation for the {\sc Project \& Generate} framework illustrated using numerical solvers.}
    \textbf{(a) Visual comparison.}
    Velocity streamlines (top, colored by error magnitude), divergence fields (middle), and recovered physical pressure (bottom).
    The \textit{Baseline} approach (middle), which enforces only the momentum equation, exhibits severe violations of mass conservation with nonzero divergence and noisy pressure fields.
    A \textit{Leray-projected solution} (right) enforces incompressibility via a spectral Leray projection, yielding a divergence-free velocity field ($\nabla \cdot \boldsymbol{u} = 0$) with machine precision and recovering a smooth, physically consistent pressure field.
    The result is visually indistinguishable from the \textit{Ground Truth} numerical solution (left).
    \textbf{(b) Computational paradigms.}
    Left: Classical numerical solvers (CFD) enforce incompressibility through iterative Poisson solves, which require accessing the pressure variable and are computationally expensive.
    Middle: Unconstrained end-to-end learning approaches typically violate the continuity constraint.
    Right: We leverage Leray projection to decouple dynamics from constraint enforcement, enabling hard physical constraints to be imposed via a fast, differentiable spectral operator.
    }
    \label{fig:teaser}
\end{figure*}

\section{Introduction}

Learning-based models have shown remarkable success in predicting and generating fluid flows, driven by advances in neural operators \cite{li2020fourier,lu2021learning, adrian2025data,sanz2025long} and modern generative modeling frameworks \cite{ho2020denoising,song2020score,lipmanflow,liu2023flow,chen2024probabilistic}. These methods offer substantial computational speedups over classical solvers and enable data-driven emulation of complex dynamical systems. However, most existing approaches operate in unconstrained function spaces and do not explicitly encode the physical structures that govern fluid dynamics. As a result, even models that achieve low short-term prediction error may produce velocity fields that violate fundamental physical principles, exhibiting spurious divergence, distorted energy spectra, or other nonphysical artifacts \cite{richter2022neural,hansen2023learning,krishnapriyan2021characterizing,wang2022and,shu2023physics}.

This issue is particularly pronounced for incompressible flows, where physical admissibility is determined by an exact conservation law rather than an empirical regularity. In incompressible fluid dynamics, the velocity field $\mathbf{u}$ is required
to satisfy the continuity equation
\begin{equation}
\nabla \cdot \mathbf{u} = 0,
\end{equation}
which expresses mass conservation and restricts solutions to a closed linear subspace of divergence-free vector fields. Unlike approximate physical priors, this constraint does not admit small, localized violations: any deviation necessarily alters the global flow structure. As a result, even weak divergence
errors can lead to artificial sources or sinks and qualitatively incorrect dynamics~\cite{mohan2019embedding,richter2022neural}.

From a learning perspective, however, most data-driven models are formulated and trained in unconstrained function spaces, where the divergence-free structure is not preserved by construction.
Even when training data satisfy $\nabla \cdot \mathbf{u} = 0$, approximation errors, finite model capacity, and optimization artifacts can introduce spurious divergence into predictions.
Because incompressibility defines a global subspace rather than a pointwise condition, such violations cannot be eliminated by local corrections and may accumulate over time, degrading long-term stability. 
In contrast, classical numerical solvers enforce incompressibility exactly
through projection-based schemes, typically by introducing an auxiliary pressure
variable and solving a Poisson equation at each time step. While effective, this
approach is difficult to integrate into modern learning pipelines due to its
implicit coupling and iterative nature. Motivated by this classical perspective,
we identify the Leray projection as the essential structural component that
enforces incompressibility, without explicitly introducing pressure.



A common strategy to mitigate this issue is to enforce incompressibility through soft penalties, as in physics-informed neural networks (PINNs) and related approaches \cite{raissi2019physics}. While such methods can reduce divergence in practice, they do not guarantee exact satisfaction of the constraint and often require careful tuning of penalty weights. More fundamentally, soft constraints treat incompressibility as an auxiliary optimization objective rather than as a defining property of the solution space, causing instability in challenging regimes such as high Reynolds numbers or long-term rollouts.

In this work, we adopt a different perspective and treat incompressibility as a \emph{hard structural constraint} on both model outputs and generative processes. As shown in Figure~\ref{fig:teaser}, this approach effectively eliminates the non-physical artifacts observed in baselines and recovers strict mass conservation. Our approach is based on the Leray projection arising from the Helmholtz--Hodge decomposition \cite{chorin1967numerical,chorin1968numerical}, which maps arbitrary vector fields onto the divergence-free subspace by removing the irrotational components. We integrate this projection as a modular operator that can be applied consistently across both learning and inference paradigms, without modifying the underlying model architecture. This operator-level enforcement ensures that all predicted velocity fields lie exactly in the physically admissible function space.

For regression tasks, the proposed projection guarantees incompressibility of model predictions by construction. For generative modeling, however, enforcing divergence-free structure poses additional challenges. In generative frameworks such as flow matching, not only the generated samples but also the reference measure, interpolation paths, and learned dynamics must remain compatible with the underlying physical constraint. Naively projecting samples or velocities can lead to inconsistencies between probability measures and may render the probability path ill-defined, particularly in infinite-dimensional settings where mutual absolute continuity is not guaranteed. Building on the functional flow matching perspective \cite{kerrigan2024functional}, we extend our framework to generative modeling by constructing probability paths that evolve entirely within the divergence-free subspace. To this end, we design the reference measure to be compatible with the projection operator, so that the induced probability path is well-defined on the space of incompressible velocity fields. This alignment ensures that all intermediate states and learned dynamics remain physically admissible throughout the generation.

While all models are implemented on finite-resolution grids, our methodology is
motivated by a function-space perspective, in which both regression and
generation are viewed as approximations to operators acting on spaces of
divergence-free velocity fields. From this viewpoint, the Leray projection and
the design of the reference measure are structural components of the modeling
pipeline, rather than auxiliary regularization mechanisms.
Empirical results on two-dimensional Navier--Stokes equations demonstrate that enforcing incompressibility as a hard constraint leads to improved physical fidelity, enhanced stability under long-term rollout, and superior performance.

Our main contributions are summarized as follows:
\begin{itemize}
\item We propose a unified framework that treats incompressibility as an intrinsic geometric property of the function space, enabling exact physical enforcement across both regression and generative modeling tasks.
\item We introduce a modular Leray projection and a divergence-free Gaussian reference measure based on stream functions, yielding well-defined flow matching probability paths in function space.
\item We demonstrate through extensive experiments on two-dimensional Navier--Stokes equations that the proposed approach significantly improves physical fidelity and long-term stability. 
\end{itemize}

\section{Related Work}
\paragraph{Data-Driven Neural Operators.}
Neural operators have emerged as efficient surrogates for PDE solvers by learning
mappings between infinite-dimensional function spaces.
Methods such as the Fourier Neural Operator~\cite{li2020fourier} and
DeepONet~\cite{lu2021learning} achieve substantial speedups by predicting flow
evolution in a single forward pass, and have been applied to a wide range of
fluid problems~\citep{pathak2022fourcastnet, price2023gencast, lippe2023pde, azizzadenesheli2024neural, jiang2025hierarchical, barthel2026probabilistic}.
However, deterministic operators often exhibit spectral bias, leading to
over-smoothed predictions that under-represent high-frequency turbulent
structures~\citep{jiang2023training, falasca2025probing}.
To address this limitation, recent work has explored stochastic and generative
models for fluid dynamics, including diffusion and flow-matching-based
approaches~\citep{molinaro2024generative, kohl2024turbulent}.
Despite their improved expressivity, these models typically operate in
unconstrained vector spaces and may violate fundamental physical constraints
such as incompressibility.

\paragraph{Physics-Informed Constraints in Learning.}
Physical constraints are commonly incorporated into learning frameworks through
either soft penalties or inference-time corrections.
PINNs~\cite{raissi2019physics} and related extensions penalize PDE residuals during
training, an approach that has also been adopted in generative
models~\citep{ben2024d,shu2023physics,huang2024diffusionpde,bastek2025physicsinformed}.
Despite their flexibility, such soft constraints provide no guarantee of exact
physical validity and often lead to ill-conditioned optimization
landscapes~\cite{wang2022and}.
Related approaches in differentiable physics, such as Hamiltonian~\cite{greydanus2019hamiltonian}
and Lagrangian neural networks~\cite{cranmer2020lagrangian}, encode physical
structure directly into the model to preserve invariants.
Alternatively, some methods enforce constraints only at inference time via
projection or correction steps~\citep{sojitra2025method, utkarsh2025physicsconstrained, cheng2025eci},
which, while effective at the output level, introduce additional computational
overhead and do not structurally constrain the learned model or the underlying
probability flow.

\section{Preliminaries}

\subsection{Notations and Functional Spaces}
Let $\mathbb{R}$ and $\mathbb{Z}$ denote the sets of real numbers and integers,
respectively, and define the unit torus $\mathbb{T}^2 := \mathbb{R}^2 / \mathbb{Z}^2$.
We denote by $\nabla$ and $\Delta$ the spatial gradient and Laplacian operators.
For a vector field $\mathbf{u} = (u_1, u_2)^\top$ on $\mathbb{T}^2$, we define the
scalar curl as $\nabla \times \mathbf{u} := \partial_1 u_2 - \partial_2 u_1$ and
the vector curl as $\nabla^\perp \psi := (\partial_2 \psi, -\partial_1 \psi)^\top$.

Let $L^2_{\mathrm{per}}(\mathbb{T}^2;\mathbb{R}^2)$ denote the Hilbert space of square-integrable periodic vector fields, and $H^r_{\mathrm{per}}$ the periodic Sobolev spaces of order $r \ge 0$. Unless otherwise specified, all norms and inner products are taken in $L^2$.

\subsection{Navier--Stokes Equations and Hodge Decomposition}

\paragraph{Incompressible Dynamics.}
We consider the 2D incompressible Navier--Stokes equations on the torus $\mathbb{T}^2$: 
\begin{equation}
\left\{
\begin{aligned}
    \partial_t \mathbf{u} + (\mathbf{u} \cdot \nabla)\mathbf{u}
&= \nu \Delta \mathbf{u} + \mathbf{f} - \nabla p, \\
\qquad \nabla \cdot \mathbf{u} &= 0 ,
\end{aligned}
\right.
\end{equation}
where $\mathbf{u}$ is the velocity field and $\nu > 0$ the viscosity.

\paragraph{Hodge Decomposition and the Subspace $\mathcal{V}$.}
To handle the incompressibility constraint $\nabla \cdot \mathbf{u} = 0$, we invoke the Hodge decomposition. Throughout this work, we restrict attention to zero-mean velocity fields, thereby excluding the harmonic (constant) component. Under this convention, $L^2_{\mathrm{per}}$ admits the orthogonal decomposition
\begin{equation}
L^2_{\text{per}} = \mathcal{V} \oplus \mathcal{\mathcal{V}^\perp},
\end{equation}
where $\mathcal{V}$ is the subspace of divergence-free vector fields with zero mean (solenoidal fields), and $\mathcal{V}^\perp$ is the subspace of gradient fields (irrotational fields):
\begin{equation}
    \begin{aligned}
        \mathcal{V} &:= \left\{ \mathbf{v} \in L^2_{\text{per}} \;\middle|\; \nabla \cdot \mathbf{v} = 0, \int_{\mathbb{T}^2} \mathbf{v} \, dx = 0 \right\}, \\
        \mathcal{\mathcal{V}^\perp} &:= \left\{ \nabla q \;\middle|\; q \in H^1_{\text{per}}(\mathbb{T}^2; \mathbb{R}) \right\}.
    \end{aligned}
\end{equation}
\section{Method}
\label{sec:methods}
In this section, focusing on 2D incompressible flows, we present a unified framework for learning incompressible fluid dynamics with hard divergence-free constraints.
By embedding incompressibility directly into the operator structure, all model states including intermediate dynamics, final outputs, and stochastic perturbations, are restricted to the divergence-free
subspace by construction. As illustrated in Figure~\ref{fig:method_framework}, this is realized through two complementary mechanisms:
(i) a Leray projection enforcing incompressibility, and
(ii) a divergence-free noise construction for generative modeling.

\begin{figure*}[t]
    \centering
    \includegraphics[width=\textwidth]{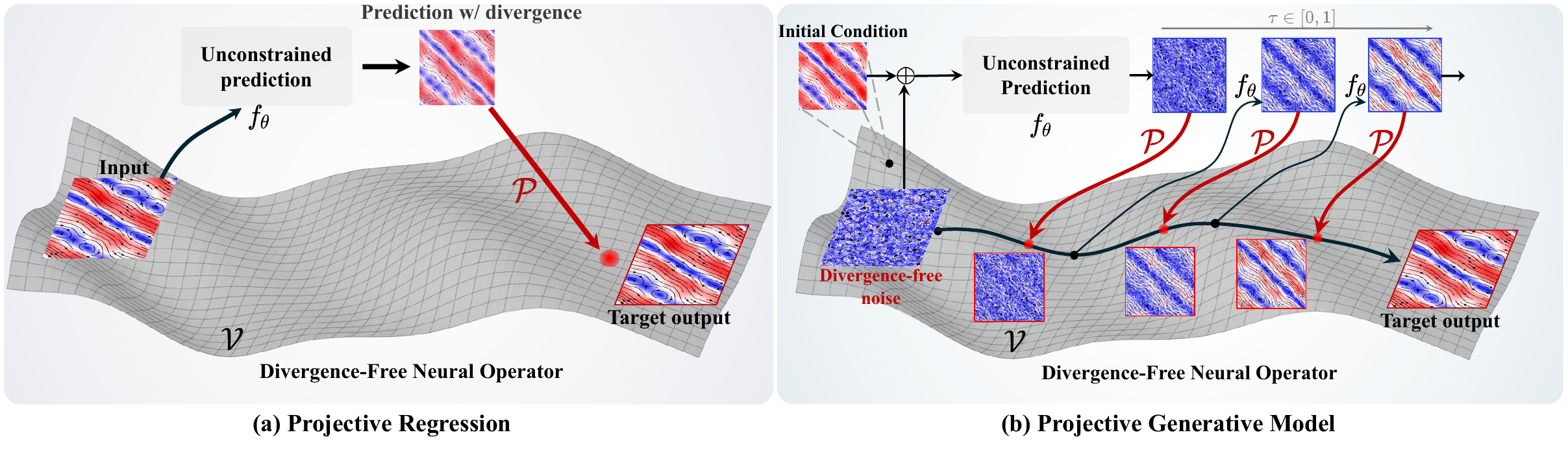} 
    \caption{\textbf{Schematic of the proposed Projective Framework.} The grey surface represents the divergence-free subspace $\mathcal{V}$ (the physical manifold).
    (a) \textbf{Projective Regression:} The framework takes an input and uses a base neural operator $f_\theta$ to produce an intermediate prediction in the ambient space (shown hovering above the manifold). This unconstrained prediction is then strictly mapped onto $\mathcal{V}$ via the spectral Leray projection $\mathcal{P}$. The composite system constitutes a \textit{Divergence-Free Neural Operator}, ensuring the final output is physically consistent.
    (b) \textbf{Projective Generative Model:} The generation process is initialized with a sample from a \textbf{divergence-free noise} distribution (bottom left) constrained to $\mathcal{V}$. During the probability flow evolution ($\tau \in [0, 1]$), the vector field predicted by $f_\theta$ is continuously projected via $\mathcal{P}$ (red arrows) back onto the manifold. This enforces hard constraints at every step of the flow matching process, ensuring the entire trajectory stays within the physically admissible subspace.}
    \label{fig:method_framework}

\end{figure*}

\subsection{Enforcing Incompressibility via Leray Projection}
\label{sec:leray}
To ensure that learned velocity fields lie strictly in the divergence-free subspace, we embed the Leray projection $\mathcal{P}$ as a structural component of the neural operator. Unlike soft-penalty methods (e.g., PINNs), this construction restricts the hypothesis space to $\mathcal{V}$, ensuring physical consistency regardless of the optimization state.

Our approach builds on the Helmholtz--Hodge decomposition. Under periodic boundary conditions, the space
$L^2_{\mathrm{per}}(\mathbb{T}^2;\mathbb{R}^2)$ admits an orthogonal decomposition into divergence-free and irrotational components.
We denote by
$\mathcal{P} : L^2_{\mathrm{per}} \to \mathcal{V}$
the associated Leray projection onto the divergence-free subspace.

For any $\mathbf{w} \in L^2_{\mathrm{per}}$, the operator $\mathcal{P}$ denotes the
$L^2$-orthogonal projection onto the divergence-free subspace $\mathcal{V}$,
which removes the gradient component and yields the closest divergence-free
field in the $L^2$ sense, equivalently characterized by
\begin{equation}
\mathcal{P}\mathbf{w}
=
\operatorname*{argmin}_{\mathbf{u} \in \mathcal{V}}
\| \mathbf{w} - \mathbf{u} \|_{L^2}.
\end{equation}
By composing the neural operator output with $\mathcal{P}$, incompressibility is enforced by construction, without auxiliary loss terms or post-hoc corrections.
\paragraph{Spectral Implementation.}
Under periodic boundary conditions, the Leray projector admits an efficient Fourier-space implementation. Let $\hat{\mathbf{w}}(\mathbf{k})$ denote the Fourier coefficient of $\mathbf{w}$ at wave vector $\mathbf{k} = (k_x, k_y) \in \mathbb{Z}^2$. The
incompressibility constraint $\nabla \cdot \mathbf{u} = 0$ is then characterized by
$\mathbf{k} \cdot \hat{\mathbf{u}}(\mathbf{k}) = 0$.
For $\mathbf{k} \neq \mathbf{0}$, the Leray projection is given by
\begin{equation}
\label{eq:leray_fourier}
\widehat{\mathcal{P}\mathbf{w}}(\mathbf{k})
=
\left(
\mathbf{I}
-
\frac{\mathbf{k} \otimes \mathbf{k}}{\|\mathbf{k}\|_2^2}
\right)
\hat{\mathbf{w}}(\mathbf{k}),
\end{equation}
where $\mathbf{I}$ denotes the $2 \times 2$ identity matrix.
The zero mode is set to zero to enforce a zero-mean condition, thereby ensuring the uniqueness of the Helmholtz–Hodge decomposition under periodic boundaries (see Appendix \ref{pseudocode} for the corresponding code implementation).

\begin{algorithm}[t]
\caption{Leray Projection in Fourier Space}
\label{alg:leray_projection}
\begin{algorithmic}[1]
\REQUIRE Vector field $\mathbf{v} \in L^2_{\mathrm{per}}(\mathbb{T}^2;\mathbb{R}^2)$
\ENSURE Divergence-free field $\mathcal{P}\mathbf{v} \in \mathcal{V}$
\STATE Compute Fourier transform: $\hat{\mathbf{v}} \gets \mathcal{F}[\mathbf{v}]$
\FOR{each wave vector $\mathbf{k} \neq \mathbf{0}$}
    \STATE $\alpha \gets (\mathbf{k} \cdot \hat{\mathbf{v}}(\mathbf{k})) / \|\mathbf{k}\|_2^2$
    \STATE $\widehat{\mathcal{P}\mathbf{v}}(\mathbf{k}) \gets \hat{\mathbf{v}}(\mathbf{k}) - \alpha \mathbf{k}$
\ENDFOR
\STATE Set $\widehat{\mathcal{P}\mathbf{v}}(\mathbf{0}) \gets \mathbf{0}$
\STATE $\mathcal{P}\mathbf{v} \gets \Re(\mathcal{F}^{-1}[\widehat{\mathcal{P}\mathbf{v}}])$
\STATE \textbf{return} $\mathcal{P}\mathbf{v}$
\end{algorithmic}
\end{algorithm}

\subsection{Learning Divergence-Free Vector Fields}

We parametrize time-dependent velocity fields $v_\theta(\cdot,t)$ using neural
operators. To enforce incompressibility, the Leray projection is composed with
the model output, ensuring that all learned dynamics evolve in $\mathcal{V}$.

\paragraph{Regression.}
For deterministic prediction tasks, a neural operator
$f_\theta$ produces an unconstrained velocity field prediction
$f_\theta(\cdot,t) \in L^2_{\mathrm{per}}(\mathbb{T}^2;\mathbb{R}^2)$.
We enforce incompressibility by applying the Leray projection pointwise in time,
\begin{equation}
\hat{\mathbf{u}}(\cdot,t) = \mathcal{P} f_\theta(\cdot,t),
\end{equation}
ensuring the resulting prediction remains divergence-free by construction.

This projection-based formulation contrasts with PINNs, where incompressibility is typically imposed via soft penalty terms in the loss function. Such approaches require balancing multiple objectives (including data fidelity, divergence penalties, and boundary conditions), often necessitating problem-specific tuning of loss weights. In contrast, composing the neural operator with the Leray projection enforces incompressibility at the
level of the hypothesis space, eliminating auxiliary divergence losses and
ensuring physical admissibility by design.

\paragraph{Generative Modeling via Flow Matching.}
Standard flow matching is formulated in the ambient $L^2$ space and does not
explicitly account for incompressibility constraints. We instead restrict the learned probability flow to the divergence-free subspace $\mathcal{V}$.

Flow Matching performs regression on a target velocity field defined along an
interpolation between data samples and reference samples. We therefore enforce
incompressibility by ensuring that this regression target lies in
$\mathcal{V}$. To this end, we parametrize an unconstrained vector field
\[
v_\theta : L^2_{\mathrm{per}} \times [0,1] \to L^2_{\mathrm{per}},
\]
and enforce incompressibility by composing it with the Leray projection.

Let $\nu$ denote the data distribution supported on $\mathcal{V}$, and let $\mu_0$ be a reference distribution. Given
$\mathbf{u}_1 \sim \nu$ and $\mathbf{u}_0 \sim \mu_0$, define the linear
interpolation
\begin{equation}
\mathbf{u}_\tau = (1 - \tau)\mathbf{u}_0 + \tau \mathbf{u}_1,
\qquad \tau \sim \mathcal{U}(0,1),
\end{equation}
with target velocity $\mathbf{v}_\tau = \mathbf{u}_1 - \mathbf{u}_0$.

When both endpoints lie in $\mathcal{V}$, $\mathbf{v}_\tau$ is divergence-free due to linearity. Under this condition, the resulting
divergence-free flow matching objective is
\begin{equation}
\label{eq:div_fm_objective}
\mathcal{L}(\theta)
=
\mathbb{E}_{\tau, \mathbf{u}_0, \mathbf{u}_1 \mid c}
\left[
\bigl\|
\mathcal{P} v_\theta(\mathbf{u}_\tau, \tau; c)
-
\mathbf{v}_\tau
\bigr\|_{L^2}^2
\right].
\end{equation}

Here \(c\) denotes conditioning information, such as observed states at earlier
time instances or physical initial conditions.
This objective enforces incompressibility at the level of the learned velocity
field. However, consistency of the induced probability flow additionally requires the reference distribution $\mu_0$ to be supported on $\mathcal{V}$, posing a nontrivial technical challenge, particularly in infinite-dimensional settings. We address this issue in the following subsection.

\subsection{Divergence-Free Gaussian Noise via Curl Pushforward}
\label{sec:df-noise}

While the Leray projection guarantees that the learned velocity field is
divergence-free, it is not sufficient to ensure a well-defined flow matching
formulation. If the reference noise distribution is not supported on
$\mathcal{V}$, the regression target generally contains compressible components, whereas the model output is constrained to be divergence-free after projection, resulting in a structurally inconsistent objective and ill-posed regression problem. Moreover, probability mass may leave the physically admissible space, and the induced continuity equation may become ill-defined in infinite-dimensional settings. To ensure consistency between the regression target, the learned dynamics, and the underlying probability measures, the reference noise must itself be supported on $\mathcal{V}$, motivating the divergence-free Gaussian construction introduced below.

\paragraph{Curl-Based Gaussian Reference Measure on $\mathcal{V}$.}
We construct a divergence-free Gaussian reference measure using the stream function formulation of 2D incompressible flows.

Let $\mathcal{X} := H^1_{\mathrm{per}}(\mathbb{T}^2; \mathbb{R})
\cap \left\{ \psi \;:\; \int_{\mathbb{T}^2} \psi = 0 \right\}$
denote the space of zero-mean stream functions, and define the operator
$\nabla^\perp : \mathcal{X} \to L^2_{\mathrm{per}}(\mathbb{T}^2;\mathbb{R}^2)$ by
$\nabla^\perp \psi = (\partial_2 \psi, -\partial_1 \psi)^\top$.

\begin{lemma}
\label{lem:curl-properties}
The operator $\nabla^\perp$ is a bounded linear map whose image is exactly the
closed subspace of divergence-free, zero-mean vector fields in $L^2$.
\end{lemma}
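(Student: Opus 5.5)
The argument runs entirely in Fourier space on $\mathbb{T}^2$, where every claim reduces to an identity or inequality among Fourier coefficients. For $\psi \in \mathcal{X}$ write $\psi = \sum_{\mathbf{k}\neq \mathbf{0}} \hat\psi(\mathbf{k}) e^{2\pi i \mathbf{k}\cdot x}$; the zero mode is absent because $\psi$ has zero mean. Then
\[
\widehat{\nabla^\perp\psi}(\mathbf{k}) = 2\pi i\,(k_2,-k_1)^\top \hat\psi(\mathbf{k}) =: 2\pi i\,\mathbf{k}^\perp \hat\psi(\mathbf{k}).
\]

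\textbf{Linearity and boundedness.} Linearity is immediate. For boundedness, Parseval gives
\[
\|\nabla^\perp\psi\|_{L^2}^2 = 4\pi^2\sum_{\mathbf{k}}\|\mathbf{k}\|^2|\hat\psi(\mathbf{k})|^2 = \|\nabla\psi\|_{L^2}^2 \le \|\psi\|_{H^1}^2 .
\]
By Poincaré on zero-mean functions, $\|\nabla\psi\|_{L^2}$ is in fact an equivalent norm on $\mathcal{X}$, so $\nabla^\perp$ is an isometry from $(\mathcal{X},\|\nabla\cdot\|_{L^2})$ into $L^2$. This isometry is what I will use for closedness.

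\textbf{Image contained in $\mathcal{V}$.} The field $\nabla^\perp\psi$ has zero mean because its $\mathbf{k}=\mathbf{0}$ coefficient is $0$. It is divergence-free in the distributional sense because $\mathbf{k}\cdot\mathbf{k}^\perp = 0$ for every $\mathbf{k}$; equivalently, $\partial_1\partial_2\psi - \partial_2\partial_1\psi = 0$ as distributions.

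\textbf{Reverse inclusion.} This is the main step. Take $\mathbf{v}\in\mathcal{V}$. Zero mean gives $\hat{\mathbf{v}}(\mathbf{0})=0$, and $\nabla\cdot\mathbf{v}=0$ gives $\mathbf{k}\cdot\hat{\mathbf{v}}(\mathbf{k})=0$ for every $\mathbf{k}$. In two dimensions, for $\mathbf{k}\neq \mathbf{0}$ the orthogonal complement of $\mathbf{k}$ in $\mathbb{C}^2$ is spanned by $\mathbf{k}^\perp$. Hence
\[
\hat{\mathbf{v}}(\mathbf{k}) = c(\mathbf{k})\,\mathbf{k}^\perp, \qquad c(\mathbf{k}) = \frac{\mathbf{k}^\perp\cdot\hat{\mathbf{v}}(\mathbf{k})}{\|\mathbf{k}\|^2}.
\]
Define $\hat\psi(\mathbf{k}) := c(\mathbf{k})/(2\pi i)$ for $\mathbf{k}\neq\mathbf{0}$ and $\hat\psi(\mathbf{0}):=0$. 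This is the Fourier-side version of $\psi = -\Delta^{-1}(\nabla\times\mathbf{v})$, up to sign and normalization. Then
\[
\sum_{\mathbf{k}}\|\mathbf{k}\|^2|\hat\psi(\mathbf{k})|^2 = \frac{1}{4\pi^2}\sum_{\mathbf{k}}\|\hat{\mathbf{v}}(\mathbf{k})\|^2 < \infty,
\]
so $\psi\in H^1$ with zero mean, i.e.\ $\psi\in\mathcal{X}$, and $\nabla^\perp\psi = \mathbf{v}$. Two points need checking:
\begin{itemize}
\item \emph{Realness of $\psi$.} The Hermitian symmetry $\hat{\mathbf{v}}(-\mathbf{k}) = \overline{\hat{\mathbf{v}}(\mathbf{k})}$ together with $(-\mathbf{k})^\perp = -\mathbf{k}^\perp$ gives $c(-\mathbf{k}) = -\overline{c(\mathbf{k})}$. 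Dividing by $2\pi i$ then yields $\hat\psi(-\mathbf{k})=\overline{\hat\psi(\mathbf{k})}$, so $\psi$ is real-valued.
\item \emph{Role of dimension two.} The one-dimensionality of $\mathbf{k}^{\perp}$-space is specific to two dimensions and is exactly what makes the stream function exist.
\end{itemize}

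\textbf{Closedness.} This follows either directly, since $\mathcal{V}$ is the kernel of the bounded maps $\mathbf{v}\mapsto \mathbf{k}\cdot\hat{\mathbf{v}}(\mathbf{k})$ and $\mathbf{v}\mapsto\hat{\mathbf{v}}(\mathbf{0})$, or from the fact that the image of an isometry from a complete space is complete.

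The only delicate points are the distributional meaning of $\nabla\cdot\mathbf{v}=0$ for a field that is merely $L^2$, which the Fourier characterization handles cleanly, and the realness check above. Everything else is routine Parseval bookkeeping.
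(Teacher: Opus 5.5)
Your proof is correct, and it takes a different route from the paper's, one that actually proves more.

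The paper stays in physical space. It checks linearity by hand and obtains $\|\nabla^\perp\psi\|_{L^2}=\|\nabla\psi\|_{L^2}\le\|\psi\|_{H^1}$. It then uses Poincar\'e to get a lower bound and hence closed range. Finally it shows $\nabla^\perp\psi$ is divergence-free (via Schwarz's theorem on smooth approximations) and has zero mean (via periodicity). That only establishes that the image is a closed subspace \emph{contained in} $\mathcal{V}$. The reverse inclusion, which the word ``exactly'' requires, is never argued there.

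Your Fourier argument supplies precisely this step: $\mathbf{k}\cdot\hat{\mathbf{v}}(\mathbf{k})=0$ forces $\hat{\mathbf{v}}(\mathbf{k})\in\operatorname{span}\{\mathbf{k}^\perp\}$, which yields an explicit real stream function with $\|\nabla\psi\|_{L^2}=\|\mathbf{v}\|_{L^2}$. Your approach also handles the distributional divergence of a merely $L^2$ field directly rather than through smooth approximation. It also makes Poincar\'e unnecessary for closedness, since $\mathcal{V}$ is an intersection of kernels of continuous functionals and you have shown the image equals it.

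The paper's physical-space route has one advantage: its boundedness and closed-range part carries over to any domain with a Poincar\'e inequality. On general domains, however, surjectivity onto divergence-free fields is a topological question, because harmonic fields can obstruct it. On $\mathbb{T}^2$ the harmonic fields are just the constants, which the zero-mean condition removes. Your coefficient-level construction is the clean way to exploit that.
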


This property allows us to lift Gaussian measures from the scalar space
$\mathcal{X}$ to the divergence-free velocity space $\mathcal{V}$.
Let $\gamma_\psi = \mathcal{N}(0, \mathcal{C}_\psi)$ be a centered Gaussian measure on $\mathcal{X}$, and define the reference measure
\[
\mu_0 := (\nabla^\perp)_{\#} \gamma_\psi .
\]
Then $\mu_0$ is a centered Gaussian measure supported on $\mathcal{V}$, with
covariance operator
$\mathcal{C}_{\mathbf{u}} = \nabla^\perp \mathcal{C}_\psi (\nabla^\perp)^*$ and
Cameron--Martin space
$\mathcal{H}_{\mu_0} = \nabla^\perp(\mathcal{H}_{\gamma_\psi})$.

\paragraph{Conditional Probability Paths.}
Conditioned on a fixed data sample $\mathbf{y} \in \mathcal{V}$, we define a
family of time-indexed random fields
$(\mathbf{u}_\tau^{\mathbf{y}})_{\tau \in [0,1]}$
via an affine interpolation in function space,
\begin{equation}
\label{eq:conditional_path_simple}
\mathbf{u}_\tau^{\mathbf{y}} = \sigma_\tau \mathbf{u}_0 + \tau \mathbf{y},
\qquad
\mathbf{u}_0 \sim \mu_0,
\end{equation}
where $\sigma_\tau = 1 - (1 - \sigma_{\min})\tau$ is a variance schedule with
$\sigma_{\min} > 0$.

For each fixed $\tau \in [0,1)$, the random field $\mathbf{u}_\tau^{\mathbf{y}}$
follows a Gaussian distribution on $\mathcal{V}$ with mean $\tau \mathbf{y}$
and covariance operator $\mathcal{C}_\tau = \sigma_\tau^2 \mathcal{C}_{\mathbf{u}}$.
In particular, all such conditional distributions share the same covariance
structure and differ only in their means.

Under the assumption that the data distribution $\nu$ is supported on the
Cameron--Martin space $\mathcal{H}_{\mu_0}$ of the reference Gaussian measure,
these conditional Gaussian distributions are mutually absolutely continuous for
$\nu$-almost every $\mathbf{y}$.

\paragraph{Existence of a Divergence-Free Flow Matching Vector Field.}
A key difficulty in infinite-dimensional flow matching is the potential
mutual singularity of probability measures, which can obstruct the
definition of a global velocity field. To ensure well-posedness, we
construct probability paths that remain absolutely continuous by design.

Let $\nu \in \mathcal{P}(\mathcal{V})$ denote the data distribution.
For each $\mathbf{y} \sim \nu$, the conditional path
$(\mu_\tau^{\mathbf{y}})_{\tau \in [0,1]}$ is defined as in
Eq.~\eqref{eq:conditional_path_simple}, with $\mathbf{u}_0 \sim \mu_0$.
The corresponding marginal path
\[
\mu_\tau := \int \mu_\tau^{\mathbf{y}} \, \mathrm{d}\nu(\mathbf{y})
\]
is therefore supported on $\mathcal{V}$.
Under the assumption that all conditional measures share the same
covariance operator and differ only by admissible Cameron--Martin shifts,
the measures $\mu_\tau^{\mathbf{y}}$ are mutually absolutely continuous.

\begin{theorem}[Existence of a Divergence-Free Flow Matching Vector Field]
\label{thm:div-free-fm}
Let $(\mu_\tau)_{\tau \in [0,1]}$ be the marginal path on $\mathcal{V}$ constructed above. Under the absolute continuity condition $\mu_\tau^{\mathbf{y}} \ll \mu_\tau$, the Radon--Nikodym derivative
\[
w_\tau(\mathbf{x}, \mathbf{y}) := \frac{\mathrm{d}\mu_\tau^{\mathbf{y}}}{\mathrm{d}\mu_\tau}(\mathbf{x})
\]
is well-defined for $\mu_\tau$-almost every $\mathbf{x}$. The vector field
\begin{equation}
\label{eq:div-free-fm}
\mathbf v_\tau(\mathbf x)
=
\int_{\mathcal V}
\mathbf v_\tau^{\mathbf y}(\mathbf x)\,
w_\tau(\mathbf x, \mathbf y)\,
\mathrm d\nu(\mathbf y)
\end{equation}
generates the path $(\mu_\tau)_{\tau \in [0,1]}$, where $\mathbf{v}_\tau^{\mathbf{y}}$ denotes the velocity field associated with the conditional path $\mu_\tau^{\mathbf{y}}$.
\end{theorem}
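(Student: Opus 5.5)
The plan is to verify the weak form of the continuity equation for the marginal path, following the functional flow matching argument of \cite{kerrigan2024functional}. The velocity field "generates" $(\mu_\tau)$ when, for every cylindrical test function $\varphi$ on $\mathcal{V}$ (smooth, depending on finitely many Fourier modes, with bounded derivatives),
\begin{equation*}
\frac{\mathrm{d}}{\mathrm{d}\tau}\int_{\mathcal V}\varphi(\mathbf x)\,\mathrm d\mu_\tau(\mathbf x)
=\int_{\mathcal V}\bigl\langle \nabla\varphi(\mathbf x),\mathbf v_\tau(\mathbf x)\bigr\rangle\,\mathrm d\mu_\tau(\mathbf x).
\end{equation*}
Here $\nabla\varphi$ is the Fréchet gradient on $\mathcal V\subset L^2_{\mathrm{per}}$.

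The first step is to establish the conditional statement. For fixed $\mathbf y\in\mathcal V$, the path is the pushforward of $\mu_0$ under the affine map $\mathbf u_0\mapsto\sigma_\tau\mathbf u_0+\tau\mathbf y$. This map sends $\mathcal V$ into $\mathcal V$, because $\mu_0$ is supported on $\mathcal V$ by \cref{lem:curl-properties} and $\mathcal V$ is a linear subspace. Differentiating $\tau\mapsto\varphi(\sigma_\tau\mathbf u_0+\tau\mathbf y)$ gives the conditional field
\begin{equation*}
\mathbf v_\tau^{\mathbf y}(\mathbf x)=\frac{\dot\sigma_\tau}{\sigma_\tau}\bigl(\mathbf x-\tau\mathbf y\bigr)+\mathbf y=\frac{\mathbf y-(1-\sigma_{\min})\mathbf x}{\sigma_\tau}.
\end{equation*}
This field lies in $\mathcal V$ for $\mathbf x\in\mathcal V$. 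Dominated convergence under $\mu_0$ then shows that $\mathbf v_\tau^{\mathbf y}$ generates $\mu_\tau^{\mathbf y}$ weakly. The required bound holds because $\nabla\varphi$ is bounded and $\mathbb E\|\mathbf u_0\|^2<\infty$ by Fernique's theorem.

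The second step is marginalization. Starting from $\int\varphi\,\mathrm d\mu_\tau=\int\!\int\varphi\,\mathrm d\mu_\tau^{\mathbf y}\,\mathrm d\nu(\mathbf y)$, exchange the $\tau$-derivative and the $\nu$-integral. This is justified if $\int\|\mathbf y\|\,\mathrm d\nu<\infty$, which I take as an implicit moment assumption, since it gives a uniform bound on the conditional derivatives. Then use the hypothesis $\mu_\tau^{\mathbf y}\ll\mu_\tau$ to rewrite $\mathrm d\mu_\tau^{\mathbf y}=w_\tau(\cdot,\mathbf y)\,\mathrm d\mu_\tau$. Applying Fubini yields
\begin{equation*}
\int\!\!\int\langle\nabla\varphi,\mathbf v_\tau^{\mathbf y}\rangle w_\tau\,\mathrm d\mu_\tau\,\mathrm d\nu=\int\Bigl\langle\nabla\varphi(\mathbf x),\int\mathbf v_\tau^{\mathbf y}(\mathbf x)w_\tau(\mathbf x,\mathbf y)\,\mathrm d\nu(\mathbf y)\Bigr\rangle\,\mathrm d\mu_\tau(\mathbf x).
\end{equation*}
The right-hand side is exactly the weak form with the field \eqref{eq:div-free-fm}. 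Well-definedness of $w_\tau$ and its joint measurability in $(\mathbf x,\mathbf y)$ come from the Radon--Nikodym theorem on the Polish space $\mathcal V$, applied via disintegration. Since each $\mathbf v_\tau^{\mathbf y}(\mathbf x)\in\mathcal V$ and $\mathcal V$ is closed, the Bochner integral $\mathbf v_\tau(\mathbf x)$ also lies in $\mathcal V$, so the generating field is divergence-free.

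I expect the main obstacle to be justifying Fubini and the Bochner integrability. This needs
\begin{equation*}
\int\!\!\int\|\mathbf v_\tau^{\mathbf y}(\mathbf x)\|\,w_\tau(\mathbf x,\mathbf y)\,\mathrm d\nu(\mathbf y)\,\mathrm d\mu_\tau(\mathbf x)<\infty.
\end{equation*}
I would rewrite this as $\int\!\int\|\mathbf v_\tau^{\mathbf y}\|\,\mathrm d\mu_\tau^{\mathbf y}\,\mathrm d\nu$ and bound it by $\sigma_\tau^{-1}\bigl(\mathbb E_\nu\|\mathbf y\|+\mathbb E\|\mathbf u_0\|+\mathbb E_\nu\|\mathbf y\|\bigr)$, which is finite for $\tau\in[0,1]$ because $\sigma_\tau\ge\sigma_{\min}>0$. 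The Cameron--Martin assumption is needed only to guarantee the absolute continuity that the theorem takes as given.
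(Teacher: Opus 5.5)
Your proposal is correct and follows essentially the same route as the paper's proof, which adapts Theorem~1 of Kerrigan et al.: write $\mu_\tau$ as the $\nu$-mixture of the conditional paths, invoke the conditional continuity equation, change measure via $w_\tau$, use Fubini to move the $\nu$-integral inside the inner product, and conclude divergence-freeness from the subspace property. You also supply details the paper leaves implicit, all of which only strengthen the argument: the explicit conditional field $\mathbf{v}_\tau^{\mathbf{y}}(\mathbf{x})=(\mathbf{y}-(1-\sigma_{\min})\mathbf{x})/\sigma_\tau$, a first-moment assumption on $\nu$ that justifies the Fubini and Bochner-integrability steps, closedness of $\mathcal{V}$ for the final step, and cylindrical test functions in place of the paper's $C_c^\infty(\mathcal{V}\times[0,1])$, which contains only the zero function when $\mathcal{V}$ is infinite-dimensional.
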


The proof is provided in Appendix~\ref{proof:fm}. In theory, the existence result relies on the assumption that the data distribution $\nu$ is supported on the Cameron--Martin space associated with the reference covariance operator. Verifying this condition directly is generally intractable, particularly in infinite-dimensional settings. In practice, however, our models operate on finite-resolution discretizations, where all probability measures are supported on finite-dimensional subspaces and the above absolute continuity condition is automatically satisfied. Consequently, following prior work on functional flow matching~\cite{kerrigan2024functional}, we do not explicitly enforce or verify this assumption in our experiments. A rigorous characterization of when such conditions hold in the infinite-dimensional limit is left for future work.

\begin{figure*}[t]
    \centering
    \begin{subfigure}[b]{0.48\textwidth}
        \centering
        \includegraphics[width=\linewidth]{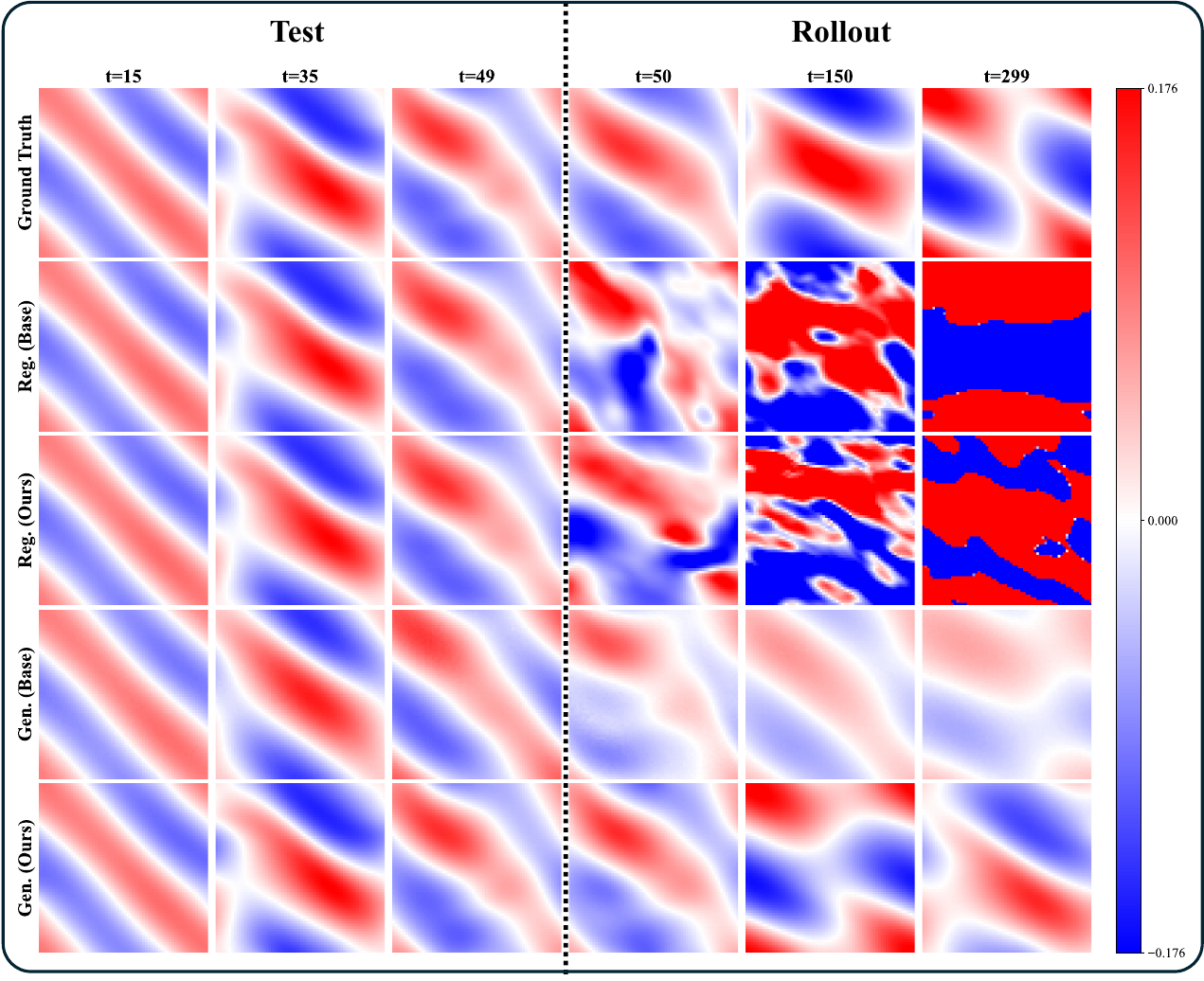}
        \caption{$u$ velocity}
        \label{fig:res_u}
    \end{subfigure}
    \hfill
    \begin{subfigure}[b]{0.48\textwidth}
        \centering
        \includegraphics[width=\linewidth]{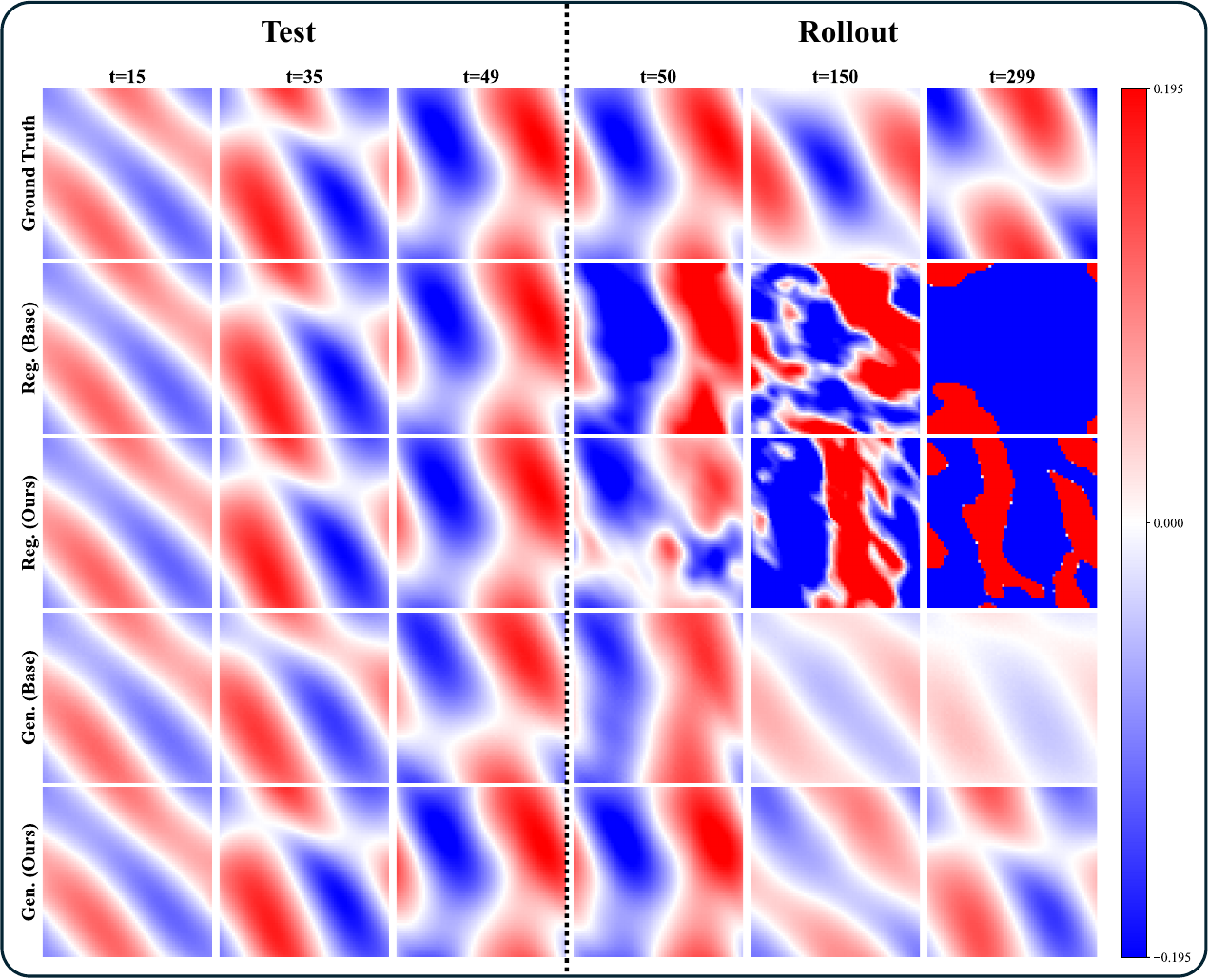}
        \caption{$v$ velocity}
        \label{fig:res_v}
    \end{subfigure}
    
    
    \begin{subfigure}[b]{0.48\textwidth}
        \centering
        \includegraphics[width=\linewidth]{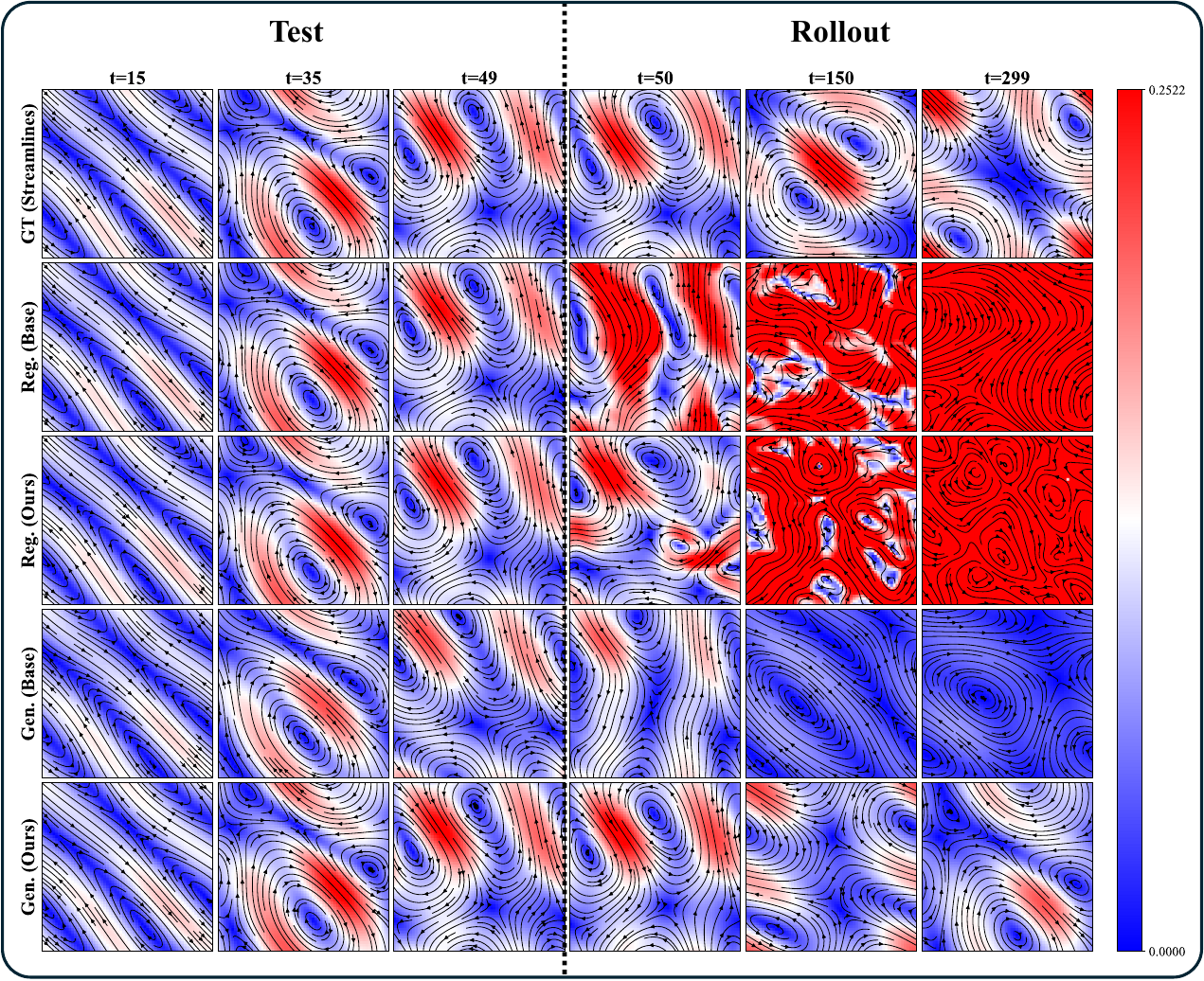}
        \caption{Streamlines}
        \label{fig:res_streamline}
    \end{subfigure}
    \hfill
    \begin{subfigure}[b]{0.48\textwidth}
        \centering
        \includegraphics[width=\linewidth]{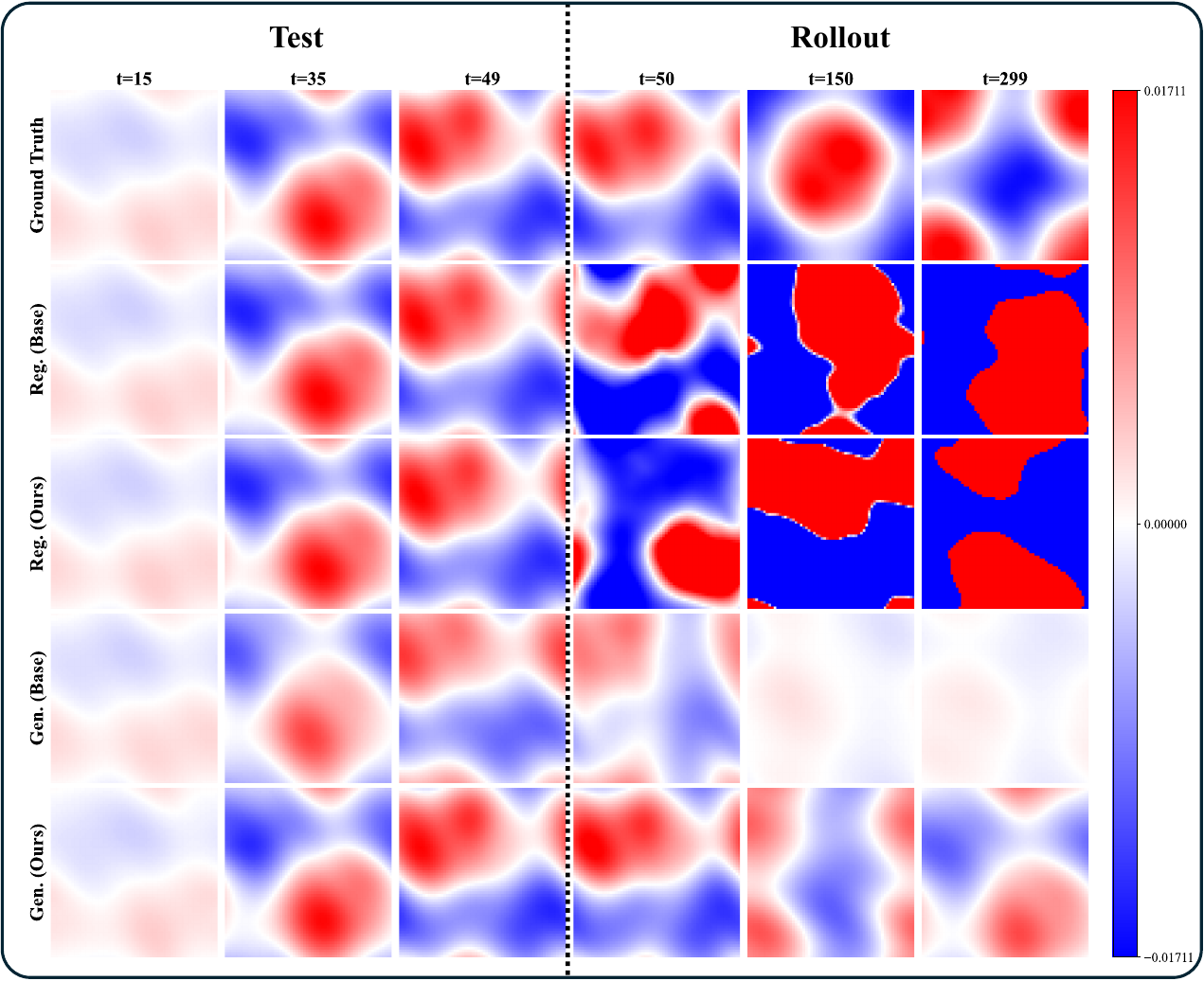}
        \caption{Reconstructed Pressure}
        \label{fig:res_p}
    \end{subfigure}
    
    \caption{\textbf{Flow Field Visualization.} 
    \textbf{Top Row:} Snapshot of velocity components ($u, v$). 
    \textbf{Bottom Row:} Streamlines and Pressure field. The pressure is reconstructed from the predicted velocity field via the pressure Poisson equation, serving as a proxy for the dynamical consistency of the flow structure.}
    \label{fig:vis_fields}
\end{figure*}

\section{Experiments}
\label{sec:experiments}

In this section, we validate the effectiveness of our proposed framework on two-dimensional incompressible turbulent flows. 
We compare our method against state-of-the-art regression and generative baselines, focusing on physical consistency (divergence, spectrum) and long-term stability.
Specifically, the evaluation centers on two primary objectives: (1) demonstrating that enforcing incompressibility as a hard constraint improves prediction accuracy and stability compared to unconstrained models; and (2) establishing that our generative framework produces diverse, high-fidelity rollouts that rigorously respect conservation laws without sacrificing sample quality.

\begin{table*}[t]
    \centering
    \renewcommand{\arraystretch}{1.25} 
    \setlength{\tabcolsep}{3pt}     
    
    \caption{\textbf{Quantitative Comparison on Re = 1000.} 
    We report \textbf{Mean $\pm$ Std} over the test set. 
    \textbf{Bold} indicates best performance. 
    \textcolor{gray}{Gray values} indicate model divergence.
    Prediction and Short-term metrics are scaled by $10^{-3}$ and $10^{-2}$ respectively.}
    \label{tab:re1000_resize}
    
    \resizebox{\textwidth}{!}{%
    \begin{tabular}{l ccc ccc ccc}
        \toprule
        \multirow{2.5}{*}{\textbf{Model}} & 
        \multicolumn{3}{c}{\textbf{Prediction} ($T_{15\text{-}49}$)} & 
        \multicolumn{3}{c}{\textbf{Short-term} ($T_{50\text{-}100}$)} & 
        \multicolumn{3}{c}{\textbf{Long-term} ($T_{101\text{-}300}$)} \\
        \cmidrule(lr){2-4} \cmidrule(lr){5-7} \cmidrule(lr){8-10}
         & U-MSE & V-MSE & Div & U-MSE & V-MSE & Div & U-MSE & V-MSE & Div \\
        \midrule
        
        \multicolumn{10}{l}{\textit{\textbf{Regression Models}}} \\
        \hspace{1em}Base 
            & $4.0 \pm 3.6$ & $5.6 \pm 5.4$ & $625.0 \pm 576.0$ 
            & $16.6 \pm 17.5$ & $17.5 \pm 15.5$ & $21.7 \pm 18.6$ 
            & \textcolor{gray}{$4.3\text{e}3 \pm 5.1\text{e}3$} & \textcolor{gray}{$4.4\text{e}3 \pm 5.2\text{e}3$} & \textcolor{gray}{$1.5\text{e}5 \pm 1.9\text{e}5$} \\
        \hspace{1em}Ours 
            & $\mathbf{3.4 \pm 3.0}$ & $\mathbf{3.2 \pm 2.9}$ & $\mathbf{0.1 \pm 0.1}$ 
            & $\mathbf{5.9 \pm 4.3}$ & $\mathbf{6.9 \pm 5.4}$ & $\mathbf{0.0 \pm 0.0}$ 
            & \textcolor{gray}{$227.0 \pm 335.0$} & \textcolor{gray}{$210.0 \pm 328.0$} & \textcolor{gray}{$118.0 \pm 160.0$} \\
            
        \addlinespace[0.6em] 
        
        \multicolumn{10}{l}{\textit{\textbf{Generative Models}}} \\
        \hspace{1em}Base 
            & $4.3 \pm 3.6$ & $4.6 \pm 4.3$ & $1.7 \pm 1.6$ 
            & $1.0 \pm 0.7$ & $1.2 \pm 1.4$ & $0.6 \pm 2.5$ 
            & \textcolor{gray}{Inf} & \textcolor{gray}{Inf} & \textcolor{gray}{Inf} \\
            
        \hspace{1em}Ours 
            & $\mathbf{1.0 \pm 1.2}$ & $\mathbf{1.1 \pm 1.3}$ & $\mathbf{4.0\text{e-}4 \pm 2.0\text{e-}4}$ 
            & $\mathbf{0.5 \pm 0.3}$ & $\mathbf{0.6 \pm 0.4}$ & $\mathbf{2.0\text{e-}5 \pm 1.0\text{e-}5}$ 
            & $\mathbf{7.0\text{e-}3 \pm 4.0\text{e-}3}$ & $\mathbf{7.0\text{e-}3 \pm 4.0\text{e-}3}$ & $\mathbf{1.6\text{e-}7 \pm 4.1\text{e-}8}$ \\
        
        \bottomrule
    \end{tabular}
    } 
\end{table*}

\begin{figure*}[t]
    \centering
    \includegraphics[width=\linewidth]{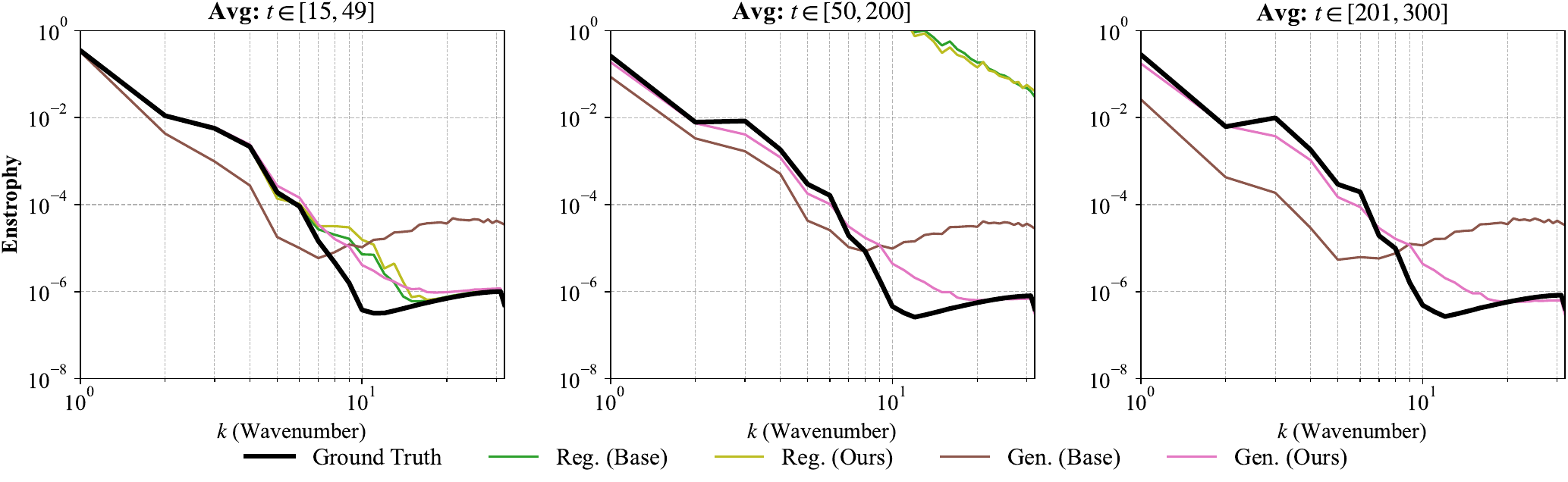}
    \caption{\textbf{Enstrophy Spectrum Analysis.} Comparison of energy spectra. Our method accurately recovers the inertial range scalings, preserving the correct energy cascade across scales.}
    \label{fig:res_spectrum}
\end{figure*}

\subsection{Experimental Setup}

\paragraph{Dataset and Implementation Details.}

Experiments focus on the 2D incompressible Navier--Stokes equations with periodic boundary conditions at $\text{Re} = 1000$. The dataset, generated via a pseudo-spectral solver, contains 10,000 training trajectories ($64 \times 64$, 50 timesteps). Following benchmarks \cite{li2020fourier, cheng2025eci}, the task involves predicting 35-frames from a 15-frame history, with long-term rollouts (300 steps) to assess stability. We evaluate performance using mean squared error (MSE) and divergence ($|\nabla \cdot \mathbf{u}|^2$). Details are provided in Appendices~\ref{app:implementation} and \ref{app:dataset}.

Benchmarking utilizes the FNO backbone across regression and generative tasks. We compare  unconstrained baselines (\textbf{Reg. / Gen. Base}) against our proposed methods (\textbf{Reg. / Gen. Ours}), which enforce physical constraints via Leray projection and curl-based priors (Figure~\ref{fig:method_framework}). Flow matching paths are sampled using the \texttt{dopri5} solver. For generative evaluation, metrics are reported as the ensemble mean and standard deviation over 20 independent realizations.

\subsection{Results and Analysis}

\paragraph{Quantitative Performance.}
Table~\ref{tab:re1000_resize} summarizes the quantitative results across prediction (training horizon), short-term rollout, and long-term extrapolation.
In the regression setting, \textbf{Reg. (Base)} achieves reasonable initial accuracy but degrades drastically during extrapolation, eventually diverging due to rapid divergence error accumulation.
In contrast, \textbf{Reg. (Ours)} maintains structural stability throughout the rollout by enforcing exact incompressibility up to discretization error, preventing the growth of non-physical modes.
A similar trend is observed in the generative comparison. \textbf{Gen. (Base)} produces plausible short-term samples but fails catastrophically in the long run (``Inf'' indicates numerical overflow).
Our method, \textbf{Gen. (Ours)}, achieves the best performance across all metrics. Notably, it outperforms even the regression models in MSE. This can be attributed to the model's ability to capture the correct statistical distribution of the flow, rather than converging to a smoothed mean prediction that dampens physically relevant fluctuations.

\paragraph{Physical Consistency and Spectral Fidelity.}
The ``Div'' column in Table~\ref{tab:re1000_resize} highlights the critical role of the hard constraint. The baselines exhibit significant divergence violations ($O(10^2)$ to $O(10^5)$), whereas our methods maintain divergence errors near machine precision (e.g., $2 \times 10^{-7}$ for Gen. Ours).
This structural advantage is further evidenced by the pressure reconstruction (Figure~\ref{fig:res_p}). Since pressure is governed by the Poisson equation $-\Delta p = \nabla \cdot (\mathbf{u} \cdot \nabla \mathbf{u})$, its accurate recovery depends on the correct computation of velocity gradients, which is only possible if the field is smooth and divergence-free. 
Furthermore, the streamline visualization (Figure~\ref{fig:res_streamline}) demonstrates that our method generates coherent vortex structures without the non-physical sources or sinks often observed in unconstrained baselines.

Figure~\ref{fig:res_spectrum} presents the enstrophy spectrum analysis. The energy cascade in turbulence follows a specific power law. Unconstrained models tend to deviate from this physical law, either by dissipating energy too quickly (spectral damping) or by accumulating spurious energy at high frequencies (spectral blocking).
Our method closely matches the ground truth spectrum, indicating that the predicted flows not only satisfy kinematic constraints but also preserve the correct multi-scale energy distribution of the turbulent system.

\paragraph{Regression vs. Generative Modeling}
We observe that \textbf{Gen. (Ours)} yields superior long-term stability ($\text{MSE} \approx 0.007$) compared to \textbf{Reg. (Ours)} ($\text{MSE} \approx 210$).
This significant gap suggests that for chaotic dynamical systems, learning the probability flow of the trajectory (Generative) is more robust than minimizing pointwise error (Regression), provided that the generative process is constrained to the physical manifold. By confining the probability path to the divergence-free subspace, our framework maximizes the benefits of generative modeling while ensuring physical validity.

\section{Conclusion}
\label{sec:conclusion}
In this work, we presented a unified framework that enforces incompressibility
as an intrinsic constraint within neural operator architectures.
By leveraging the Helmholtz--Hodge decomposition, we restrict the hypothesis
space via a differentiable spectral Leray projection, ensuring that both
deterministic predictions and generative dynamics remain divergence-free.
To achieve consistency in generative modeling, we construct a divergence-free
Gaussian reference measure using a curl-based pushforward, which preserves
subspace consistency of the induced probability flow. Our evaluation on 2D Navier–Stokes equations confirms that embedding physical constraints directly into the operator structure enhances the stability and physical fidelity of the simulations. The results demonstrate that the proposed framework achieves exact incompressibility and accurately captures the statistical characteristics of the flow. 

While focusing on periodic domains, the principles established here provide a foundation for future extensions to complex boundary conditions and 3D fluid systems.

\section*{Impact Statement}
This paper presents work whose goal is to advance the field of Machine
Learning. There are many potential societal consequences of our work, none
which we feel must be specifically highlighted here.

\nocite{langley00}

\bibliography{example_paper}
\bibliographystyle{icml2026}

\newpage
\appendix
\onecolumn

\section{Incompressibility and Projection-Based Methods}
\label{app:incompressibility_methods}

In this section, we elaborate on the motivation behind adopting the Spectral Leray Projection. We contrast our approach with classical pressure-correction schemes and soft-penalty methods, specifically addressing the computational bottlenecks of traditional solvers within deep learning graphs and the theoretical inconsistencies they introduce in generative modeling.

\textbf{Classical Pressure-Correction and Its Limitations in Deep Learning.} 
In classical Computational Fluid Dynamics (CFD), particularly in fractional-step methods, incompressibility ($\nabla \cdot \mathbf{u} = 0$) is strictly enforced by solving a pressure Poisson equation:
\begin{equation}
\nabla^2 p = \frac{\nabla \cdot \mathbf{u}^*}{\Delta t},
\end{equation}
where $\mathbf{u}^*$ is a tentative velocity field. While this method effectively handles complex boundaries \cite{chorin1968numerical}, it relies on iterative linear solvers (e.g., Conjugate Gradient) that are computationally intensive. Integrating such iterative solvers directly into a neural network training loop presents fundamental optimization challenges. Training a neural operator requires differentiating through the entire model; embedding an iterative solver necessitates implicit differentiation or unrolling solver steps during backpropagation. This process is not only computationally expensive but also numerically unstable, often leading to vanishing or exploding gradients that hinder convergence.

\textbf{Manifold Consistency in Generative Modeling.}
Beyond computational costs, relying on post-hoc Poisson corrections introduces theoretical inconsistencies in generative frameworks like Flow Matching. The goal of these models is to learn a vector field that generates a continuous probability path from noise to data. If a model operates in an unconstrained space and applies a Poisson correction only at discrete steps, the underlying probability flow becomes ill-defined. The vector field attempting to bridge the Gaussian noise and the incompressible data would continuously drift off the divergence-free subspace $\mathcal{V}$, resulting in a ``broken" generative process where the learned dynamics contradict physical constraints \cite{kerrigan2024functional, lipmanflow}.

\textbf{Methodological Advantages.}
To address these issues, our framework integrates the \textbf{Spectral Leray Projection} directly into the architecture. This approach offers distinct advantages:
\begin{itemize}
    \item \textit{Exact Differentiability and Stability:} Unlike iterative solvers, the Spectral Leray Projection is a closed-form, linear operator based on Fast Fourier Transforms (FFT). It enables exact, stable, and efficient backpropagation, allowing the network to learn the projection mechanism as an intrinsic geometric constraint.
    
    \item \textit{Decoupling from Pressure:} This approach eliminates the need to explicitly model the pressure field $p$, simplifying the learning task by focusing purely on the kinematic velocity field.
    
    \item \textit{Hard Constraints with Machine Precision:} Unlike soft-penalty methods (e.g., PINNs \cite{raissi2019physics}) that treat divergence as an optimization objective, which often results in ``spectral creep'' and high-frequency errors \cite{jiang2023training}, our method enforces $\nabla \cdot \mathbf{u} = 0$ to machine precision by construction.
    
    \item \textit{Generalizability:} While implemented spectrally for efficiency on periodic domains, the principle of differentiable projection is generalizable to complex geometries via other differentiable bases or sparse solvers in future work.
\end{itemize}

\begin{table}[t]
\centering
\caption{\textbf{Summary of notation.}
We distinguish between data samples, probability measures, and vector fields
used in divergence-free flow matching.}
\label{tab:notation}
\begin{tabular}{lll}
\toprule
\textbf{Symbol} & \textbf{Category} & \textbf{Meaning} \\
\midrule
$\mathbf y \in \mathcal V$
& data sample
& divergence-free velocity field from the dataset \\

$\nu$
& data distribution
& probability distribution of data samples on $\mathcal V$ \\

$\mu_0$
& reference distribution
& divergence-free Gaussian noise on $\mathcal V$ \\

$\mathbf u_0 \sim \mu_0$
& noise sample
& reference divergence-free random field \\

$\mathbf u_\tau^{\mathbf y}$
& interpolated sample
& noisy interpolation between $\mathbf u_0$ and data sample $\mathbf y$ \\

$\mu_\tau^{\mathbf y}$
& conditional distribution
& distribution induced by $\mathbf u_\tau^{\mathbf y}$ \\

$\mu_\tau$
& marginal distribution
& mixture of $\mu_\tau^{\mathbf y}$ over $\mathbf y \sim \nu$ \\

$\mathbf v_\tau^{\mathbf y}$
& conditional vector field
& velocity field generating the conditional path \\

$\mathbf v_\tau$
& vector field
& velocity field generating the marginal probability path \\

$v_\theta(\cdot,\tau)$
& neural operator
& learned (unconstrained) velocity field \\

$\mathcal P$
& projection operator
& Leray projection onto the divergence-free subspace $\mathcal V$ \\
\bottomrule
\end{tabular}
\end{table}

\section{Gaussian Measures in Hilbert Spaces}

 The push-forward measure is characterized by:$$(F_{\#}\gamma)(B) = \gamma\left(\{x \in \mathcal{H} : F(x) \in B\}\right)$$for every Borel set $B \in \mathcal{B}(\mathbb{R})$.

The existence and uniqueness of $m$ and $C$ are guaranteed by the Riesz Representation Theorem, which ensures that the bounded linear and bilinear forms appearing on the right-hand sides of the above equations correspond to unique elements and operators in $\mathcal{H}$.

Specifically, the mean $m$ satisfies
\begin{equation}
\langle m, h \rangle = \int_{\mathcal{H}} \langle x, h \rangle \, \gamma(\mathrm{d}x), \quad \forall h \in \mathcal{H},
\end{equation}
and the covariance operator $C$, which is symmetric, non-negative, and trace-class, is defined by
\begin{equation}
\langle Ch, k \rangle = \int_{\mathcal{H}} \langle x-m, h \rangle \langle x-m, k \rangle \, \gamma(\mathrm{d}x), \quad \forall h, k \in \mathcal{H}.
\end{equation}

The regularity of samples from $\gamma$ is governed by the \textbf{Cameron--Martin space} $\mathcal{H}_\gamma$.
Since $C$ is self-adjoint, non-negative and trace-class, it admits a unique square root $C^{1/2}$.
We define
$$
\mathcal{H}_\gamma := \operatorname{Im}(C^{1/2}),
$$
equipped with the inner product
$$\langle h_1, h_2 \rangle_{\mathcal{H}_\gamma}=
\langle C^{-1/2} h_1, C^{-1/2} h_2 \rangle_{\mathcal{H}},
$$
where $C^{-1/2}$ denotes the pseudo-inverse of $C^{1/2}$.

\section{Proof of Lemma \ref{lem:curl-properties}}
\begin{lemma}[Properties of the Curl Push-forward]
The operator $\nabla^\perp$ is a bounded linear map whose image is exactly the
closed subspace of divergence-free, zero-mean vector fields in $L^2$.
\end{lemma}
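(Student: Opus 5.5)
I would work entirely in Fourier space on $\mathbb{T}^2$, where every object in the statement is diagonal. For $\psi\in\mathcal{X}$ write $\psi=\sum_{\mathbf{k}\neq\mathbf{0}}\hat\psi(\mathbf{k})e^{2\pi i\mathbf{k}\cdot x}$; the zero mode vanishes because $\psi$ has zero mean. Then $\widehat{\nabla^\perp\psi}(\mathbf{k})=2\pi i\,(k_2,-k_1)^\top\hat\psi(\mathbf{k})=2\pi i\,\mathbf{k}^\perp\hat\psi(\mathbf{k})$. Linearity is immediate. For boundedness, Parseval gives
\[
\|\nabla^\perp\psi\|_{L^2}^2=\sum_{\mathbf{k}\neq\mathbf{0}}4\pi^2\|\mathbf{k}\|_2^2|\hat\psi(\mathbf{k})|^2=\|\nabla\psi\|_{L^2}^2\le\|\psi\|_{H^1}^2 .
\]
By Poincar\'e on zero-mean functions, this is in fact equivalent to $\|\psi\|_{H^1}^2$, so $\nabla^\perp$ is an isometry from $\mathcal{X}$ with the seminorm $\|\nabla\cdot\|$, which is an equivalent norm.

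Next I would show the image lies in $\mathcal{V}$. The divergence of $\nabla^\perp\psi$ has Fourier coefficient $2\pi i\,\mathbf{k}\cdot(2\pi i\,\mathbf{k}^\perp\hat\psi)=0$, since $\mathbf{k}\cdot\mathbf{k}^\perp=0$; this holds in the distributional sense. The mean is the zero mode, which carries the factor $\mathbf{k}=\mathbf{0}$ and so vanishes. Hence $\nabla^\perp(\mathcal{X})\subseteq\mathcal{V}$.

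The main step is surjectivity. Take $\mathbf{v}\in\mathcal{V}$. Then $\hat{\mathbf{v}}(\mathbf{0})=\mathbf{0}$, and $\mathbf{k}\cdot\hat{\mathbf{v}}(\mathbf{k})=0$ for every $\mathbf{k}\neq\mathbf{0}$. In $\mathbb{C}^2$ the orthogonal complement of $\mathbf{k}$ under the bilinear pairing is spanned by $\mathbf{k}^\perp$, so $\hat{\mathbf{v}}(\mathbf{k})=c(\mathbf{k})\mathbf{k}^\perp$ with $c(\mathbf{k})=\mathbf{k}^\perp\cdot\hat{\mathbf{v}}(\mathbf{k})/\|\mathbf{k}\|_2^2$. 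I would define $\hat\psi(\mathbf{k}):=c(\mathbf{k})/(2\pi i)$ for $\mathbf{k}\neq\mathbf{0}$ and $\hat\psi(\mathbf{0})=0$. This is the stream function $\psi=-\Delta^{-1}(\nabla\times\mathbf{v})$.

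Three things then need checking. Reality of $\psi$ follows from $\hat{\mathbf{v}}(-\mathbf{k})=\overline{\hat{\mathbf{v}}(\mathbf{k})}$, since $c(-\mathbf{k})=\overline{c(\mathbf{k})}$ and the $1/(2\pi i)$ factor flips sign under conjugation. Membership in $H^1$ follows from $\sum\|\mathbf{k}\|^2|\hat\psi|^2\lesssim\sum|\hat{\mathbf{v}}|^2<\infty$, so $\psi\in\mathcal{X}$. Finally $\nabla^\perp\psi=\mathbf{v}$ by construction. This is the step I expect to need the most care: the key point is that the single scalar constraint $\mathbf{k}\cdot\hat{\mathbf{v}}=0$ forces a one-dimensional fiber, which is special to two dimensions, and the $H^1$ bound is what makes the inverse bounded.

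Closedness then comes for free. The set $\mathcal{V}$ is the intersection of the kernels of the continuous functionals $\mathbf{v}\mapsto\mathbf{k}\cdot\hat{\mathbf{v}}(\mathbf{k})$ and $\mathbf{v}\mapsto\hat{\mathbf{v}}(\mathbf{0})$, so it is closed. Equivalently, $\nabla^\perp$ is bounded below by Poincar\'e, so its range is closed. Either argument finishes the proof that the image is exactly the closed subspace $\mathcal{V}$.
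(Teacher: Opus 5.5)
Your argument is correct, and it takes a different route from the paper's. It is also more complete. The paper works in physical space. It checks linearity by hand and gets boundedness from $\|\nabla^\perp\psi\|_{L^2}=\|\nabla\psi\|_{L^2}\le\|\psi\|_{H^1}$. It deduces that the range is closed because Poincar\'e makes $\nabla^\perp$ bounded below on the complete space $\mathcal{X}$. It then shows that $\nabla^\perp\psi$ is divergence-free and zero-mean, via Schwarz's theorem on smooth approximations and the vanishing of integrals of derivatives on $\mathbb{T}^2$. It never proves the reverse inclusion, i.e.\ that every $\mathbf{v}\in\mathcal{V}$ equals $\nabla^\perp\psi$ for some $\psi\in\mathcal{X}$. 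So it only shows that the image is a closed subspace \emph{of} $\mathcal{V}$, not that it is all of $\mathcal{V}$. Your Fourier diagonalization supplies exactly this missing step. The one-dimensionality of $\{\mathbf{z}\in\mathbb{C}^2:\mathbf{k}\cdot\mathbf{z}=0\}$ gives the explicit stream function $\psi=-\Delta^{-1}(\nabla\times\mathbf{v})$, and the identity $\|\nabla\psi\|_{L^2}=\|\mathbf{v}\|_{L^2}$ places it in $\mathcal{X}$. Working with Fourier coefficients also treats the distributional divergence directly, with no density argument. Closedness becomes automatic, since the image is then $\mathcal{V}$, an intersection of kernels of continuous functionals. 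The paper's route is more elementary, but it only proves one inclusion; yours proves the full statement.

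One small slip concerns your reality check. Since $(-\mathbf{k})^\perp=-\mathbf{k}^\perp$, you get $c(-\mathbf{k})=-\overline{c(\mathbf{k})}$, not $\overline{c(\mathbf{k})}$. This sign, combined with the sign change of $1/(2\pi i)$ under conjugation, is what gives $\hat\psi(-\mathbf{k})=\overline{\hat\psi(\mathbf{k})}$. Taken literally, the two facts you state would make $\psi$ purely imaginary. A cleaner justification is that $\psi=-\Delta^{-1}(\nabla\times\mathbf{v})$ is a real operator applied to a real field.
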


\begin{proof}
We verify the linearity, boundedness, and the properties of the image sequentially.\paragraph{1. Linearity.}The linearity of $\nabla^\perp$ follows directly from the linearity of the weak differentiation operators. We verify this by checking additivity and homogeneity explicitly.

\textit{Additivity:} Let $\psi_1, \psi_2 \in \mathcal{X}$ be arbitrary stream functions. By the distributive property of partial derivatives, we have:
\begin{equation}
\begin{aligned}
\nabla^\perp (\psi_1 + \psi_2)&= \begin{pmatrix} \partial_2 (\psi_1 + \psi_2) \ -\partial_1 (\psi_1 + \psi_2) \end{pmatrix}= \begin{pmatrix} \partial_2 \psi_1 + \partial_2 \psi_2 \ -\partial_1 \psi_1 - \partial_1 \psi_2 \end{pmatrix} \\
&= \begin{pmatrix} \partial_2 \psi_1 \ -\partial_1 \psi_1 \end{pmatrix} + \begin{pmatrix} \partial_2 \psi_2 \ -\partial_1 \psi_2 \end{pmatrix}= \nabla^\perp \psi_1 + \nabla^\perp \psi_2.
\end{aligned}
\end{equation}

\textit{Homogeneity:} Let $c \in \mathbb{R}$ be an arbitrary scalar and $\psi \in \mathcal{X}$. By the commutativity of differentiation with scalar multiplication:
\begin{equation}
\begin{aligned}
\nabla^\perp (c \psi)&= \begin{pmatrix} \partial_2 (c \psi) \ -\partial_1 (c \psi) \end{pmatrix}= \begin{pmatrix} c \partial_2 \psi \ -c \partial_1 \psi \end{pmatrix} \&= c \begin{pmatrix} \partial_2 \psi \ -\partial_1 \psi \end{pmatrix}= c \nabla^\perp \psi.
\end{aligned}
\end{equation}
Thus, $\nabla^\perp$ is a linear operator.
\paragraph{2. Boundedness and Closed Image.}
For any $\psi \in \mathcal{X} \subset H^1_{\mathrm{per}}$, the $L^2$-norm of the vector field is:
\begin{equation}
\|\nabla^\perp \psi\|_{L^2}^2 = \|\partial_2 \psi\|_{L^2}^2 + \|-\partial_1 \psi\|_{L^2}^2 = \|\nabla \psi\|_{L^2}^2 \leq \|\psi\|_{H^1}^2.
\end{equation}
This implies that $\nabla^\perp$ is a bounded operator with operator norm at most $1$. Furthermore, since $\mathcal{X}$ consists of zero-mean functions, the Poincaré inequality holds: $\|\psi\|_{L^2} \leq C \|\nabla \psi\|_{L^2}$. Consequently, the map is an isomorphism onto its image (specifically, it is bounded from below), which implies that the image is a closed subspace in $L^2_{\mathrm{per}}(\mathbb{T}^2; \mathbb{R}^2)$.
\paragraph{3. Divergence-Free and Zero-Mean Property.}
The resulting vector field $\mathbf{u} = (u_1, u_2)^\top = (\partial_2 \psi, -\partial_1 \psi)^\top$ is divergence-free in the distributional sense. For smooth approximations, Schwarz's theorem gives:
\begin{equation}
\nabla \cdot \mathbf{u} = \partial_1 u_1 + \partial_2 u_2 = \partial_1(\partial_2 \psi) + \partial_2(-\partial_1 \psi) = \partial_{12}\psi - \partial_{21}\psi = 0.
\end{equation}
Additionally, by the periodicity of the domain $\mathbb{T}^2$, the integral of any partial derivative of a periodic function vanishes (e.g., $\int_{\mathbb{T}^2} \partial_2 \psi = 0$). Thus, the vector field has zero mean.
\end{proof}

\section{Proof of Theorem \ref{thm:div-free-fm}}
\label{proof:fm}
\begin{theorem}[Existence of a Divergence-Free Flow Matching Vector Field]
\label{thm:div-free-fm}
Let $(\mu_\tau)_{\tau \in [0,1]}$ be the marginal path on $\mathcal{V}$ constructed above. Under the absolute continuity condition $\mu_\tau^{\mathbf{y}} \ll \mu_\tau$, the Radon--Nikodym derivative
\[
w_\tau(\mathbf{x}, \mathbf{y}) := \frac{\mathrm{d}\mu_\tau^{\mathbf{y}}}{\mathrm{d}\mu_\tau}(\mathbf{x})
\]
is well-defined for $\mu_\tau$-almost every $\mathbf{x}$. The vector field
\begin{equation}
\label{eq:div-free-fm}
\mathbf v_\tau(\mathbf x)
=
\int_{\mathcal V}
\mathbf v_\tau^{\mathbf y}(\mathbf x)\,
w_\tau(\mathbf x, \mathbf y)\,
\mathrm d\nu(\mathbf y)
\end{equation}
generates the path $(\mu_\tau)_{\tau \in [0,1]}$, where $\mathbf{v}_\tau^{\mathbf{y}}$ denotes the velocity field associated with the conditional path $\mu_\tau^{\mathbf{y}}$.
\end{theorem}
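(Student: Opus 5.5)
The plan is to verify the weak continuity equation for $(\mu_\tau)$, following the marginalization argument of functional flow matching \cite{kerrigan2024functional}. We say that a time-dependent field $\mathbf v_\tau$ generates $(\mu_\tau)$ if
\begin{equation*}
\frac{\mathrm d}{\mathrm d\tau}\int_{\mathcal V}\varphi(\mathbf x)\,\mathrm d\mu_\tau(\mathbf x)=\int_{\mathcal V}\langle D\varphi(\mathbf x),\mathbf v_\tau(\mathbf x)\rangle\,\mathrm d\mu_\tau(\mathbf x)
\end{equation*}
holds for all cylindrical test functions $\varphi(\mathbf x)=g(\langle \mathbf x,e_1\rangle,\dots,\langle \mathbf x,e_n\rangle)$, where $g\in C_c^\infty(\mathbb R^n)$ and $e_i\in\mathcal V$. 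This is the distributional continuity equation $\partial_\tau\mu_\tau+\nabla\cdot(\mathbf v_\tau\mu_\tau)=0$ on $\mathcal V$.

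\textbf{Step 1: conditional paths.} For fixed $\mathbf y$, the conditional path is the pushforward of $\mu_0$ under the affine map $\mathbf u_0\mapsto\sigma_\tau\mathbf u_0+\tau\mathbf y$. Its generating field is obtained by differentiating the flow and inverting:
\begin{equation*}
\mathbf v_\tau^{\mathbf y}(\mathbf x)=\dot\sigma_\tau\sigma_\tau^{-1}(\mathbf x-\tau\mathbf y)+\mathbf y,
\end{equation*}
which is well defined since $\sigma_\tau\ge\sigma_{\min}>0$.
\begin{itemize}
\item Since $\mathbf x,\mathbf y\in\mathcal V$ and $\mathcal V$ is a linear subspace (Lemma~\ref{lem:curl-properties} identifies it as a closed subspace), $\mathbf v_\tau^{\mathbf y}$ maps $\mathcal V$ into itself.
\item By the chain rule, $\frac{\mathrm d}{\mathrm d\tau}\mathbb E[\varphi(\mathbf u_\tau^{\mathbf y})]=\mathbb E[\langle D\varphi(\mathbf u_\tau^{\mathbf y}),\mathbf v_\tau^{\mathbf y}(\mathbf u_\tau^{\mathbf y})\rangle]$. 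So the conditional continuity equation holds for each $\mathbf y$.
\end{itemize}

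\textbf{Step 2: marginalize.} By definition, $\int\varphi\,\mathrm d\mu_\tau=\int\!\int\varphi\,\mathrm d\mu_\tau^{\mathbf y}\,\mathrm d\nu(\mathbf y)$.
\begin{itemize}
\item Differentiate under the $\nu$-integral. This is justified by dominated convergence: $D\varphi$ is bounded, and $|\mathbf v_\tau^{\mathbf y}(\mathbf u_\tau^{\mathbf y})|\le C(|\mathbf u_0|+|\mathbf y|)$. Gaussian moments of $\mu_0$ are finite, so it suffices to assume $\nu$ has a finite first moment.
\item Applying Step 1 gives $\int\!\int\langle D\varphi,\mathbf v_\tau^{\mathbf y}\rangle\,\mathrm d\mu_\tau^{\mathbf y}\,\mathrm d\nu$.
\item Now use the absolute continuity hypothesis to write $\mathrm d\mu_\tau^{\mathbf y}=w_\tau(\cdot,\mathbf y)\,\mathrm d\mu_\tau$. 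The Radon--Nikodym theorem applies because all measures involved are finite on the Polish space $\mathcal V$. A jointly measurable version of $w_\tau$ exists by disintegrating $\mu_\tau^{\mathbf y}(\mathrm d\mathbf x)\,\nu(\mathrm d\mathbf y)$ with respect to its $\mathbf x$-marginal $\mu_\tau$.
\item Fubini then swaps the integrals and yields $\int\langle D\varphi,\mathbf v_\tau\rangle\,\mathrm d\mu_\tau$, with $\mathbf v_\tau$ defined by \eqref{eq:div-free-fm}.
\end{itemize}

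\textbf{Step 3: $\mathbf v_\tau$ is divergence-free.} The vector $\mathbf v_\tau(\mathbf x)$ is a $\nu$-integral, with nonnegative weights, of elements of the closed subspace $\mathcal V$. A Bochner integral of $\mathcal V$-valued functions stays in $\mathcal V$, because it commutes with the bounded projection $\mathcal P$. Therefore $\mathbf v_\tau(\mathbf x)\in\mathcal V$ for $\mu_\tau$-a.e.\ $\mathbf x$.

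\textbf{Main obstacle.} The hardest part is the integrability needed for Fubini and Bochner integrability, namely
\begin{equation*}
\int\!\int|\mathbf v_\tau^{\mathbf y}(\mathbf x)|\,w_\tau(\mathbf x,\mathbf y)\,\mathrm d\mu_\tau(\mathbf x)\,\mathrm d\nu(\mathbf y)=\mathbb E|\mathbf v_\tau^{\mathbf y}(\mathbf u_\tau^{\mathbf y})|<\infty.
\end{equation*}
I would bound this by the affine estimate above using Fernique's theorem for $\mu_0$ and a moment assumption on $\nu$. This bound also shows that $\mathbf v_\tau\in L^1(\mu_\tau;\mathcal V)$, so the integral in \eqref{eq:div-free-fm} is well defined.
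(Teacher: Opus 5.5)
Your proposal is correct and follows the same route as the paper's proof: the functional flow matching marginalization argument (conditional continuity equation, change of measure via $w_\tau$, Fubini, and linearity/closedness of $\mathcal V$ for the divergence-free conclusion). Your version is more careful than the paper's in three places. You derive the conditional field explicitly. You use cylindrical test functions, whereas the paper's class $C_c^\infty(\mathcal V\times[0,1])$ contains only the zero function when $\mathcal V$ is infinite-dimensional. And you state the integrability hypothesis (e.g.\ a finite first moment of $\nu$) that the paper's Fubini and Bochner steps tacitly require.
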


\begin{proof}

We follow the argument of Theorem~1 in \cite{kerrigan2024functional}.

Let $\varphi \in C_c^\infty(\mathcal{V} \times [0,1])$ be an arbitrary smooth test
function. We aim to show that $(\mu_\tau)_{\tau \in [0,1]}$ and the vector field
$\mathbf{v}_\tau$ defined in \eqref{eq:div-free-fm} satisfy the continuity equation
in weak form, namely
\begin{equation}
\label{eq:ce-weak}
\int_0^1 \int_{\mathcal{V}} \partial_\tau \varphi(\mathbf{x},\tau)\,
\mathrm{d}\mu_\tau(\mathbf{x})\,\mathrm{d}\tau
=
- \int_0^1 \int_{\mathcal{V}}
\langle \mathbf{v}_\tau(\mathbf{x}), \nabla \varphi(\mathbf{x},\tau) \rangle\,
\mathrm{d}\mu_\tau(\mathbf{x})\,\mathrm{d}\tau .
\end{equation}

By definition of the marginal measure $\mu_\tau = \int \mu_\tau^{\mathbf{y}} \,
\mathrm{d}\nu(\mathbf{y})$ and by Fubini--Tonelli, the left-hand side can be
written as
\begin{equation}
\label{eq:l.h.s}
\int_0^1 \int_{\mathcal{V}} \partial_t \varphi(\mathbf{x},\tau)\,
\mathrm{d}\mu_t(\mathbf{x})\,\mathrm{d}\tau
=
\int_0^1 \int_{\mathcal{V}} \int_{\mathcal{V}}
\partial_t \varphi(\mathbf{x},\tau)\,
\mathrm{d}\mu_t^{\mathbf{y}}(\mathbf{x})\,\mathrm{d}\nu(\mathbf{y})\,\mathrm{d}\tau .
\end{equation}

Since, for $\nu$-almost every $\mathbf{y}$, the conditional path
$(\mu_\tau^{\mathbf{y}})_{\tau \in [0,1]}$ is generated by the velocity field
$\mathbf{v}_\tau^{\mathbf{y}}$ through the continuity equation, we obtain
\begin{equation}
\label{eq:conditional-ce}
=
- \int_0^1 \int_{\mathcal{V}} \int_{\mathcal{V}}
\langle \mathbf{v}_\tau^{\mathbf{y}}(\mathbf{x}),
\nabla \varphi(\mathbf{x},\tau) \rangle\,
\mathrm{d}\mu_\tau^{\mathbf{y}}(\mathbf{x})\,
\mathrm{d}\nu(\mathbf{y})\,\mathrm{d}\tau .
\end{equation}

Under the absolute continuity assumption $\mu_\tau^{\mathbf{y}} \ll \mu_t$, we may
change measure and write
\begin{equation}
\label{eq:change-measure}
=
- \int_0^1 \int_{\mathcal{V}} \int_{\mathcal{V}}
\langle \mathbf{v}_\tau^{\mathbf{y}}(\mathbf{x}),
\nabla \varphi(\mathbf{x},\tau) \rangle\,
w_t(\mathbf{x}, \mathbf{y})\,
\mathrm{d}\mu_\tau(\mathbf{x})\,
\mathrm{d}\nu(\mathbf{y})\,\mathrm{d}\tau ,
\end{equation}
where $w_\tau(\mathbf{x}, \mathbf{y}) = \mathrm{d}\mu_\tau^{\mathbf{y}} /
\mathrm{d}\mu_\tau (\mathbf{x})$.

Since Bochner integrals commute with inner products, another application of
Fubini--Tonelli yields
\begin{equation}
\label{eq:average}
=
- \int_0^1 \int_{\mathcal{V}}
\left\langle
\int_{\mathcal{V}} \mathbf{v}_\tau^{\mathbf{y}}(\mathbf{x})\,
w_\tau(\mathbf{x}, \mathbf{y})\,\mathrm{d}\nu(\mathbf{y}),
\nabla \varphi(\mathbf{x},\tau)
\right\rangle
\mathrm{d}\mu_\tau(\mathbf{x})\,\mathrm{d}\tau .
\end{equation}

By definition of $\mathbf{v}_\tau$ in \eqref{eq:div-free-fm}, this coincides with the
right-hand side of \eqref{eq:ce-weak}. Therefore, $(\mu_\tau)_{\tau \in [0,1]}$ and
$\mathbf{v}_\tau$ solve the continuity equation in weak form, and $\mathbf{v}_\tau$
generates the marginal probability path $(\mu_\tau)_{\tau \in [0,1]}$.

Finally, since each conditional velocity field $\mathbf{v}_\tau^{\mathbf{y}}$ takes
values in the divergence-free subspace $\mathcal{V}$ and $\mathcal{V}$ is linear,
the averaged field $\mathbf{v}_\tau$ also belongs to $\mathcal{V}$.
\end{proof}

\section{Continuity Equation and Probability Flows}
\label{app:continuity}

Probability paths $(\mu_\tau)_{\tau\in[0,1]}$ on $\mathcal V$ are generated by
time-dependent vector fields $(\mathbf v_\tau)_{\tau\in[0,1]}$ through the
continuity equation
\begin{equation}
\label{eq:continuity}
\partial_\tau \mu_\tau + \mathrm{div}(\mathbf v_\tau \mu_\tau) = 0 .
\end{equation}

Equation~\eqref{eq:continuity} is interpreted in the weak sense: for all test
functions $\varphi : \mathcal V \times [0,1] \to \mathbb R$,
\begin{equation}
\label{eq:weak_continuity}
\int_0^1 \int_{\mathcal V}
\Bigl(
\partial_\tau \varphi(\mathbf x,\tau)
+
\langle \mathbf v_\tau(\mathbf x), \nabla_{\mathbf x} \varphi(\mathbf x,\tau) \rangle
\Bigr)
\, \mathrm d\mu_\tau(\mathbf x)\, \mathrm d\tau
= 0 .
\end{equation}

\section{Implementation Details and Algorithmic Realization}
\label{pseudocode}

\begin{algorithm}[H]
\centering
\caption{Gaussian Divergence-Free Noise (stream function).}
\begin{minipage}{0.99\linewidth}
\begin{lstlisting}[style=torchstyle]
from torchvision.transforms.functional import gaussian_blur

def sample_div_free_noise(B, H, W, T, sigma, device):
    # dims: (B, H, W, T) spatial-temporal dimensions
    # sigma: smoothing standard deviation
    # output: divergence-free velocity field u
    
    # 1. Sample scalar stream function
    psi = torch.randn(B, H, W, T, device=device)
    psi = gaussian_blur(psi, sigma)  # smooth in spatial dimensions

    # 2. Spatial gradients
    dpsi_dy, dpsi_dx = torch.gradient(psi, dim=(1, 2))

    # 3. Curl construction (2D incompressible)
    u = torch.stack([dpsi_dy, -dpsi_dx], dim=-1)

    return u
\end{lstlisting}
\end{minipage}
\label{alg:divfree_noise_code}
\end{algorithm}

\begin{algorithm}[]
\centering
\caption{Spectral Leray Projection (Divergence-Free Velocity).}
\begin{minipage}{0.99\linewidth}
\begin{lstlisting}[style=torchstyle]
def leray_projection(uv_raw):
    # uv_raw: Tensor of shape (B, 2, H, W, T)
    u, v = uv_raw[:, 0], uv_raw[:, 1]

    u_hat = fft2(u)
    v_hat = fft2(v)

    # Wave numbers
    H, W = u.shape[-2], u.shape[-1]
    kx, ky = make_wavenumbers(H, W)

    k2 = kx**2 + ky**2
    proj = (kx * u_hat + ky * v_hat) / k2
    proj = torch.where(k2 == 0, torch.zeros_like(proj), proj)

    u_hat = u_hat - kx * proj
    v_hat = v_hat - ky * proj

    u = ifft2(u_hat)
    v = ifft2(v_hat)
    return torch.stack([u, v], dim=1)
\end{lstlisting}
\end{minipage}
\label{alg:leray_projection_code}
\end{algorithm}

\section{Experimental Implementation Details}
\label{app:implementation}

We evaluate our proposed framework on the task of turbulent flow prediction, formulated as a sequence-to-sequence learning problem following standard benchmarks \cite{li2020fourier}. The model observes a historical velocity context of 15 frames ($\mathbf{u}_{0:14}$) to predict the subsequent 35 frames ($\mathbf{u}_{15:49}$). 

For the \textbf{regression task}, the model learns a direct deterministic mapping from the history to the future sequence. 
For the \textbf{generative modeling task}, the historical context $\mathbf{u}_{0:14}$ serves as the condition $c$; the model generates the target sequence $\mathbf{u}_{15:49}$ by transforming an initial divergence-free noise distribution conditioned on $c$. 

To strictly assess long-term stability and physical conservation beyond the standard training horizon, we additionally perform autoregressive rollouts (extrapolation) up to $T=300$ steps during evaluation.

\paragraph{Model Architectures and Baselines.}
To strictly isolate the impact of our proposed geometric constraints, we maintain identical backbone architectures across all comparisons. We utilize the Fourier Neural Operator (FNO) \cite{li2020fourier} as the core parameterized mapping. The baseline models (\textbf{Reg. Base} and \textbf{Gen. Base}) map directly to the ambient space $L^2_{\text{per}}(\mathbb{T}^2; \mathbb{R}^2)$ without structural constraints. In contrast, our proposed methods (\textbf{Reg. Ours} and \textbf{Gen. Ours}) integrate the differentiable spectral Leray projection as a final layer. This design ensures that all model outputs, covering both deterministic predictions and generative samples, lie exactly on the divergence-free manifold $\nabla \cdot \mathbf{u} = 0$.

\paragraph{Flow Matching Configuration.}
For generative modeling, we employ the Flow Matching framework \cite{lipmanflow} to learn the probability flow of the turbulent trajectory. The baseline uses a standard Gaussian prior in the ambient space, which is incompatible with the incompressible subspace. Our method constructs a specific divergence-free Gaussian process via a stream-function pushforward, ensuring that the entire probability path is subspace-consistent. Sampling is performed using the \texttt{dopri5} (Dormand-Prince) adaptive ODE solver with strict absolute and relative tolerances set to $10^{-5}$. This high-precision integration ensures that valid samples are not corrupted by numerical drift. We use a Neural ODE formulation with 50 function evaluations (NFE) per step to balance computational cost and sample fidelity.

\section{Data Generation and Numerical Methods}
\label{app:dataset}

We detail the rigorous procedure used to generate the high-fidelity turbulent flow dataset. The data is produced by solving the 2D Navier--Stokes equations in the vorticity-streamfunction formulation using a pseudo-spectral method.

\subsection{Governing Equations and Solver}
The underlying dynamics are governed by the vorticity transport equation with a viscous damping term and an external forcing component:
\begin{equation}
    \partial_t \omega(\mathbf{x}, t) + (\mathbf{u} \cdot \nabla) \omega(\mathbf{x}, t) = \nu \Delta \omega(\mathbf{x}, t) + f(\mathbf{x}),
\end{equation}
where $\omega = \nabla \times \mathbf{u}$ is the vorticity, $\nu = 10^{-3}$ is the kinematic viscosity (Reynolds number $\text{Re} = 1000$), and $f$ is a deterministic forcing term.
The system is evolved in the spectral domain. We apply the Crank--Nicolson scheme for the linear viscous term and an explicit forward step for the non-linear advection and forcing terms. The discrete update rule in Fourier space ($\hat{\omega}$) with time step $\Delta t = 10^{-3}$ is given by:
\begin{equation}
    (1 + 0.5 \nu k^2 \Delta t) \hat{\omega}_{t+1} = (1 - 0.5 \nu k^2 \Delta t) \hat{\omega}_t + \Delta t \left( \hat{f} - \widehat{\mathbf{u} \cdot \nabla \omega} \right)_t,
\end{equation}
where $k^2 = 4\pi^2(k_x^2 + k_y^2)$. Dealiasing is performed using the $2/3$ rule to suppress spectral blocking artifacts.

\subsection{Velocity Extraction and Incompressibility}
Since our machine learning models operate directly on velocity fields, we strictly recover $\mathbf{u}$ from the evolved vorticity $\omega$ at each storage step. This process ensures the data is divergence-free by construction.
First, the stream function $\psi$ is solved via the Poisson equation in the spectral domain:
\begin{equation}
    -\Delta \psi = \omega \quad \Longrightarrow \quad \hat{\psi}(\mathbf{k}) = \frac{\hat{\omega}(\mathbf{k})}{4\pi^2 |\mathbf{k}|^2}, \quad \text{for } \mathbf{k} \neq \mathbf{0}.
\end{equation}
The velocity components $\mathbf{u} = (u, v)$ are then derived as:
\begin{equation}
    u = \frac{\partial \psi}{\partial y}, \quad v = - \frac{\partial \psi}{\partial x}.
\end{equation}
Numerically, this corresponds to $\hat{u} = i 2\pi k_y \hat{\psi}$ and $\hat{v} = - i 2\pi k_x \hat{\psi}$. This exact spectral relationship guarantees that $\nabla \cdot \mathbf{u} = \partial_x u + \partial_y v = 0$ is satisfied to machine precision in the generated dataset.

\subsection{Specific Dataset Configurations}
The dataset is generated on a spatial grid of resolution $64 \times 64$ over a unit domain $\Omega = [0, 1]^2$.
\begin{itemize}
    \item \textbf{Initialization}: Initial vorticity fields $\omega_0$ are sampled from a Gaussian Random Field (GRF) $\omega_0 \sim \mathcal{N}(0, C)$. The covariance operator $C$ is defined diagonally in the Fourier domain with a power-law decay, ensuring smoothness of the initial condition. Specifically, the Fourier coefficients $\hat{\omega}_0(\mathbf{k})$ are independent Gaussian variables with variance:
    \begin{equation}
        \mathbb{E}[|\hat{\omega}_0(\mathbf{k})|^2] \propto \left( 4\pi^2 |\mathbf{k}|^2 + \tau^2 \right)^{-\alpha},
    \end{equation}
    where we set $\alpha = 2.5$ and $\tau = 7.0$. This parameter choice balances the energy spectrum to produce physically realistic turbulent structures at $t=0$.
    \item \textbf{Forcing}: We effectively use a Kolmogorov-style forcing defined as $f(x,y) = 0.1\sqrt{2} \sin(2\pi(x+y) + \phi_i)$, where the phase $\phi_i$ varies across trajectories to induce diverse flow patterns.
    \item \textbf{Temporal Resolution}: The solver runs with an internal step of $\delta t = 10^{-3}$. We record snapshots every $\Delta T = 1.0$ physical time units (every 1000 solver steps).
    \item \textbf{Trajectory Length}: Each training trajectory contains 50 snapshots ($T_{total} = 50$). For testing, we generate extended trajectories of 300 snapshots ($T_{test} = 300$) to evaluate long-term stability.
    \item \textbf{Scale}: The final dataset comprises 10,000 trajectories for training and 100 for testing/validation.
\end{itemize}

\section{Generative Model Variance Analysis}
\label{app:variance}

This section provides a detailed variance analysis of the generative models to explicitly quantify the stability and consistency of the generated ensembles. 
Unlike deterministic regression approaches such as the standard Fourier Neural Operator \cite{li2020fourier}, generative models like Flow Matching \cite{lipmanflow, liu2023flow} produce a distribution of trajectories for a given initial condition.
Capturing the variability of this distribution is essential for assessing whether the model learns the true physical statistics of the turbulent flow or merely memorizes a mean path.

To evaluate this, we generated $R=20$ independent realizations for each test window in the dataset.
We measured performance using the Mean Squared Error (MSE) for velocity components ($u, v$) and the divergence error (Div), aggregating these metrics over three distinct temporal stages:
(1) \textbf{Prediction} ($T=15\dots 50$), corresponding to the training horizon;
(2) \textbf{Short-term Extrapolation} ($T=51\dots 150$); and
(3) \textbf{Long-term Extrapolation} ($T=151\dots 300$).
The statistics reported in Table~\ref{tab:gen_variance_detail} represent the global mean and standard deviation computed over the pooled population of all realizations across all test samples.
For the baseline Conditional Flow Matching model, we note that the reported statistics include trajectories that numerically diverged, effectively highlighting the catastrophic failure modes inherent in unconstrained generative modeling \cite{kerrigan2024functional}.

\paragraph{Instability of Unconstrained Baselines.}
The results in Table~\ref{tab:gen_variance_detail} reveal a critical instability in the standard Flow Matching baseline.
While the baseline performs reasonably well within the training distribution (Prediction stage), it exhibits catastrophic divergence during the Short-term Extrapolation phase ($T=51\dots 150$), with errors exploding to astronomical values ($\text{MSE} \sim 10^{28}$).
This demonstrates that without explicit physical constraints, the generated trajectories rapidly drift off the valid data manifold, leading to numerical blow-up.
In strong contrast, our proposed Divergence-Free Flow Matching maintains numerical stability and physical consistency throughout the entire rollout.
Even in the extended extrapolation regime ($T=151\dots 300$), our method maintains a low MSE ($\approx 0.007$) and machine-precision divergence ($O(10^{-7}$)), confirming that enforcing the divergence-free constraint via the Leray projection is essential for robust long-term generation of turbulent flows.

\begin{table*}[t]
    \centering
    \renewcommand{\arraystretch}{1.3} 
    \setlength{\tabcolsep}{6pt}  
    
    \caption{\textbf{Quantitative Comparison on Re = 1000 (Generative Models).} 
    We report the \textbf{Mean $\pm$ Std} computed across all windows and realizations ($20\times$).
    \textbf{Bold} indicates best performance.
    Note the astronomical error values for the Base model in Short-term Extrapolation, indicating model blow-up.
    }
    \label{tab:gen_variance_detail}
    
    \resizebox{\textwidth}{!}{%
    \begin{tabular}{l l c c c c}
        \toprule
        \textbf{Model} & \textbf{Stage} & \textbf{Total MSE} & \textbf{U-MSE} & \textbf{V-MSE} & \textbf{Div-MSE} \\
        \midrule
        
        \multirow{3}{*}{\textbf{Gen. (Base)}} 
         & Prediction ($T_{15\text{-}50}$) & $0.0046 \pm 0.0035$ & $0.0044 \pm 0.0033$ & $0.0047 \pm 0.0038$ & $0.0075 \pm 0.0120$ \\
         & Short-term Ext. ($T_{51\text{-}150}$) & $2.40\text{e}28 \pm \text{NaN}$ & $9.22\text{e}30 \pm \text{NaN}$ & $2.60\text{e}28 \pm \text{NaN}$ & $2.64\text{e}26 \pm \text{NaN}$ \\
         & Long-term Ext. ($T_{151\text{-}\text{End}}$) & $0.0076 \pm 0.0009$ & $0.0079 \pm 0.0009$ & $0.0072 \pm 0.0012$ & $0.0024 \pm 0.0046$ \\
         
        \midrule
        
        \multirow{3}{*}{\textbf{Gen. (Ours)}} 
         & Prediction ($T_{15\text{-}50}$) & $\mathbf{0.0011 \pm 0.0010}$ & $\mathbf{0.0011 \pm 0.0009}$ & $\mathbf{0.0011 \pm 0.0011}$ & $\mathbf{9.3\text{e-}7 \pm 1.2\text{e-}7}$ \\
         & Short-term Ext. ($T_{51\text{-}150}$) & $\mathbf{0.0073 \pm 0.0024}$ & $\mathbf{0.0071 \pm 0.0025}$ & $\mathbf{0.0074 \pm 0.0023}$ & $\mathbf{7.5\text{e-}7 \pm 4.0\text{e-}8}$ \\
         & Long-term Ext. ($T_{151\text{-}\text{End}}$) & $\mathbf{0.0074 \pm 0.0013}$ & $\mathbf{0.0073 \pm 0.0013}$ & $\mathbf{0.0076 \pm 0.0012}$ & $\mathbf{7.4\text{e-}7 \pm 2.7\text{e-}8}$ \\
        
        \bottomrule
    \end{tabular}
    }
\end{table*}

\section{Extended Visual Analysis}
\label{app:extended_vis}

In this section, we present a comprehensive fine-grained visual and spectral assessment of the performance of the models. We inspect primitive velocity variables, derived physical quantities including vorticity and pressure, and spectral error distributions across three distinct temporal regimes. The evaluation strictly compares our proposed divergence-free framework against the unconstrained baseline to highlight the benefits of enforcing hard physical constraints.

All models are trained to observe a historical context of 15 frames ($\mathbf{u}_{0:14}$) to predict the future evolution. For the extrapolation regimes where $T > 49$, we employ a recursive rolling window strategy. In this setup, the predictions from the model are fed back as input for the next step, extending the rollout up to $T=300$ to rigorously assess long-term stability and conservation properties.

\subsection{Regime I: Standard Prediction Horizon ($T_{15\text{-}49}$)}
\label{app:vis_pred}

This regime serves to validate the ability of the model to reconstruct fine-scale turbulent features within the training distribution. Figure~\ref{fig:pred_fields} presents a multi-panel visualization of the flow state which juxtaposes kinematic fidelity with physical consistency checks.

\textbf{Kinematic Reconstruction.} 
The top row, showing velocity components $u$ and $v$, demonstrates that our constrained model faithfully captures the instantaneous velocity distribution. The contour lines are sharp and the extrema magnitudes match the Ground Truth. This indicates that the model successfully learns the fine-grained advection dynamics without suffering from the over-smoothing often seen in regression-based approaches.

\textbf{Conservation and Pressure Recovery.} 
The middle row highlights the critical structural advantage of our framework. The unconstrained baseline exhibits significant ``checkerboard'' artifacts in the divergence field. This is a characteristic symptom of spectral leakage where high-frequency errors alias into the grid scale. This violation has a catastrophic downstream effect on the pressure field $p$. Since physical pressure is governed by the Poisson equation $-\Delta p = \nabla \cdot (\mathbf{u} \cdot \nabla \mathbf{u})$, the nonzero divergence in the baseline acts as a corrupted source term. Consequently, the recovered pressure degenerates into non-physical white noise. In contrast, our method enforces $\nabla \cdot \mathbf{u} \approx 0$ to machine precision by construction. This allows for the recovery of smooth and structurally coherent pressure fields that accurately reflect the internal stress of the fluid.

\textbf{Topological Realism and Spectral Fidelity.} 
The bottom row, displaying streamlines and vorticity $\omega$, confirms the topological realism of our approach. Our method recovers filamentary vorticity structures and coherent vortex loops effectively. This prevents the spectral blocking observed in simpler baselines where energy piles up at the cutoff frequency. This observation is quantitatively corroborated by the Enstrophy spectrum in Figure~\ref{fig:pred_mse} (Top). Our method matches the inertial range energy cascade where $E(k) \propto k^{-3}$ up to the Nyquist frequency, whereas the baseline suffers from spectral damping.

\subsection{Regime II: Short-term Extrapolation ($T_{50\text{-}100}$)}
\label{app:vis_short}

This regime marks the onset of out-of-distribution recursive prediction where prediction errors typically accumulate.

\textbf{Drift and Spectral Bias.} 
Figure~\ref{fig:short_fields} illustrates the flow state at $T=100$. The baseline model begins to drift off the physical manifold. The vorticity field $\omega$ loses its sharp gradients and becomes amorphous and overly smooth. This represents a common failure mode in MSE-trained neural operators known as spectral bias. In this mode, the model preferentially fits low-frequency components while discarding the high-frequency turbulent details that are essential for accurate evolution.

\textbf{Mechanism of Divergence Explosion.} 
Crucially, the middle row reveals the mechanism of failure for the baseline. Divergence errors do not simply persist but amplify through the recurrent process. In the absence of a correction step, small non-zero divergence acts as a spurious source term in the transport equations. This leads to a numerical ``blow-up'' where the pressure field collapses into high-magnitude noise as shown in Fig.~\ref{fig:short_fields}d. This indicates that the trajectory has left the valid solution manifold.

\textbf{Stabilization via Projection.} 
In contrast, our framework maintains tight structural coherence. By strictly enforcing the solenoidal constraint at every timestep, the spectral Leray projection acts as an inductive bias that stabilizes the recurrent dynamics. The generated flows remain on the attractive manifold of the Navier-Stokes equations. This prevents the ``spectral creep'' observed in unconstrained models and ensures the simulation remains physically plausible even outside the training horizon.

\subsection{Regime III: Long-term Extrapolation ($T_{101\text{-}300}$)}
\label{app:vis_long}

This final regime evaluates the ergodicity and asymptotic stability of the generative process. We assess whether the model can remain on the physical attractor indefinitely.

\textbf{Catastrophic Decoherence of Baseline.} 
As shown in Figure~\ref{fig:long_fields}, the baseline model completely loses physical integrity by $T=300$. The velocity field degrades into smooth and wave-like patterns that bear no resemblance to chaotic turbulence. The divergence errors saturate at a high level which renders both the pressure in the middle row and vorticity in the bottom row physically meaningless. The energy spectrum in Figure~\ref{fig:long_mse} (Top) confirms this with a severe drop-off in high-frequency energy. This indicates the model has ceased to simulate turbulence effectively.

\textbf{Invariant Measure Sampling.} 
Conversely, our proposed generative method continues to generate sharp and rich turbulent features indefinitely. Crucially, the statistics remain stationary. The Enstrophy spectrum in Figure~\ref{fig:long_mse} (Top) overlaps almost perfectly with the ground truth. This demonstrates that our model is effectively sampling from the invariant measure of the system. It implies that the framework does not merely memorize trajectories but has effectively learned the underlying conservation laws. This ensures that the probability mass stays within the valid configuration space and prevents leakages that would otherwise destroy the long-term statistics.

\begin{figure*}[ht]
    \centering
    \begin{subfigure}[b]{0.48\textwidth}
        \includegraphics[width=\linewidth]{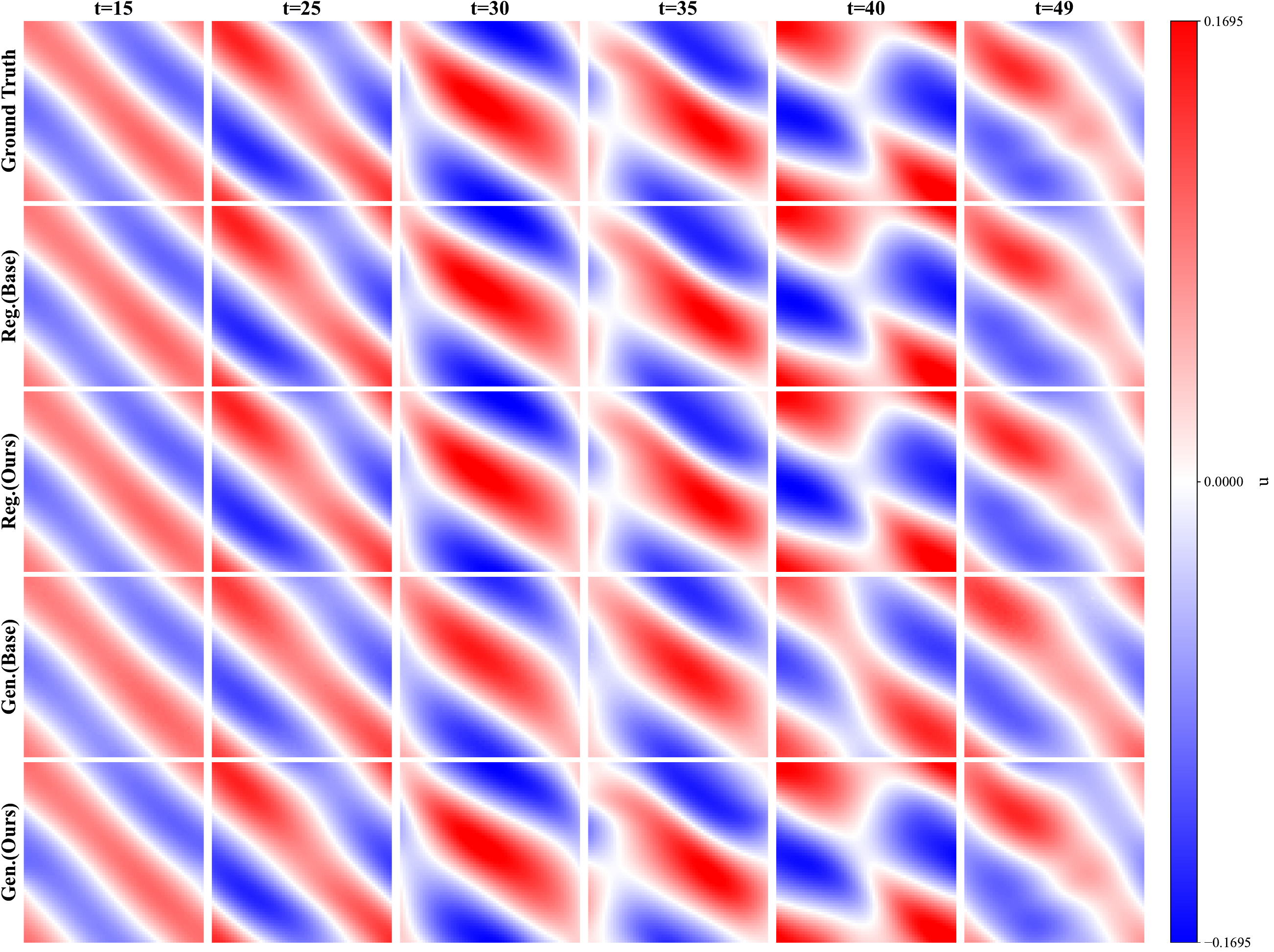} \caption{Zonal Velocity $u$}
    \end{subfigure}
    \hfill
    \begin{subfigure}[b]{0.48\textwidth}
        \includegraphics[width=\linewidth]{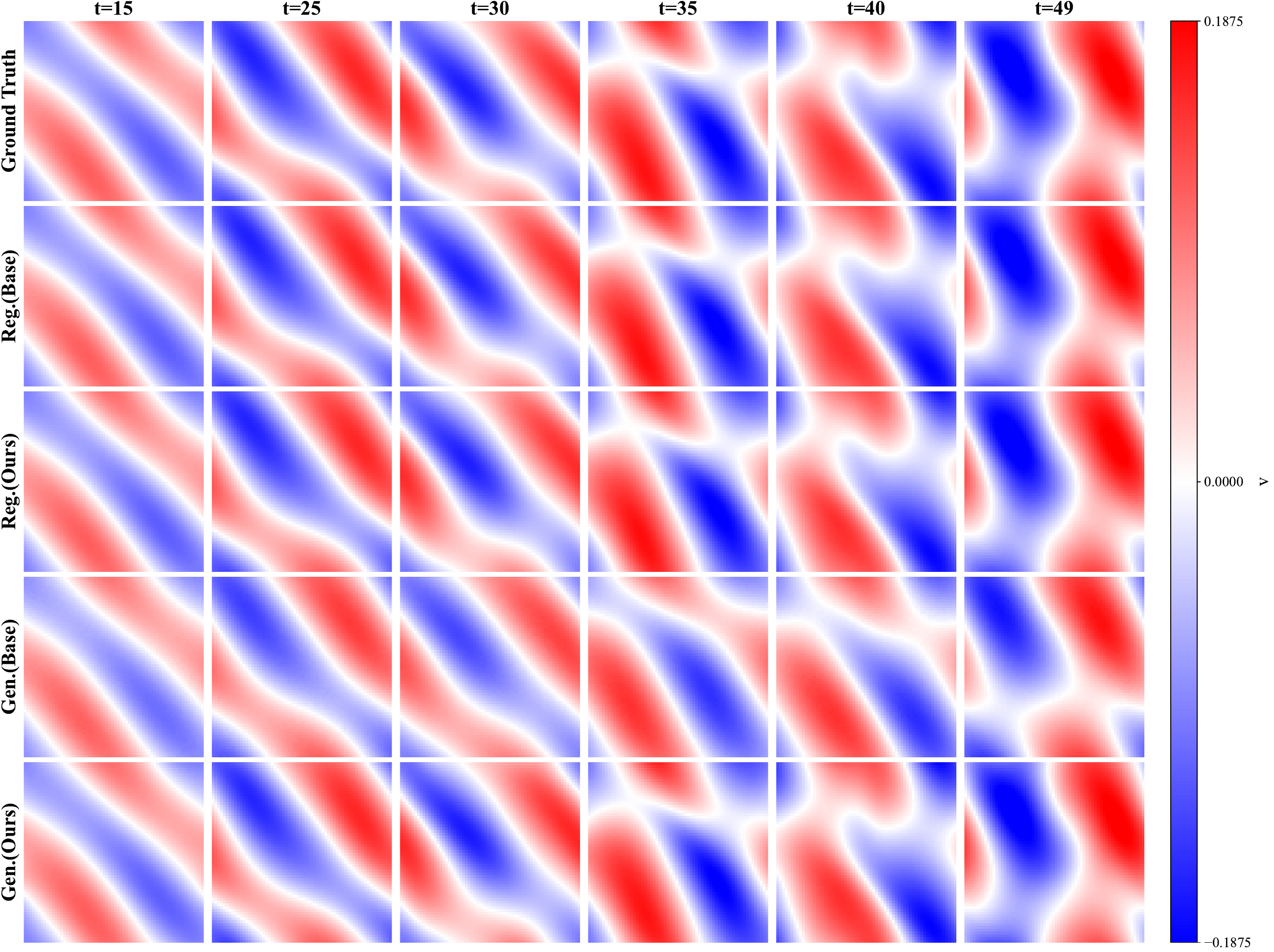} \caption{Meridional Velocity $v$}
    \end{subfigure}

    \begin{subfigure}[b]{0.48\textwidth}
        \includegraphics[width=\linewidth]{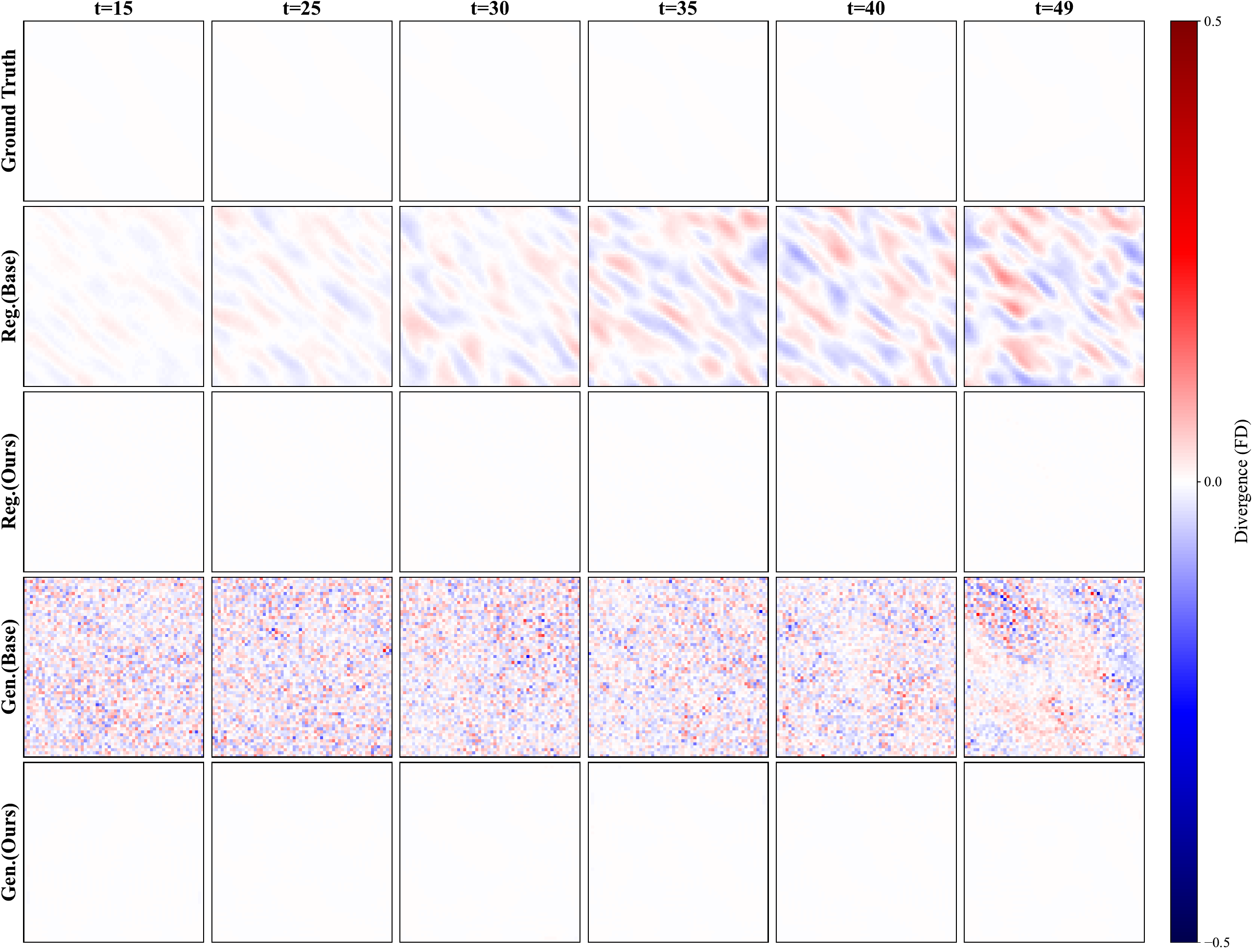} \caption{Divergence Error}
    \end{subfigure}
    \hfill
    \begin{subfigure}[b]{0.48\textwidth}
        \includegraphics[width=\linewidth]{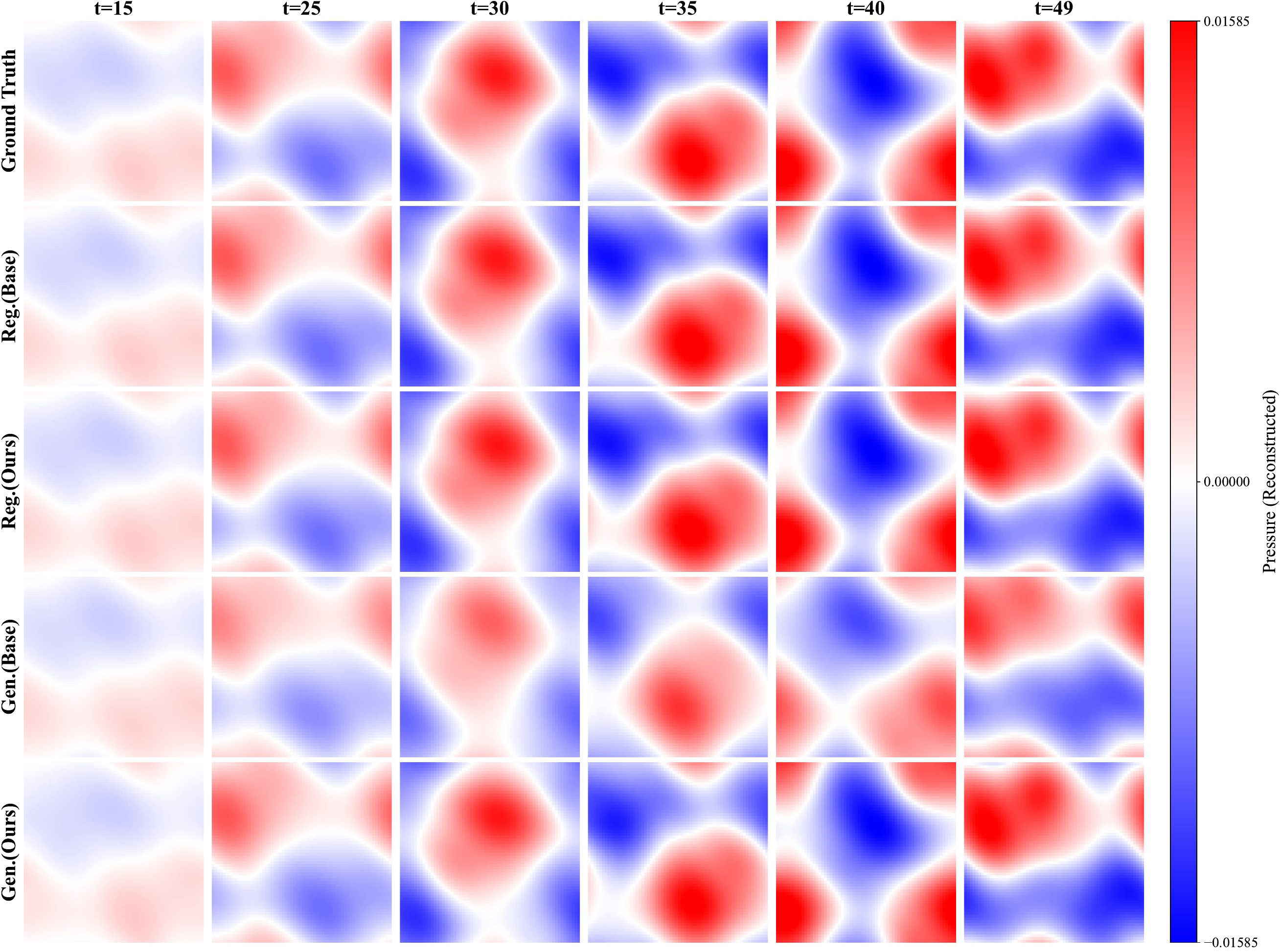} \caption{Pressure}
    \end{subfigure}

    \begin{subfigure}[b]{0.48\textwidth}
        \includegraphics[width=\linewidth]{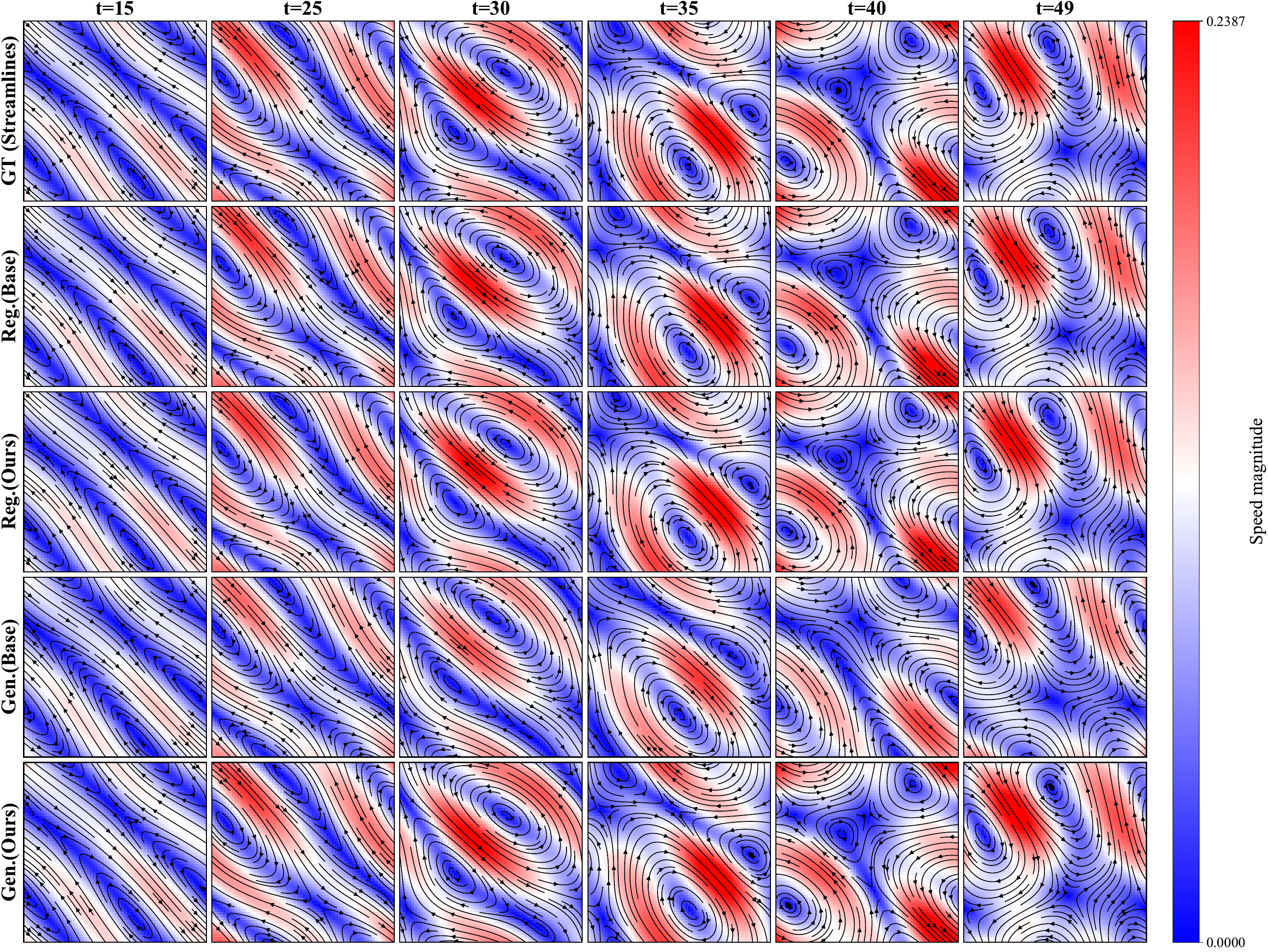} \caption{Streamlines}
    \end{subfigure}
    \hfill
    \begin{subfigure}[b]{0.48\textwidth}
        \includegraphics[width=\linewidth]{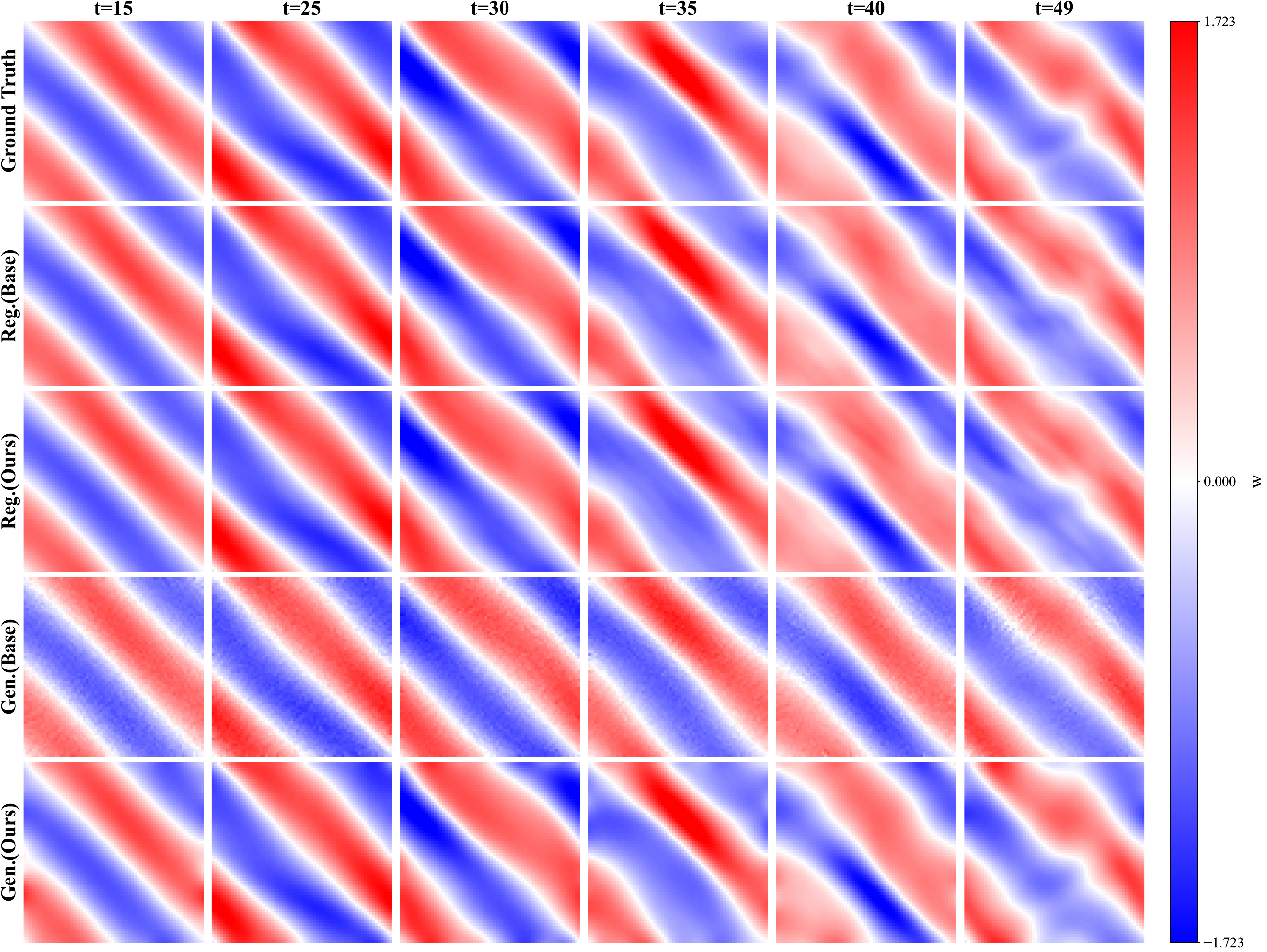} \caption{Vorticity $\omega$}
    \end{subfigure}
    
    \caption{\textbf{Physical Field Reconstruction (Prediction).} A unified view of the kinematic and topological state.
    Top: Velocity fields.
    Middle: Conservation metrics showing baseline failure.
    Bottom: Topology.}
    \label{fig:pred_fields}
\end{figure*}

\begin{figure*}[ht]
    \centering
    \begin{subfigure}[b]{0.96\textwidth}
        \centering
        \includegraphics[width=\linewidth]{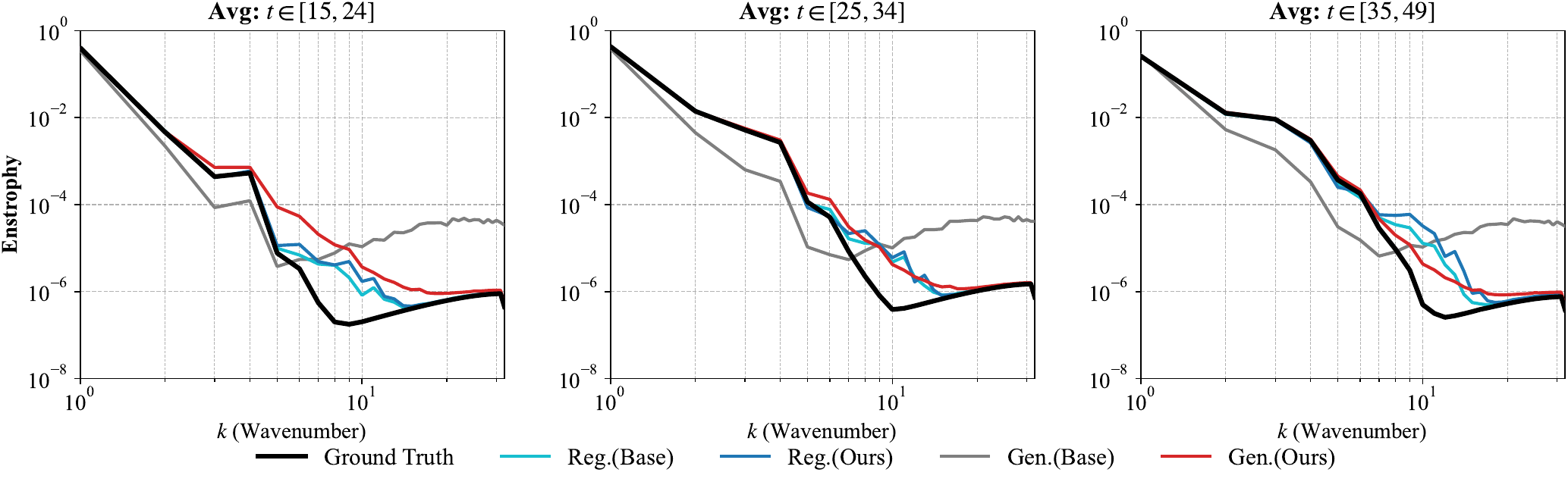}
        \caption{Enstrophy Spectrum}
    \end{subfigure}

    \begin{subfigure}[b]{0.48\textwidth}
        \includegraphics[width=\linewidth]{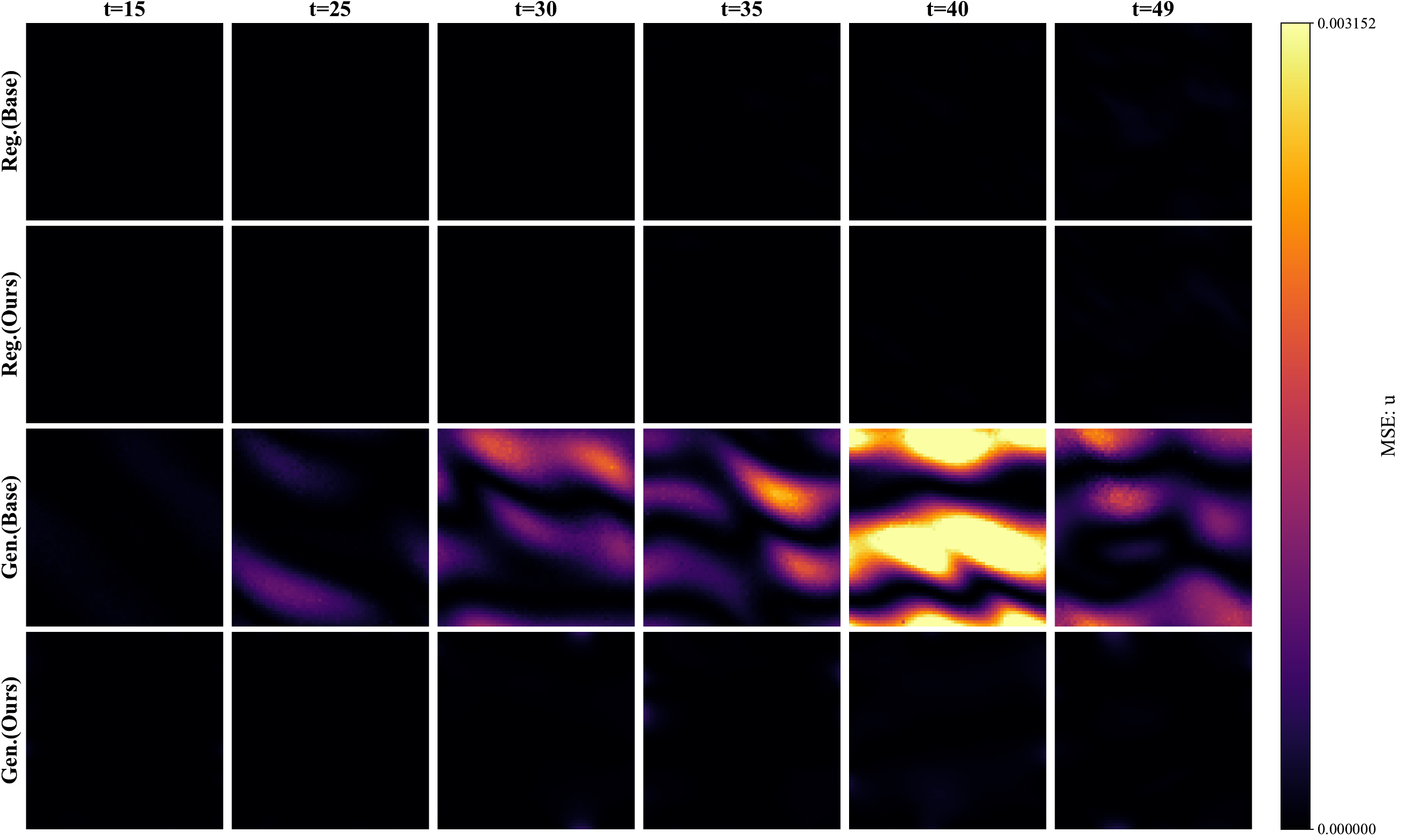} \caption{MSE $u$}
    \end{subfigure}
    \hfill
    \begin{subfigure}[b]{0.48\textwidth}
        \includegraphics[width=\linewidth]{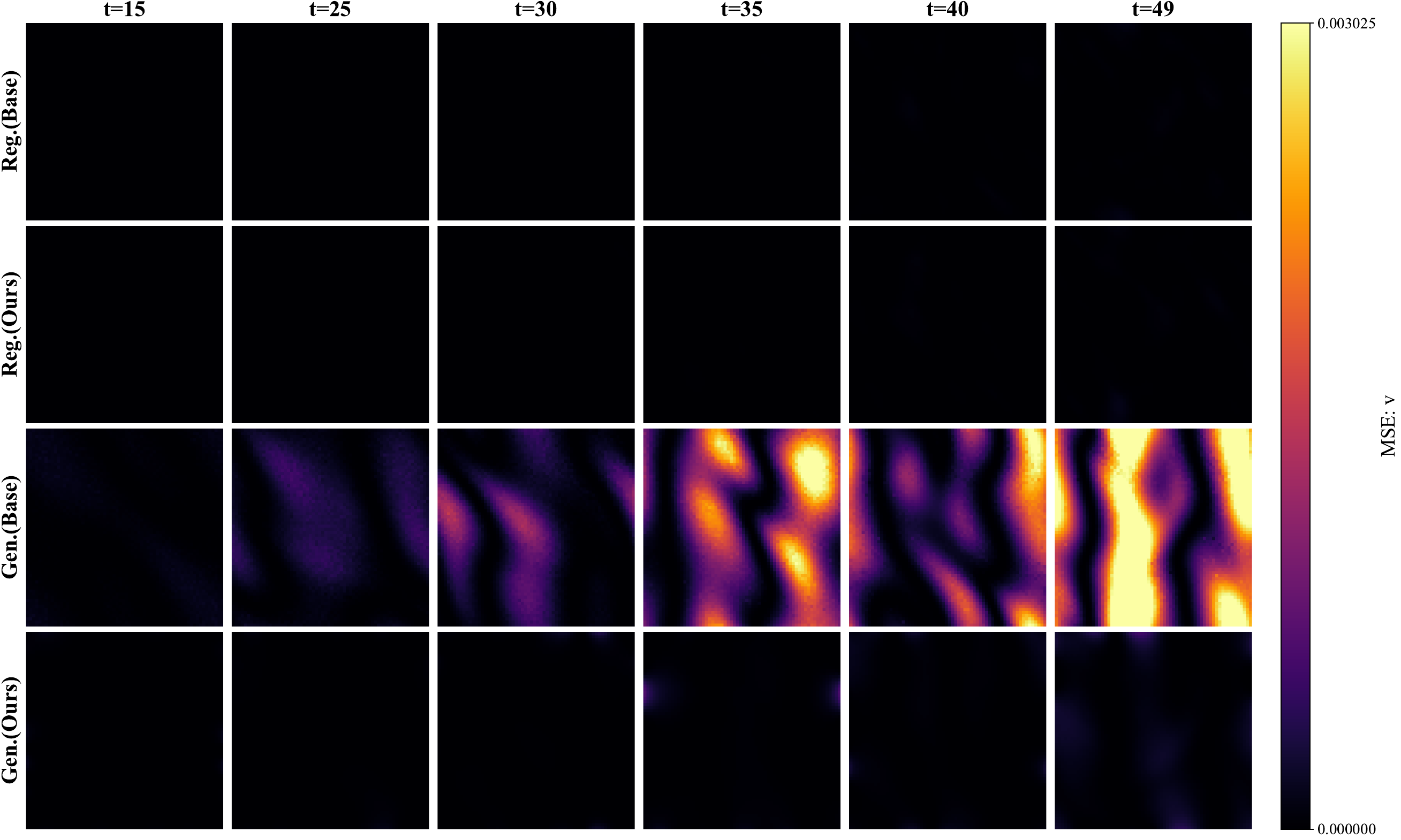} \caption{MSE $v$}
    \end{subfigure}

    \begin{subfigure}[b]{0.48\textwidth}
        \includegraphics[width=\linewidth]{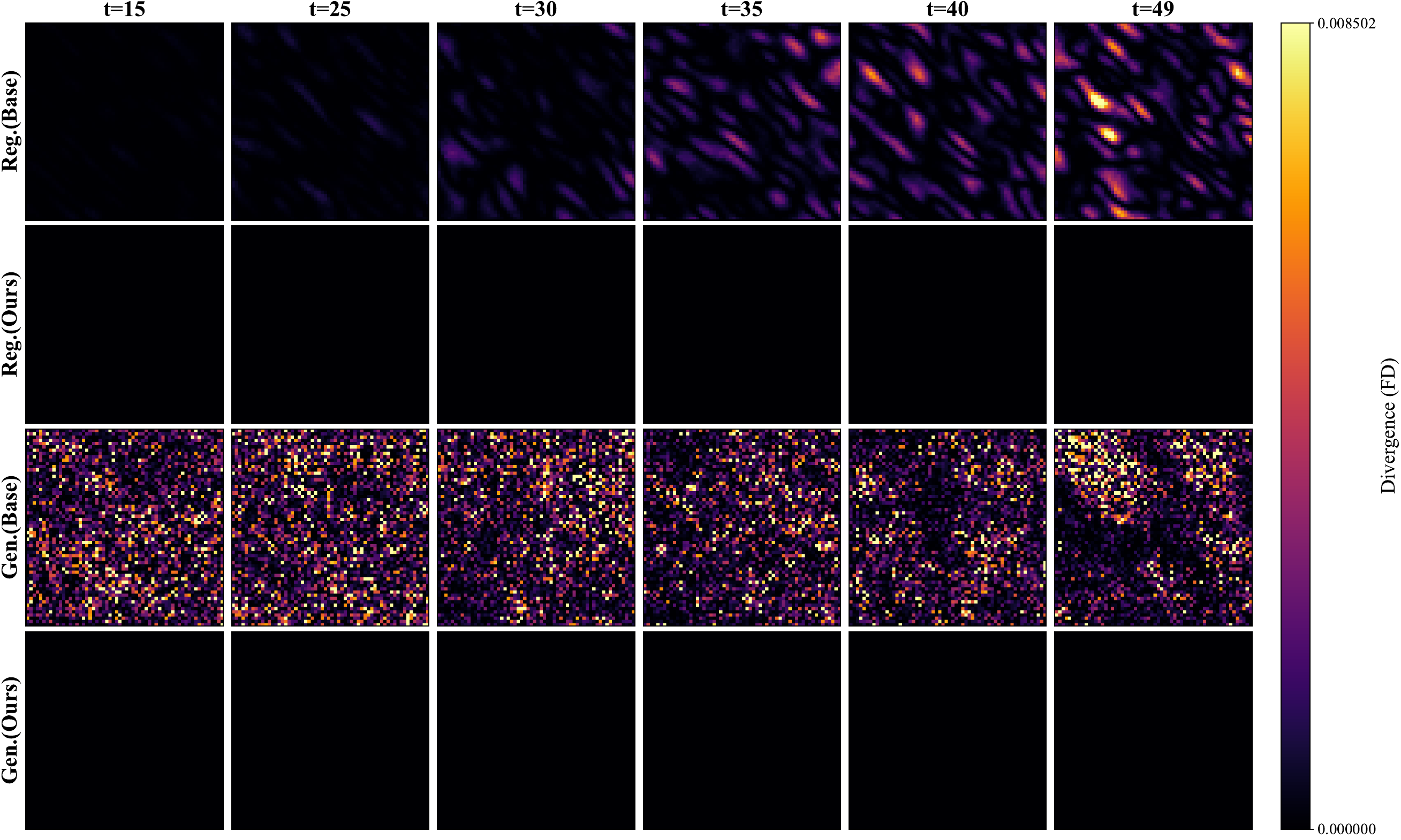} \caption{MSE Divergence}
    \end{subfigure}
    \hfill
    \begin{subfigure}[b]{0.48\textwidth}
        \includegraphics[width=\linewidth]{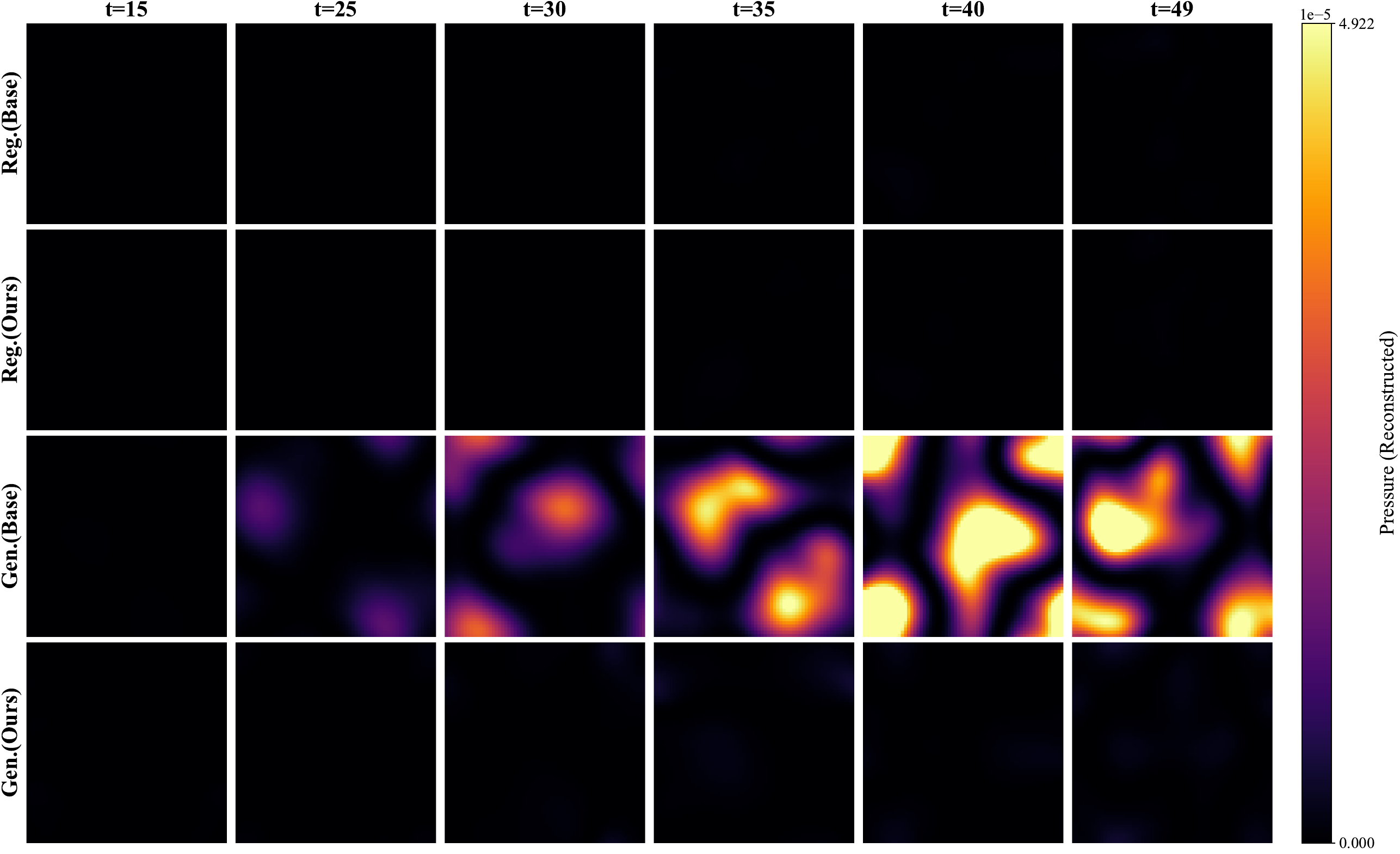} \caption{MSE Pressure}
    \end{subfigure}

    \begin{subfigure}[b]{0.48\textwidth}
        \includegraphics[width=\linewidth]{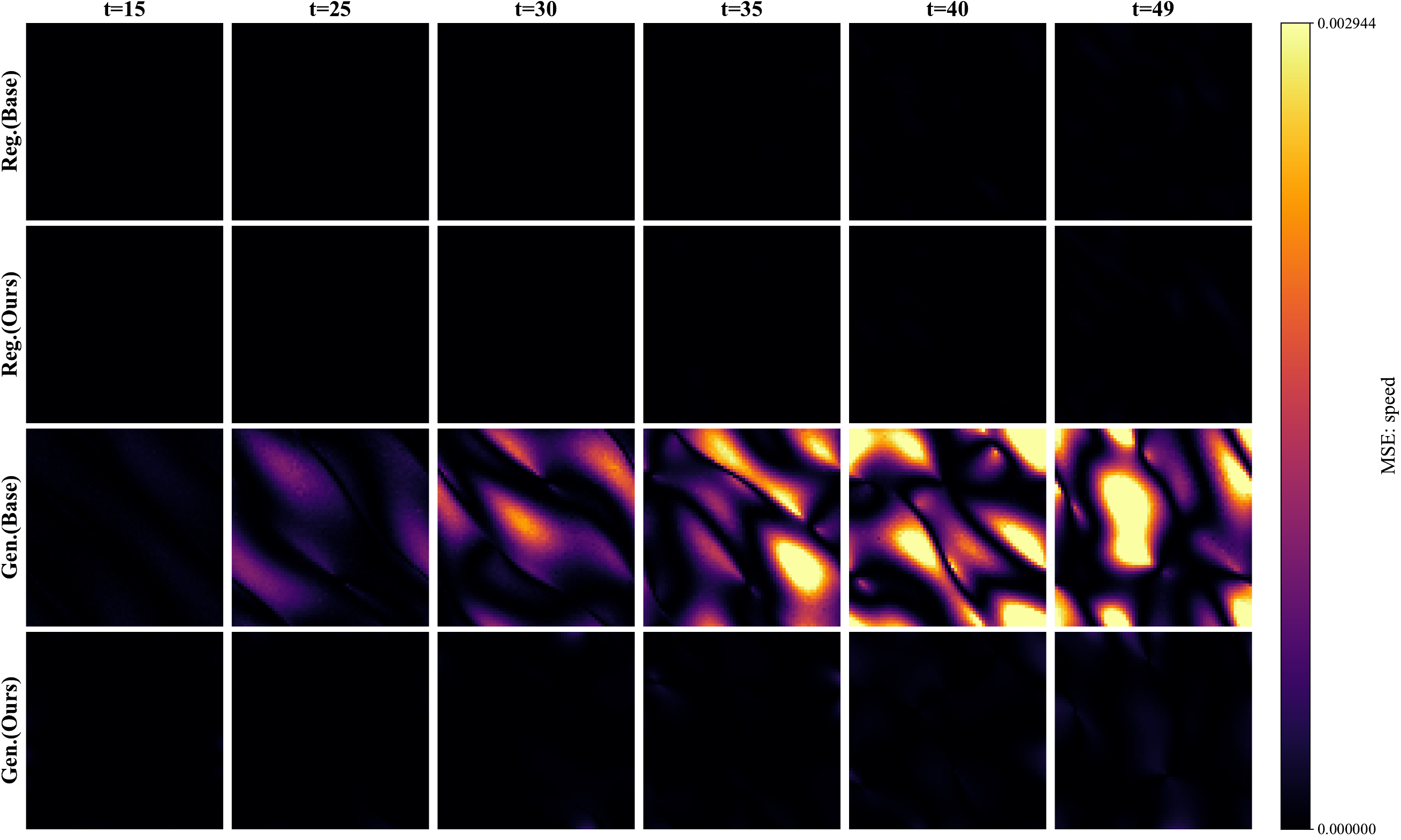} \caption{MSE Speed}
    \end{subfigure}
    \hfill
    \begin{subfigure}[b]{0.48\textwidth}
        \includegraphics[width=\linewidth]{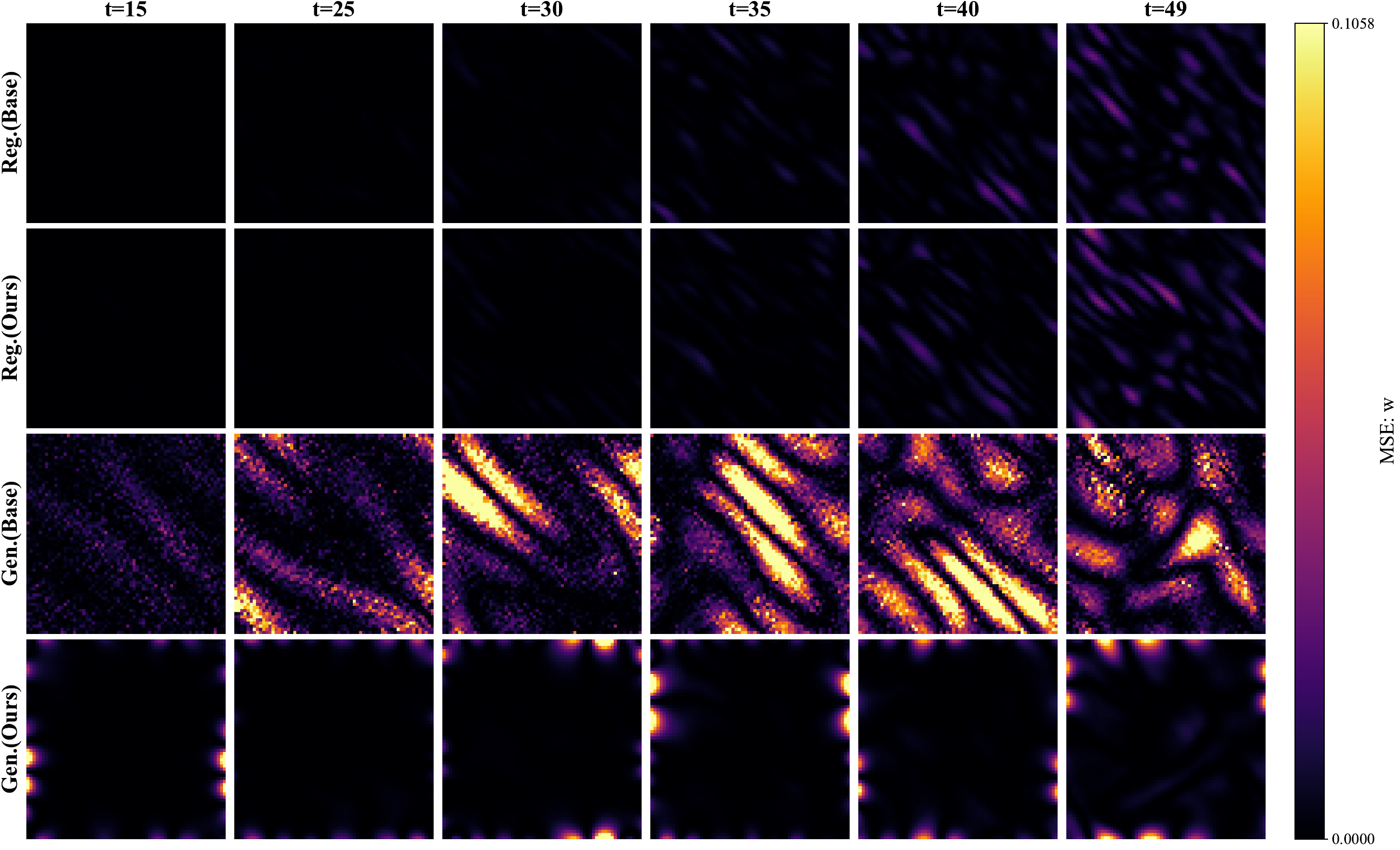} \caption{MSE $\omega$}
    \end{subfigure}
    
    \caption{\textbf{Spectral Fidelity and Error Distributions (Prediction).} 
    Top: Enstrophy Spectrum showing partial alignment with ground truth.
    Rows 2-4: Squared error distributions for primitive and derived variables.}
    \label{fig:pred_mse}
\end{figure*}

\begin{figure*}[ht]
    \centering
    \begin{subfigure}[b]{0.48\textwidth}
        \includegraphics[width=\linewidth]{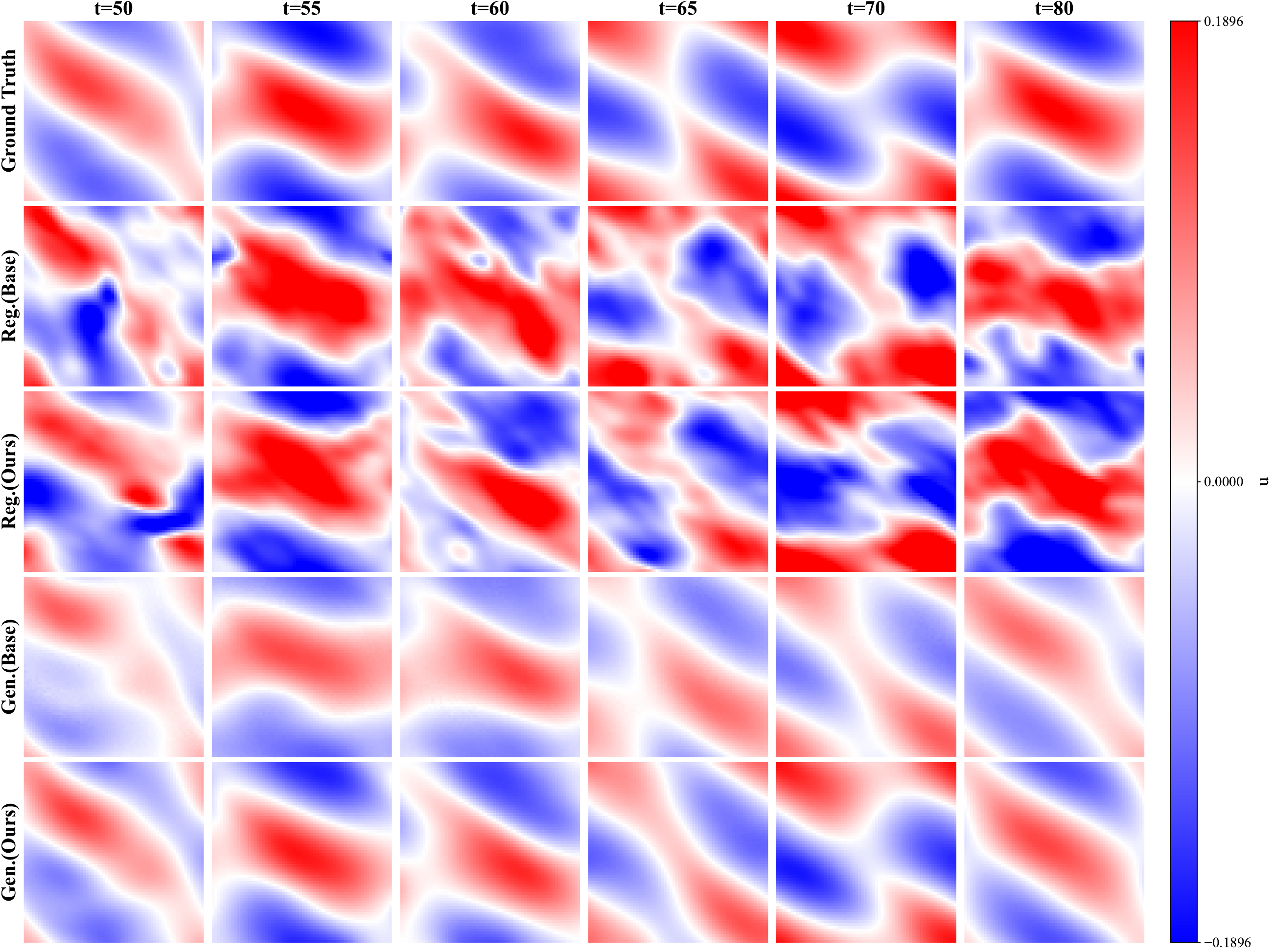} \caption{Zonal Velocity $u$}
    \end{subfigure}
    \hfill
    \begin{subfigure}[b]{0.48\textwidth}
        \includegraphics[width=\linewidth]{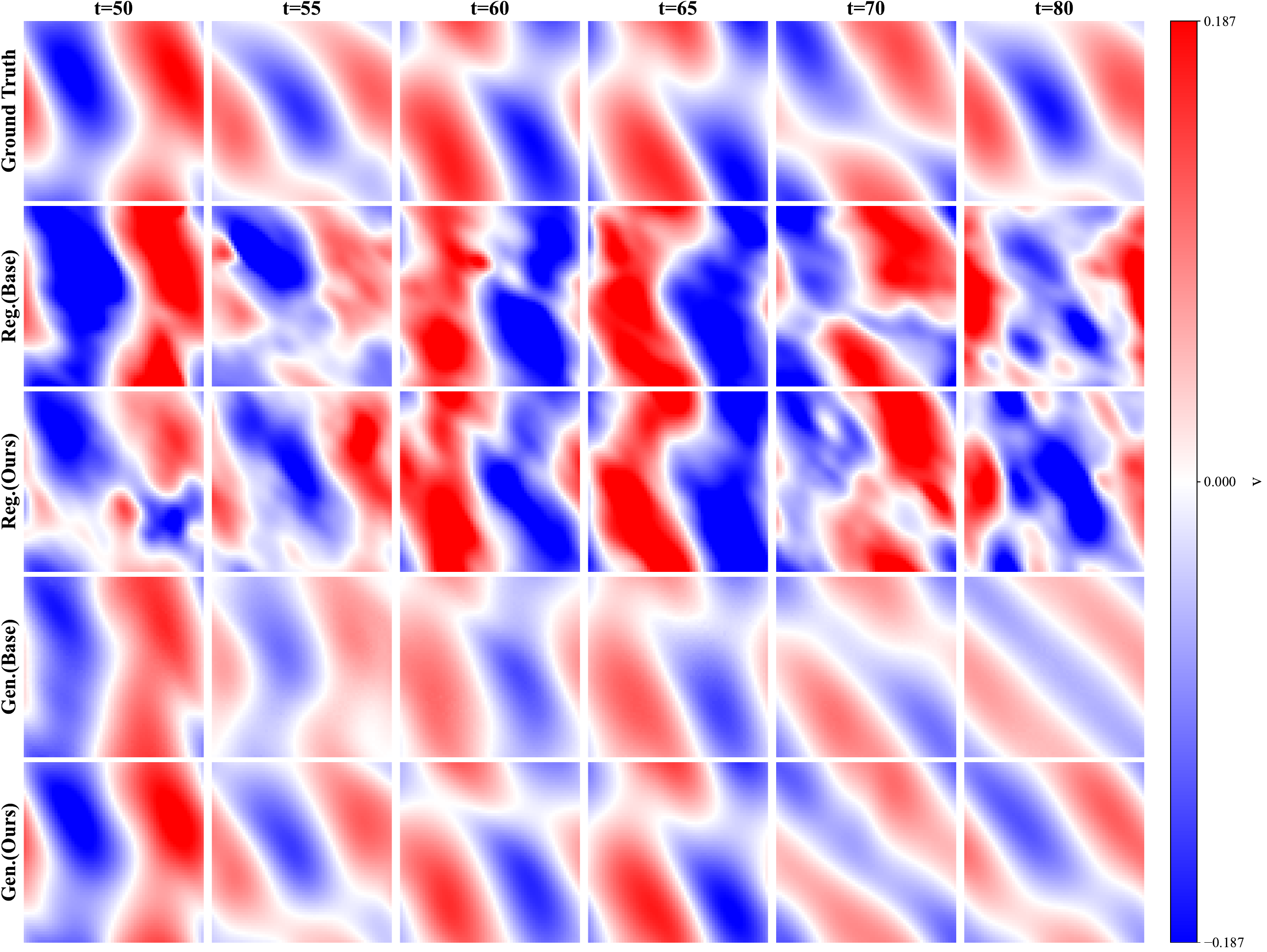} \caption{Meridional Velocity $v$}
    \end{subfigure}

    \begin{subfigure}[b]{0.48\textwidth}
        \includegraphics[width=\linewidth]{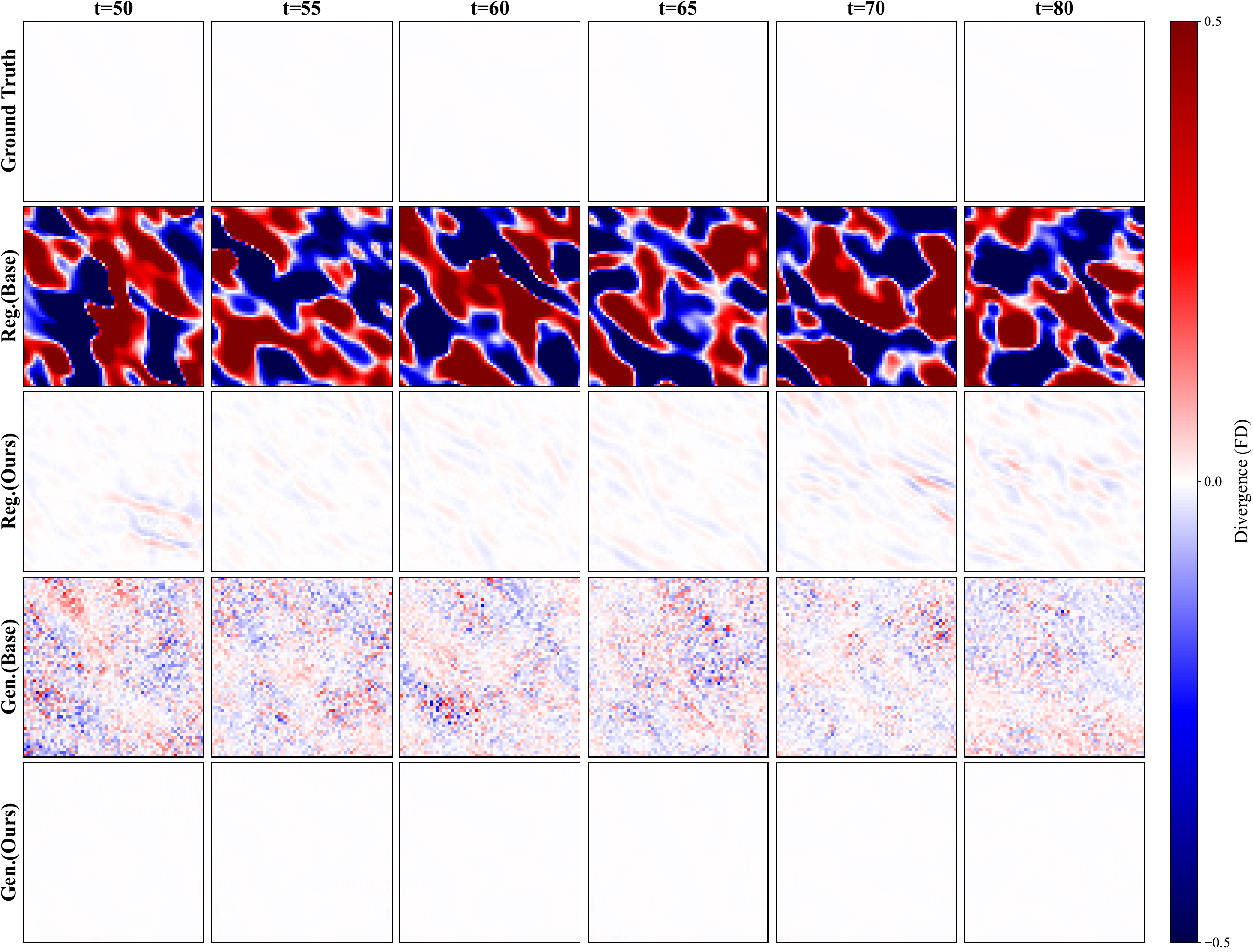} \caption{Divergence Accumulation}
    \end{subfigure}
    \hfill
    \begin{subfigure}[b]{0.48\textwidth}
        \includegraphics[width=\linewidth]{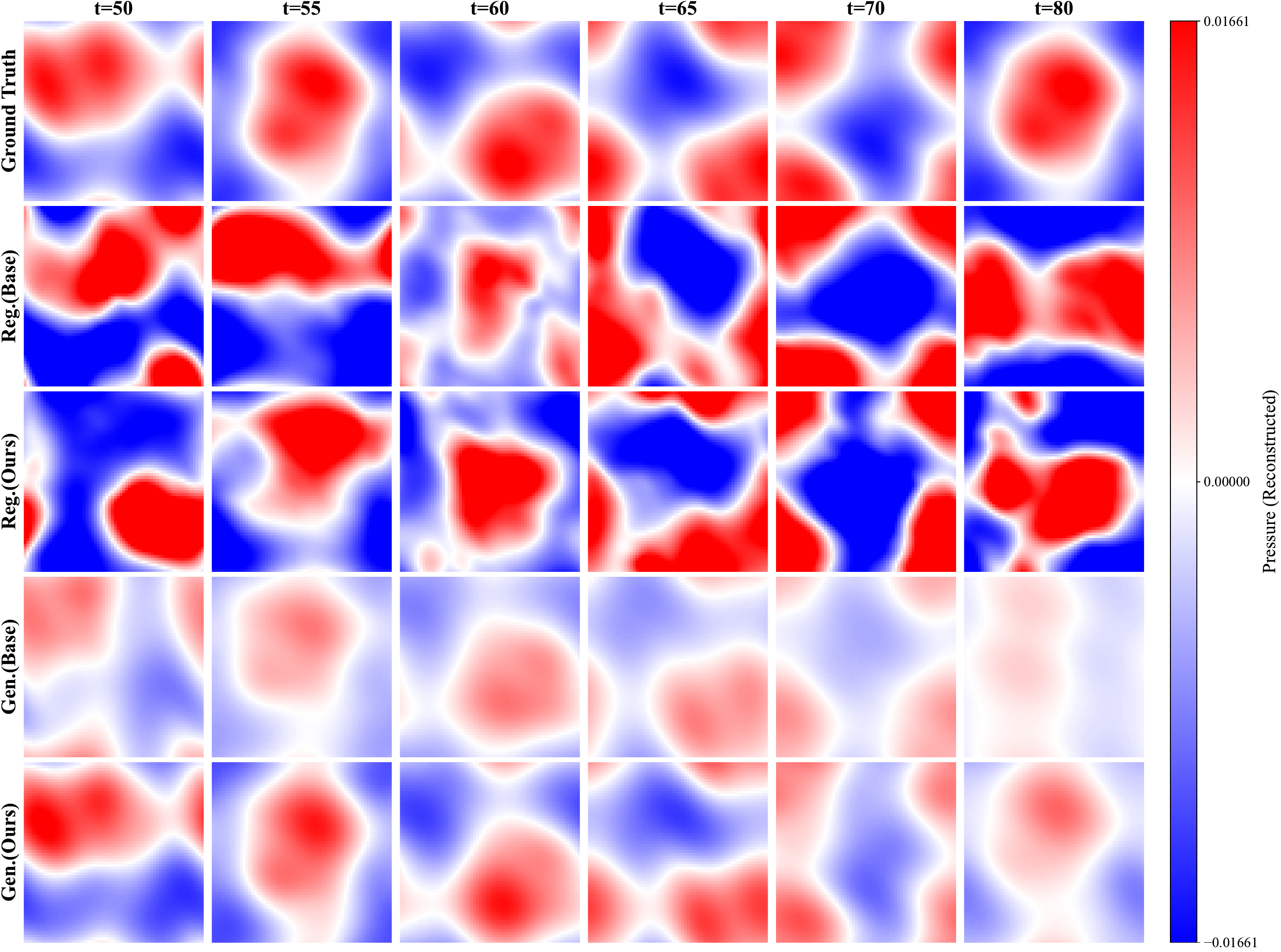} \caption{Pressure Collapse}
    \end{subfigure}

    \begin{subfigure}[b]{0.48\textwidth}
        \includegraphics[width=\linewidth]{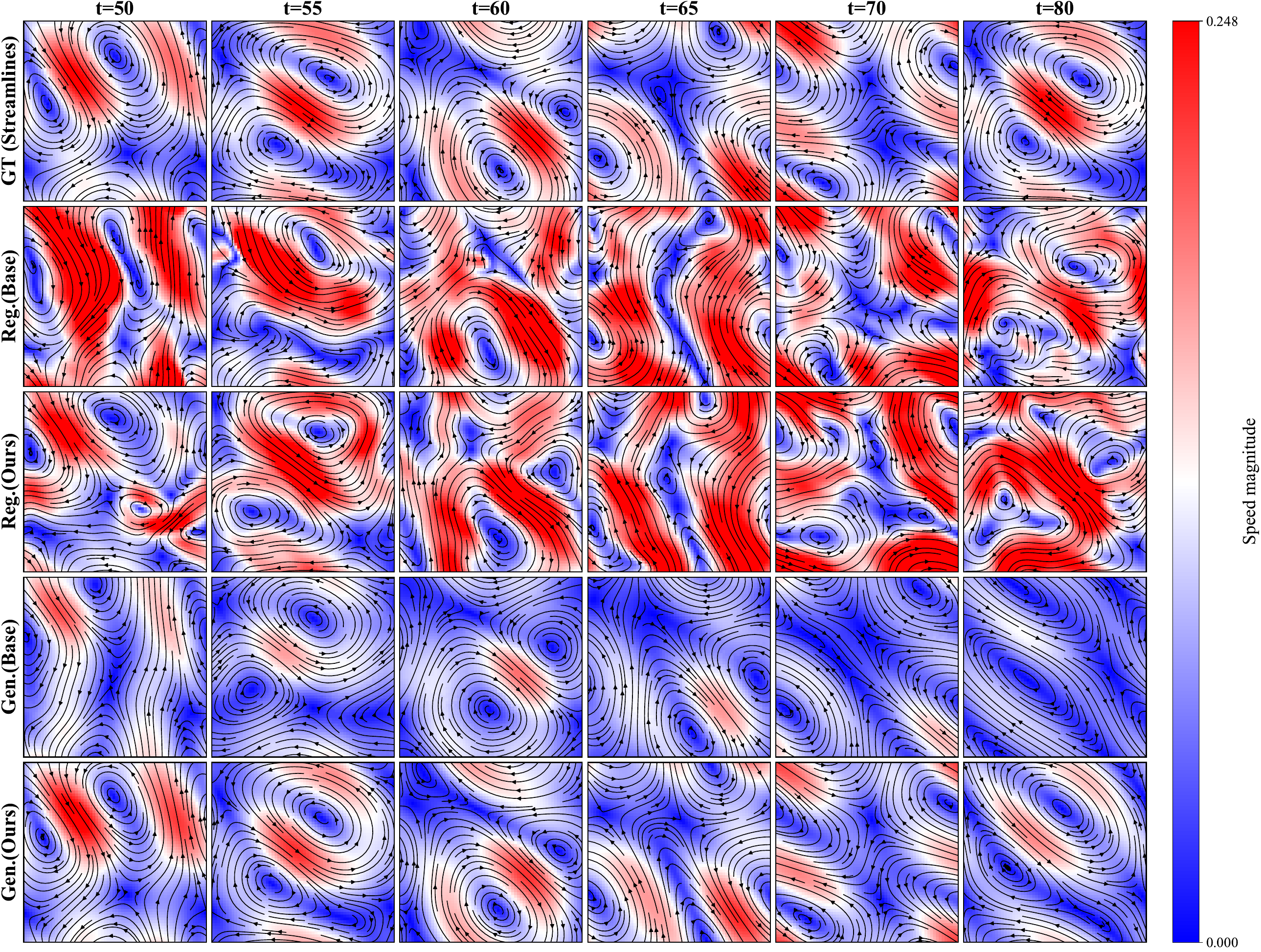} \caption{Streamlines}
    \end{subfigure}
    \hfill
    \begin{subfigure}[b]{0.48\textwidth}
        \includegraphics[width=\linewidth]{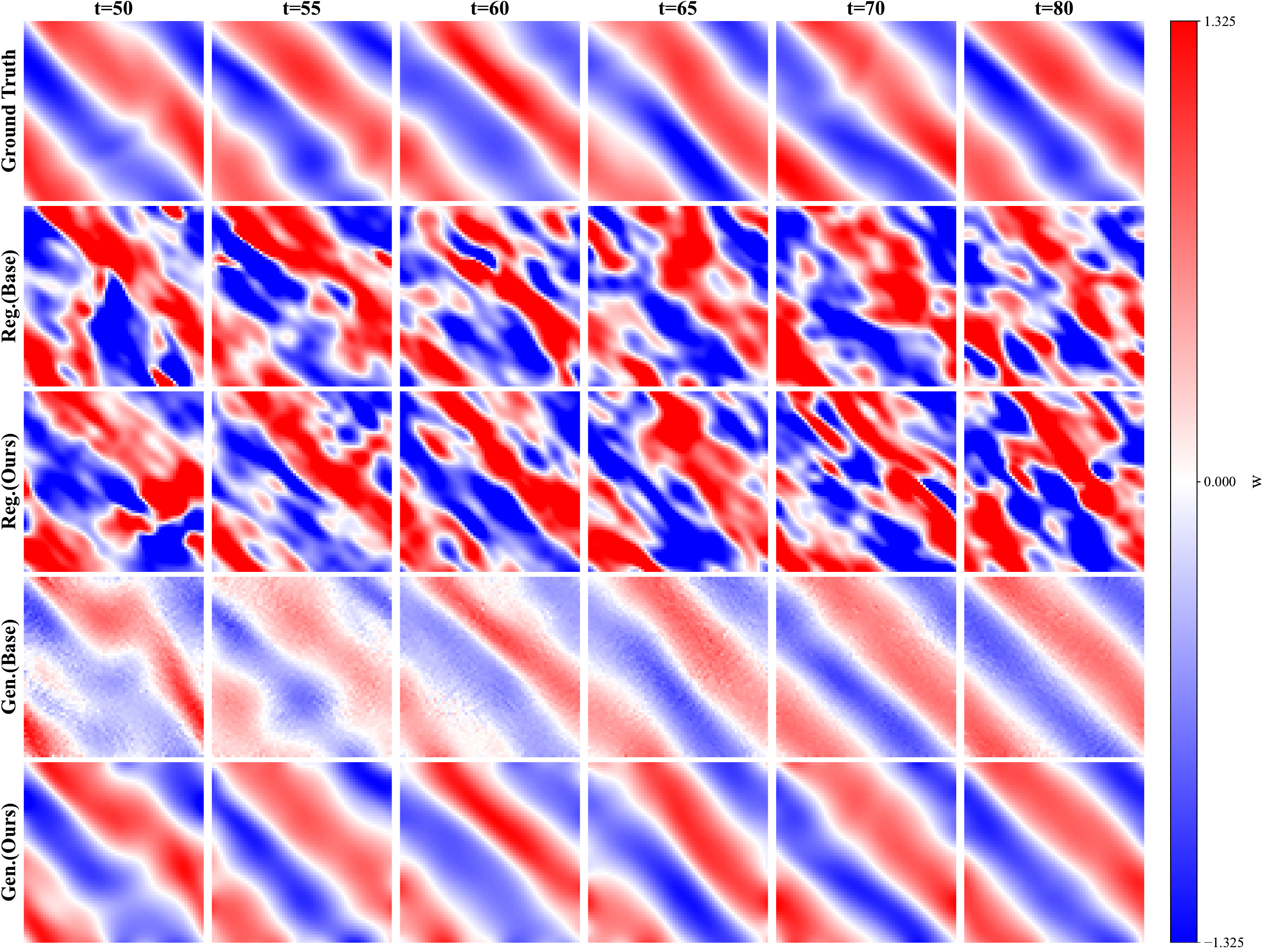} \caption{Vorticity $\omega$}
    \end{subfigure}
    
    \caption{\textbf{Short-term Stability (Extrapolation).} Snapshots at $T=100$.
    Top: Velocity drift in baseline.
    Middle: Conservation failure.
    Bottom: Topological degradation.}
    \label{fig:short_fields}
\end{figure*}

\begin{figure*}[ht]
    \centering
    \begin{subfigure}[b]{0.96\textwidth}
        \centering
        \includegraphics[width=\linewidth]{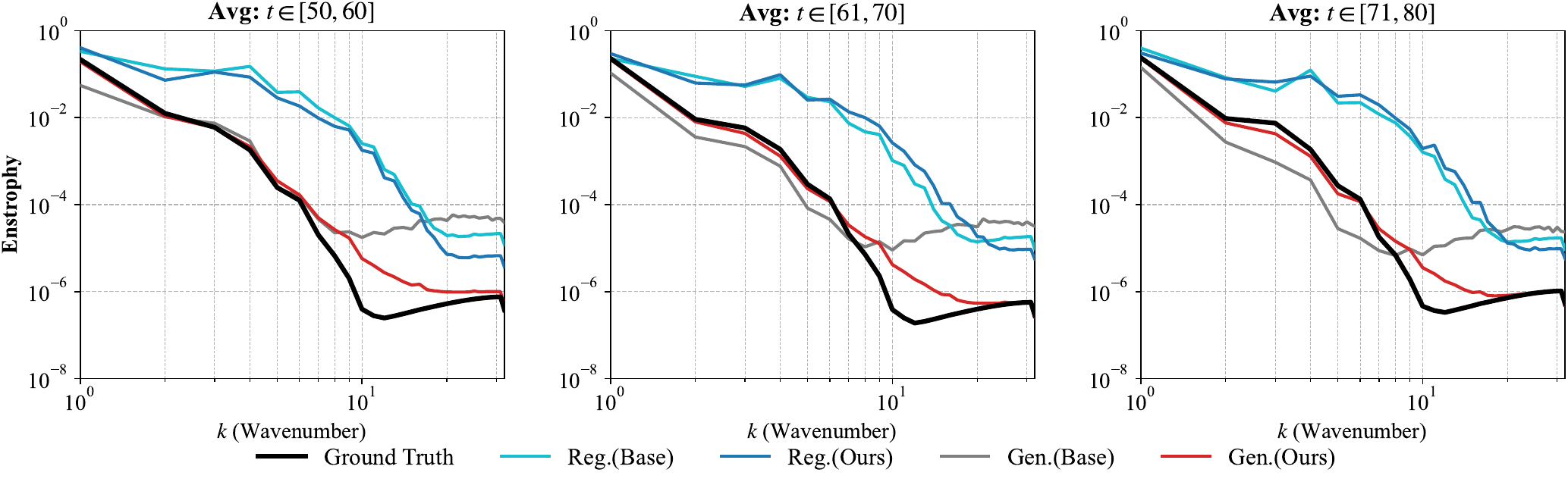}
        \caption{Enstrophy Spectrum}
    \end{subfigure}

    \begin{subfigure}[b]{0.48\textwidth}
        \includegraphics[width=\linewidth]{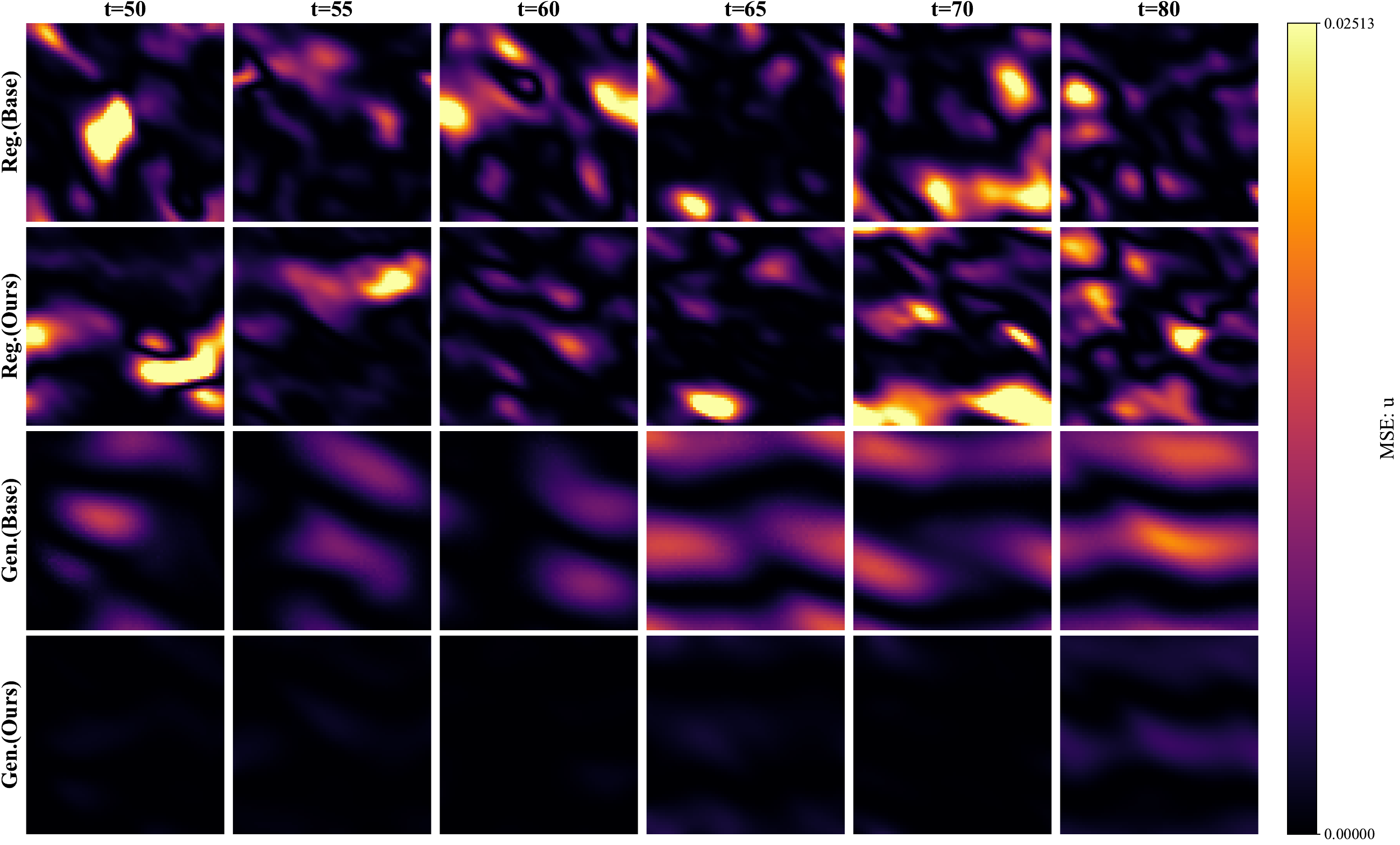} \caption{MSE $u$}
    \end{subfigure}
    \hfill
    \begin{subfigure}[b]{0.48\textwidth}
        \includegraphics[width=\linewidth]{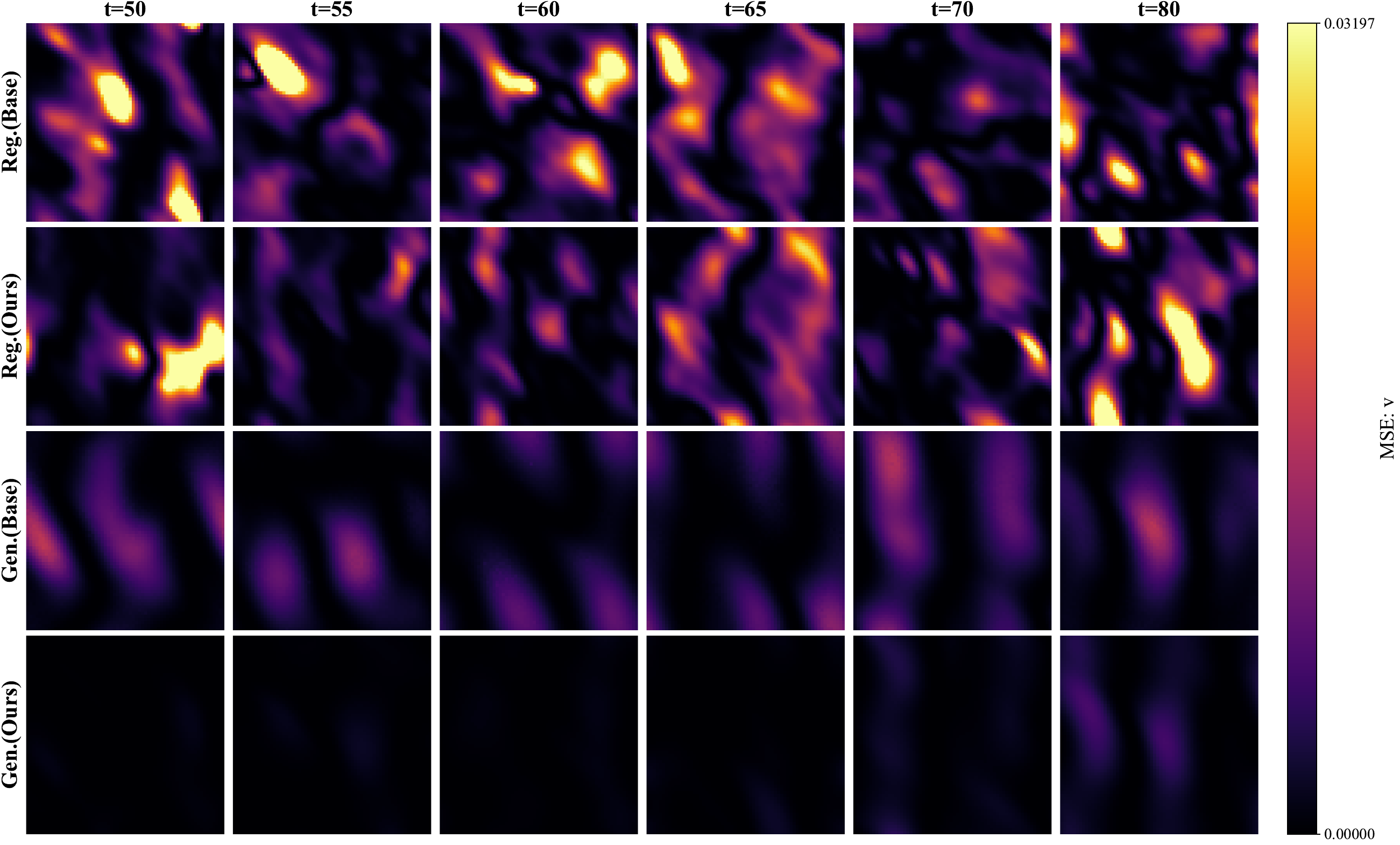} \caption{MSE $v$}
    \end{subfigure}

    \begin{subfigure}[b]{0.48\textwidth}
        \includegraphics[width=\linewidth]{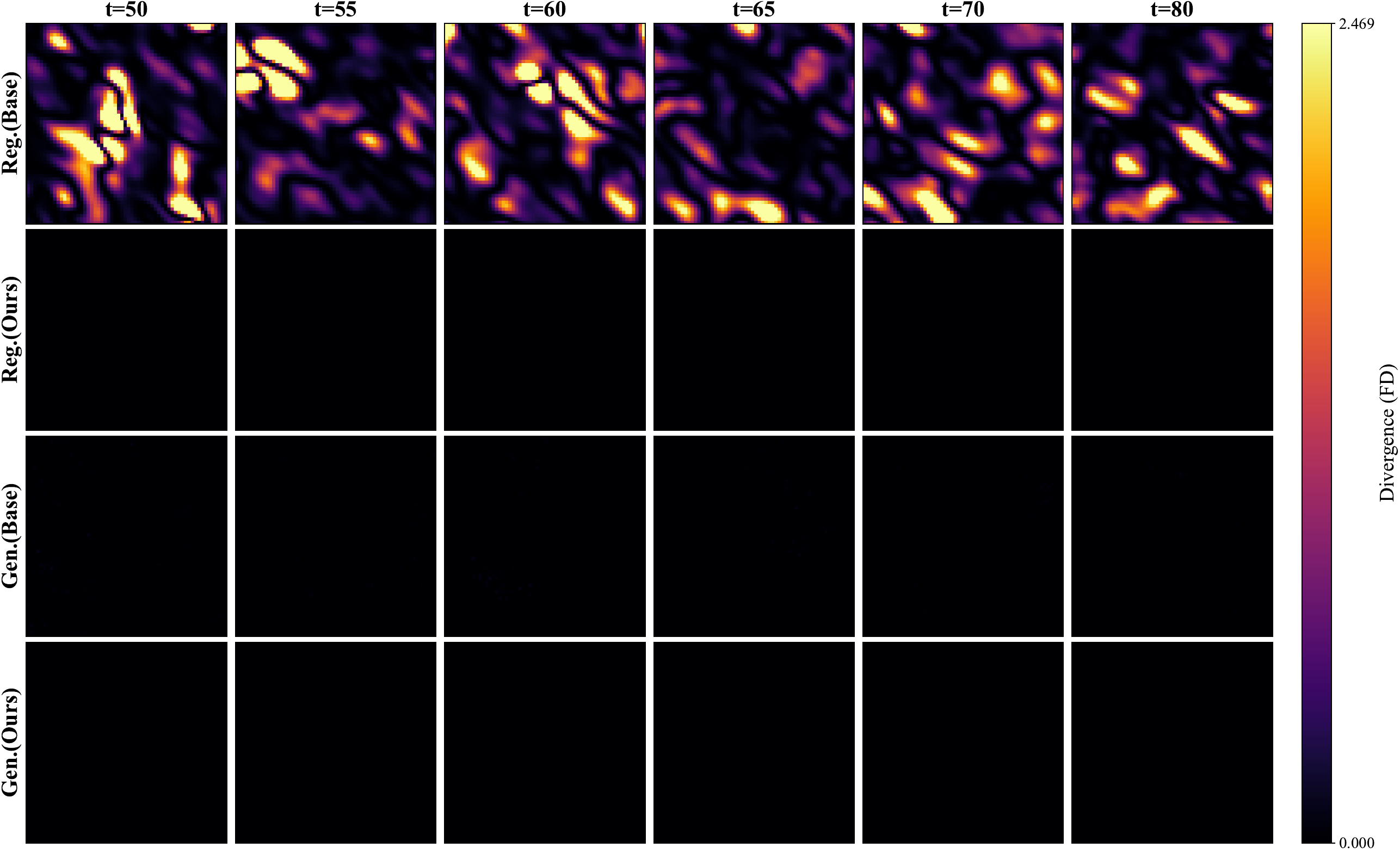} \caption{MSE Divergence}
    \end{subfigure}
    \hfill
    \begin{subfigure}[b]{0.48\textwidth}
        \includegraphics[width=\linewidth]{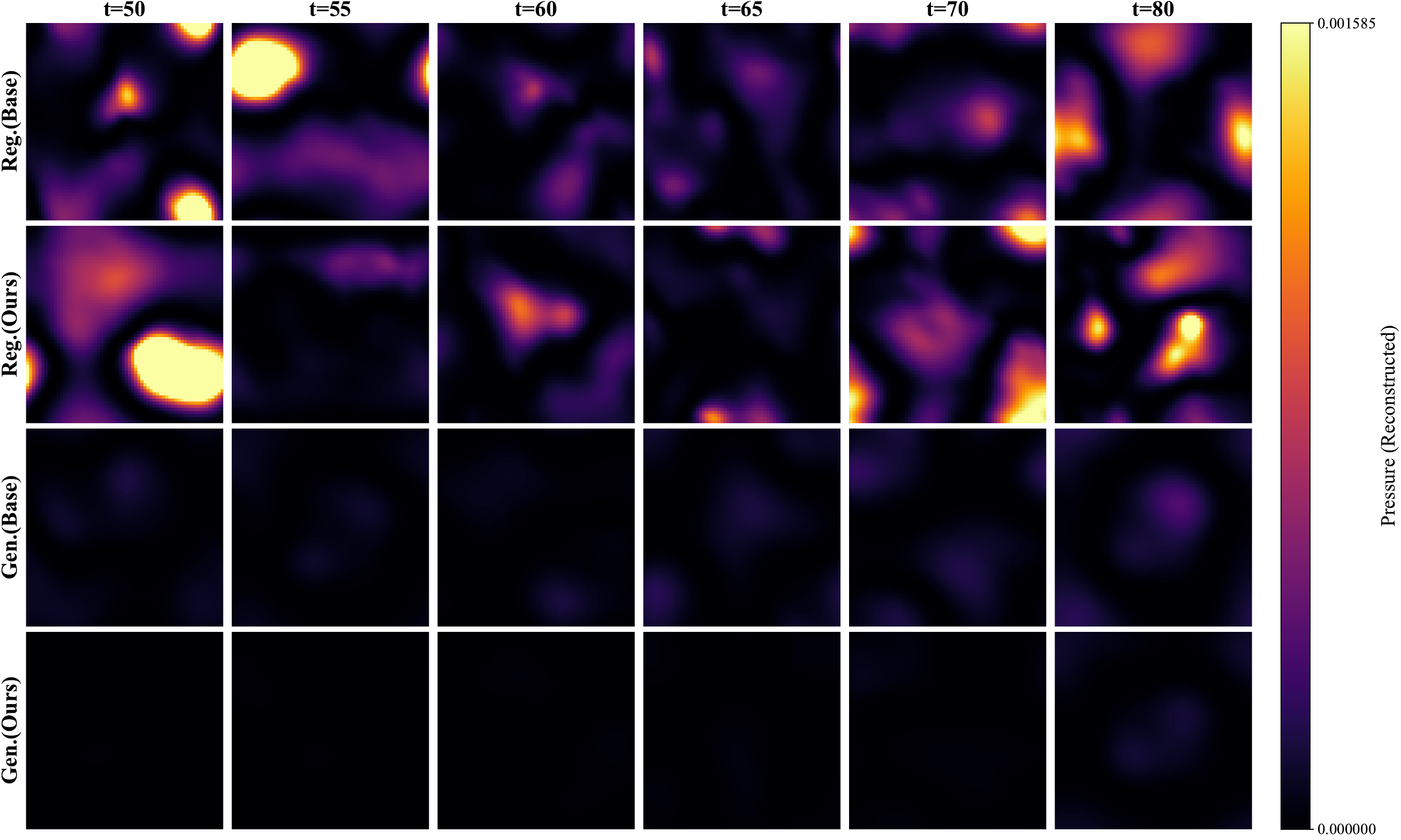} \caption{MSE Pressure}
    \end{subfigure}

    \begin{subfigure}[b]{0.48\textwidth}
        \includegraphics[width=\linewidth]{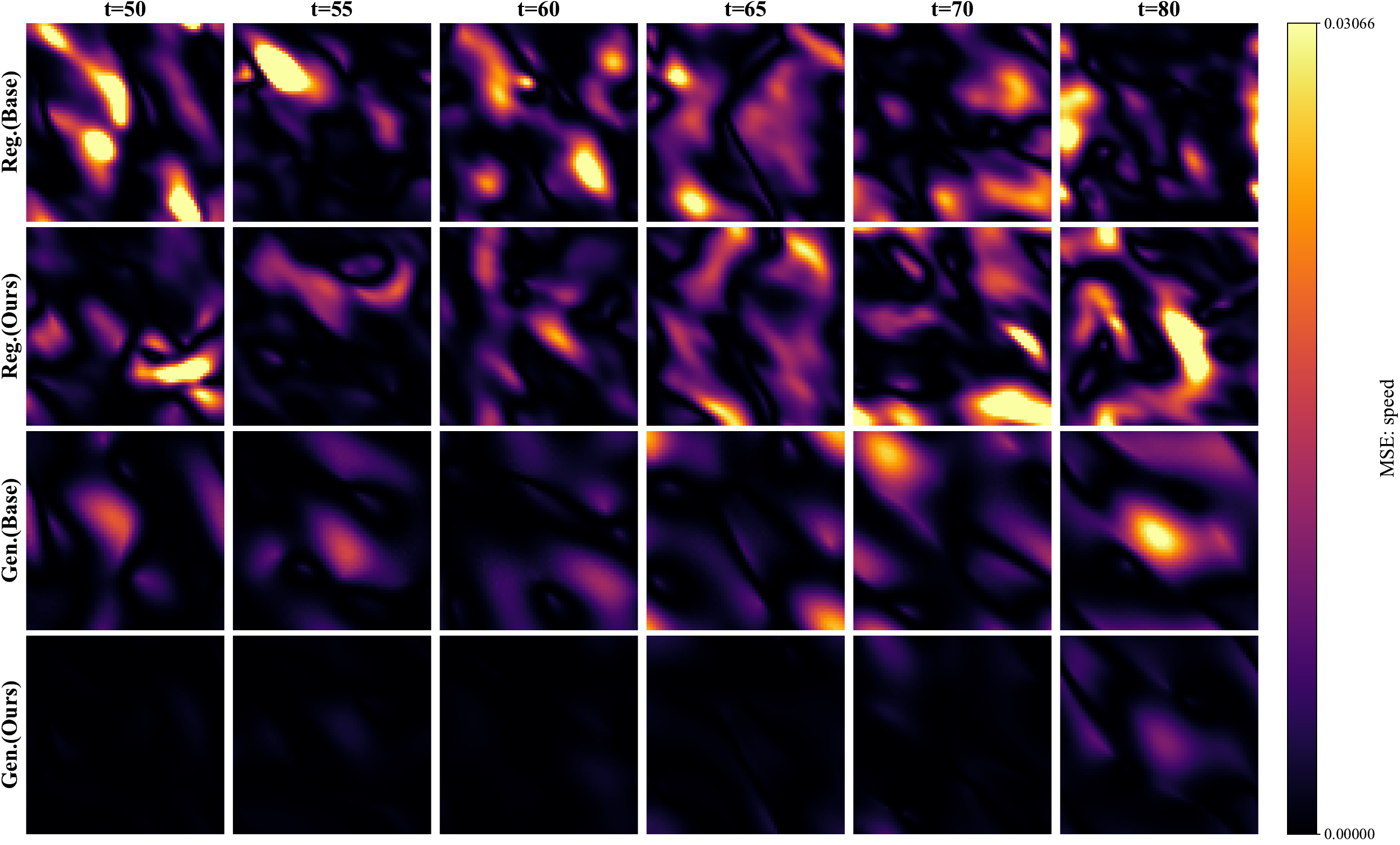} \caption{MSE Speed}
    \end{subfigure}
    \hfill
    \begin{subfigure}[b]{0.48\textwidth}
        \includegraphics[width=\linewidth]{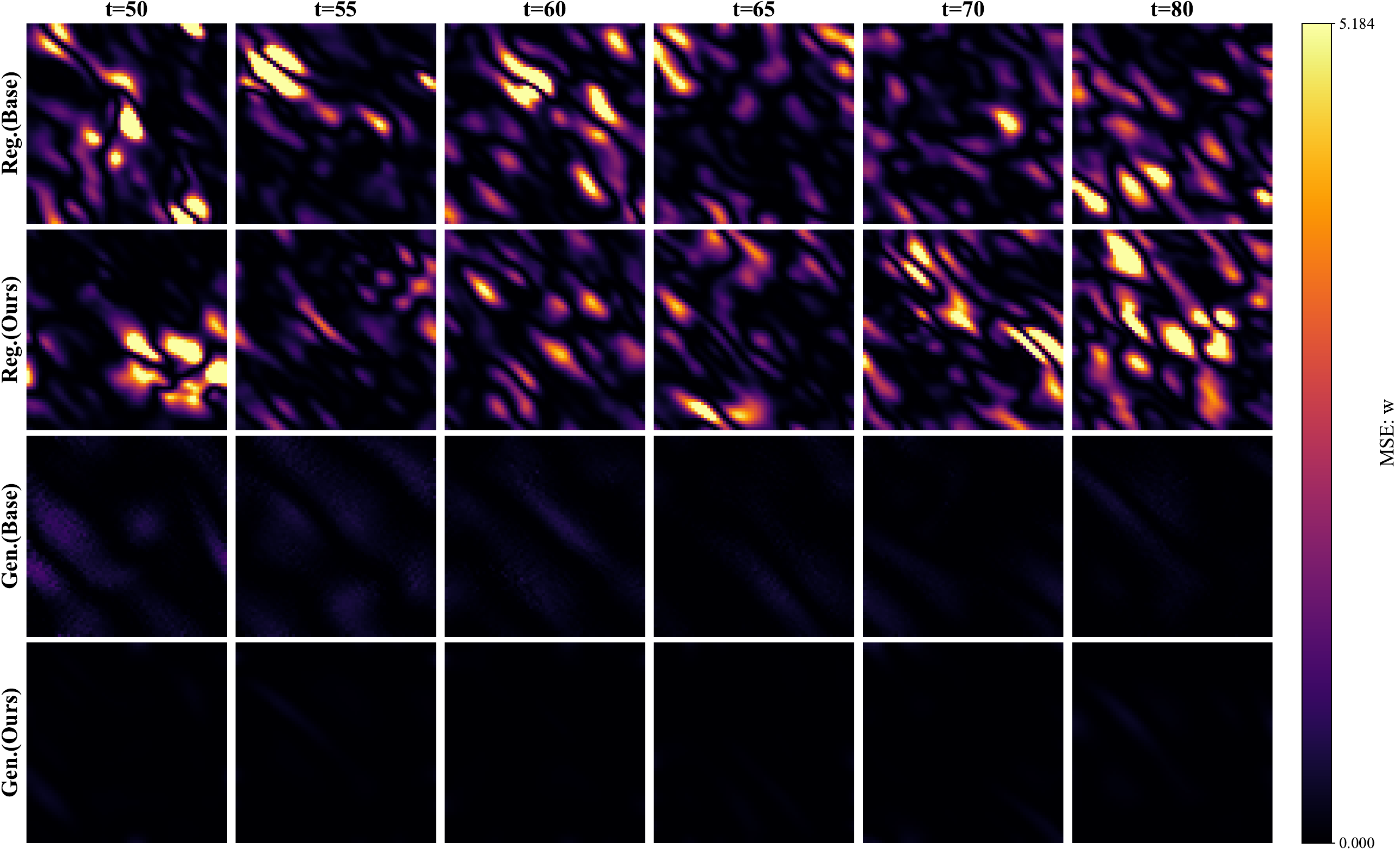} \caption{MSE $\omega$}
    \end{subfigure}
    
    \caption{\textbf{Stationarity and Error Growth (Short-term).} 
    Top: Enstrophy spectrum remains stable.
    Rows 2-4: Significant divergence errors appear in the baseline.}
    \label{fig:short_mse}
\end{figure*}

\begin{figure*}[ht]
    \centering
    \begin{subfigure}[b]{0.48\textwidth}
        \includegraphics[width=\linewidth]{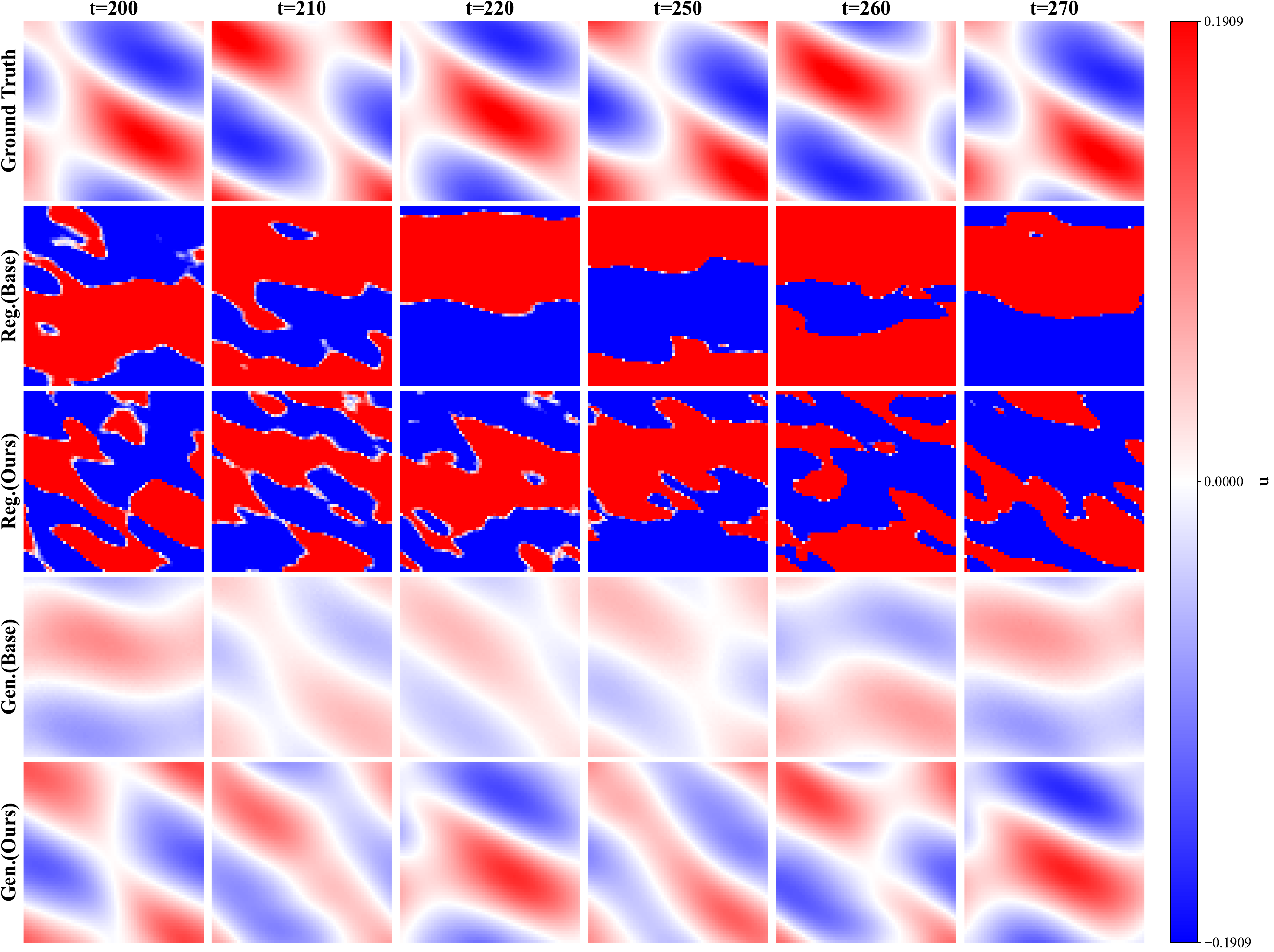} \caption{Zonal Velocity $u$}
    \end{subfigure}
    \hfill
    \begin{subfigure}[b]{0.48\textwidth}
        \includegraphics[width=\linewidth]{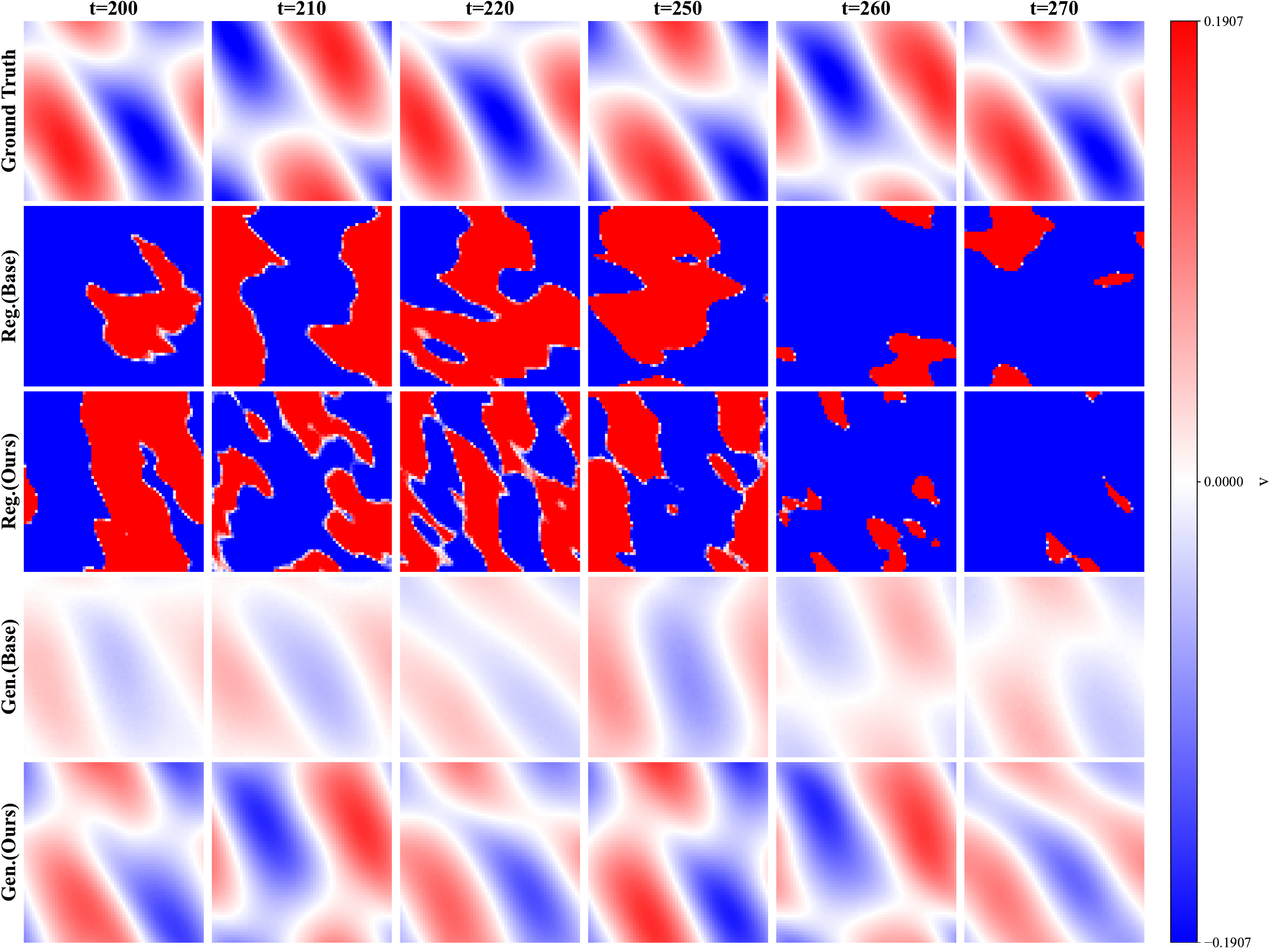} \caption{Meridional Velocity $v$}
    \end{subfigure}

    \begin{subfigure}[b]{0.48\textwidth}
        \includegraphics[width=\linewidth]{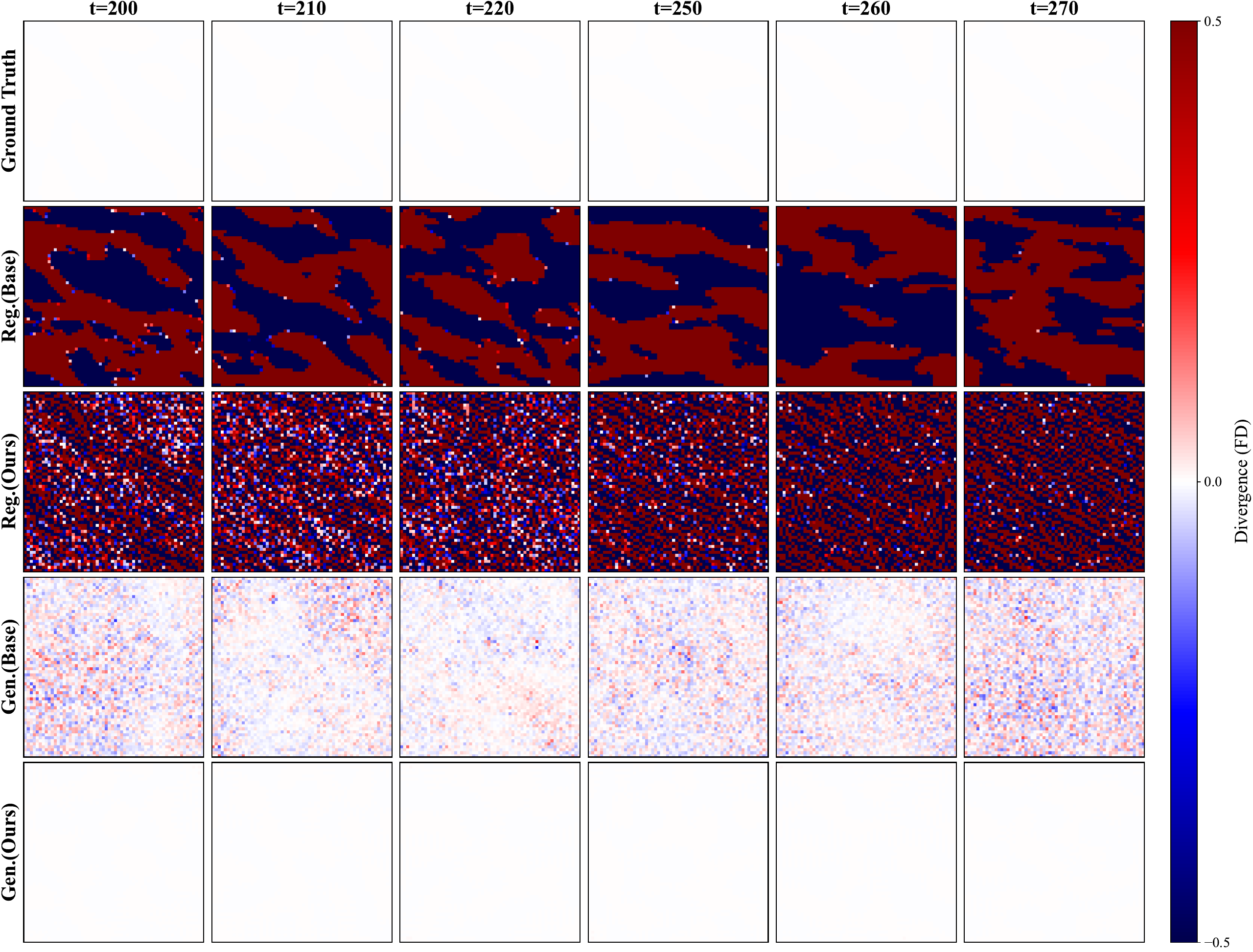} \caption{Divergence Accumulation}
    \end{subfigure}
    \hfill
    \begin{subfigure}[b]{0.48\textwidth}
        \includegraphics[width=\linewidth]{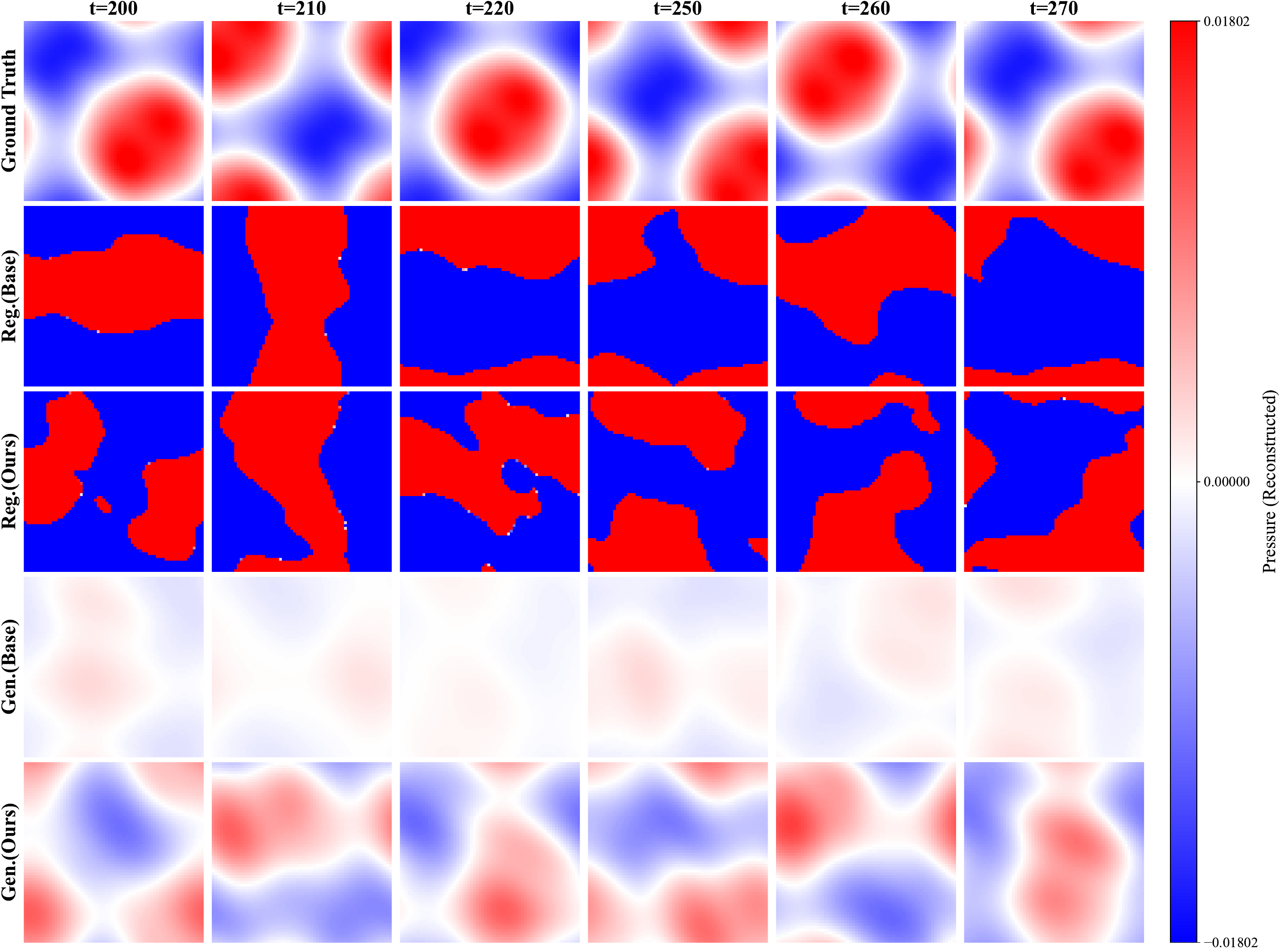} \caption{Pressure Collapse}
    \end{subfigure}

    \begin{subfigure}[b]{0.48\textwidth}
        \includegraphics[width=\linewidth]{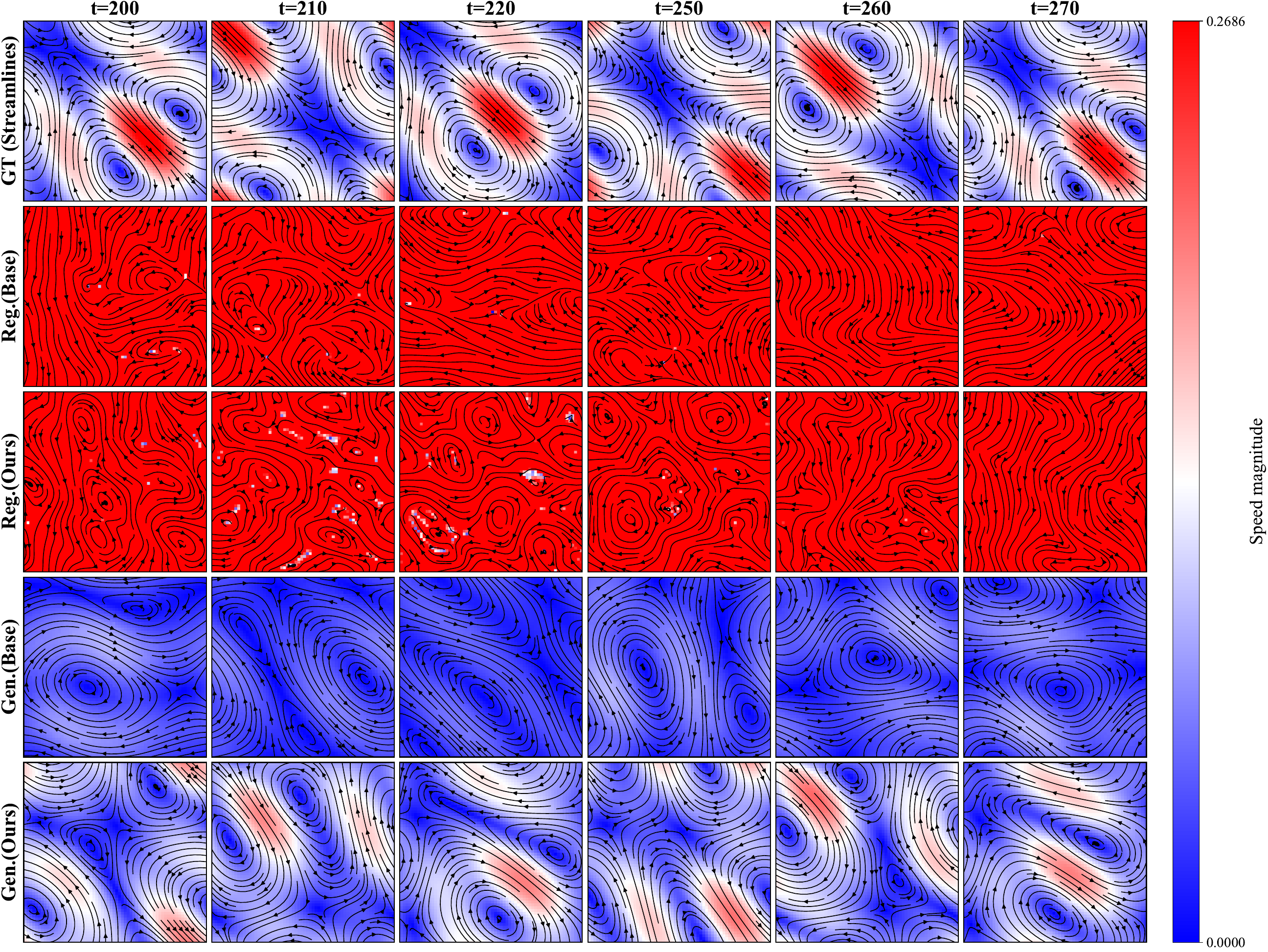} \caption{Streamlines}
    \end{subfigure}
    \hfill
    \begin{subfigure}[b]{0.48\textwidth}
        \includegraphics[width=\linewidth]{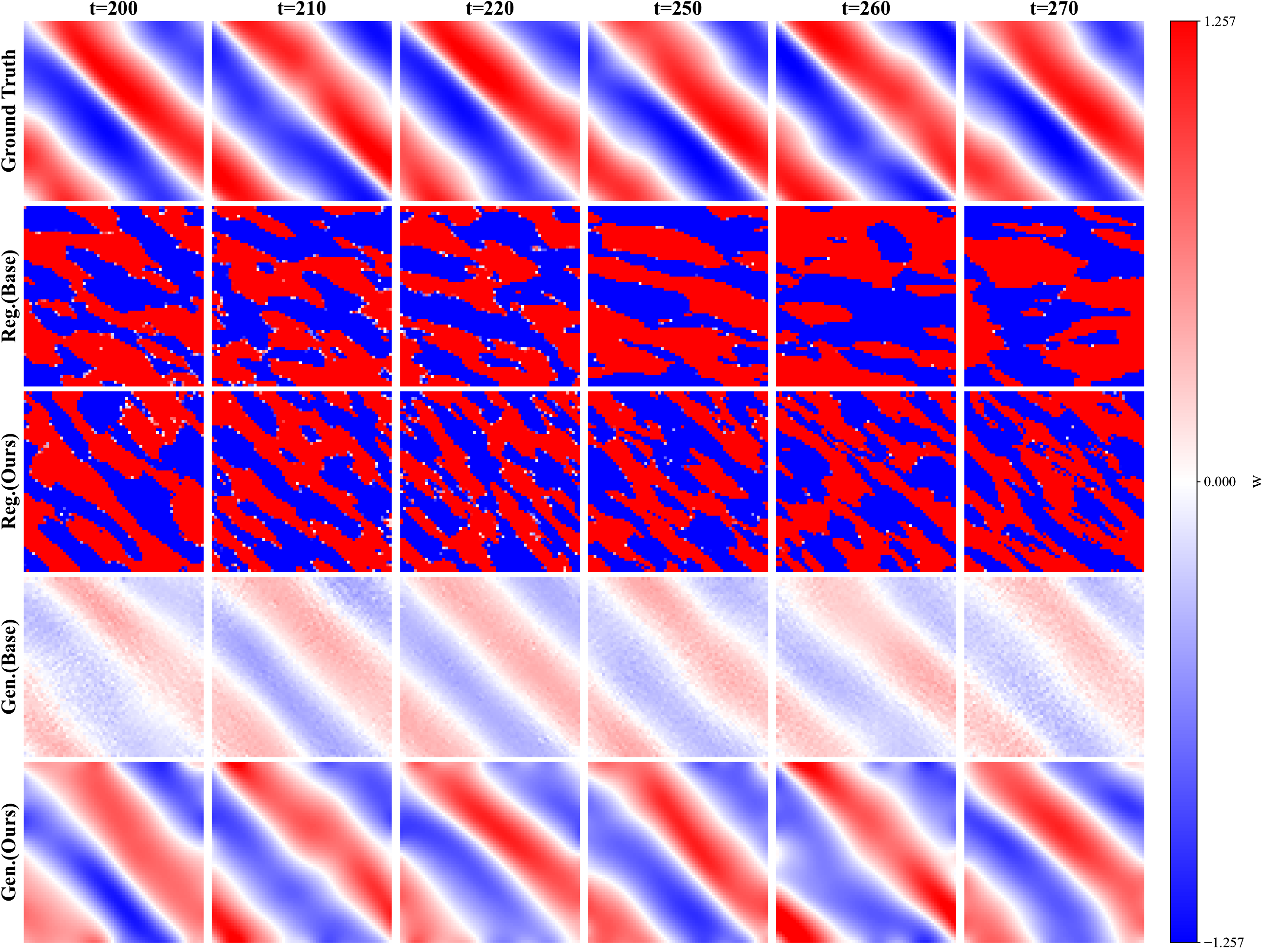} \caption{Vorticity $\omega$}
    \end{subfigure}
    
    \caption{\textbf{Long-term Stability (Extrapolation).} Snapshots at $T=300$.
    Top: Complete decoherence in baseline.
    Middle: Continued physical validity in ours.
    Bottom: Stationarity of topological features.}
    \label{fig:long_fields}
\end{figure*}

\begin{figure*}[ht]
    \centering
    \begin{subfigure}[b]{0.96\textwidth}
        \centering
        \includegraphics[width=\linewidth]{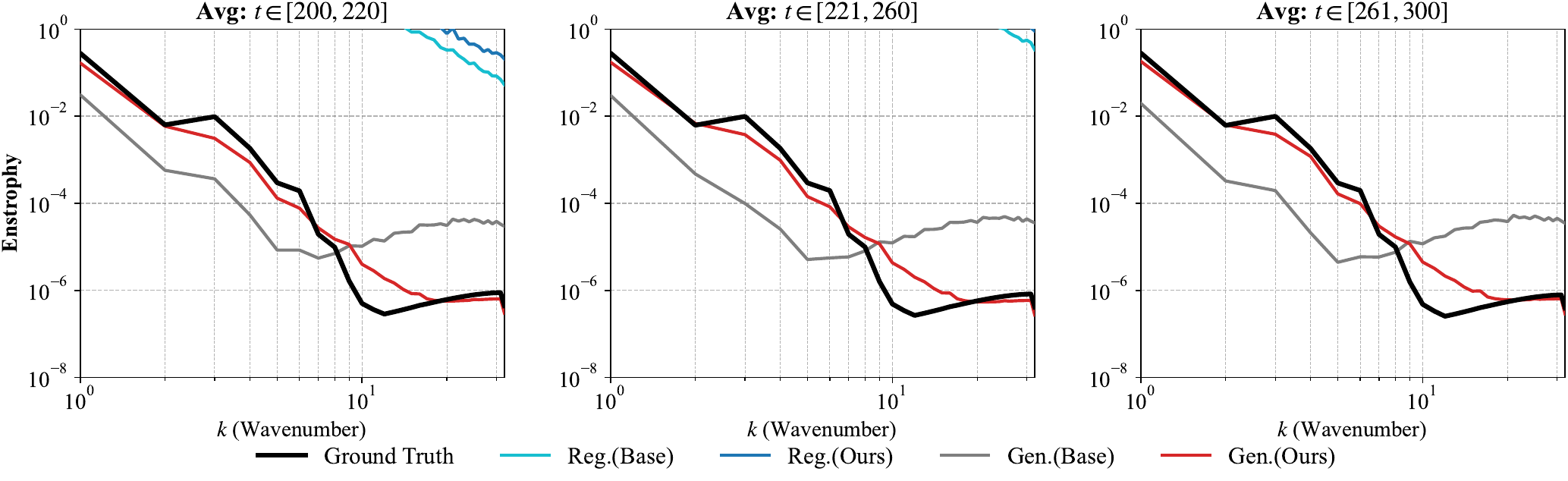}
        \caption{Enstrophy Spectrum}
    \end{subfigure}

    \begin{subfigure}[b]{0.48\textwidth}
        \includegraphics[width=\linewidth]{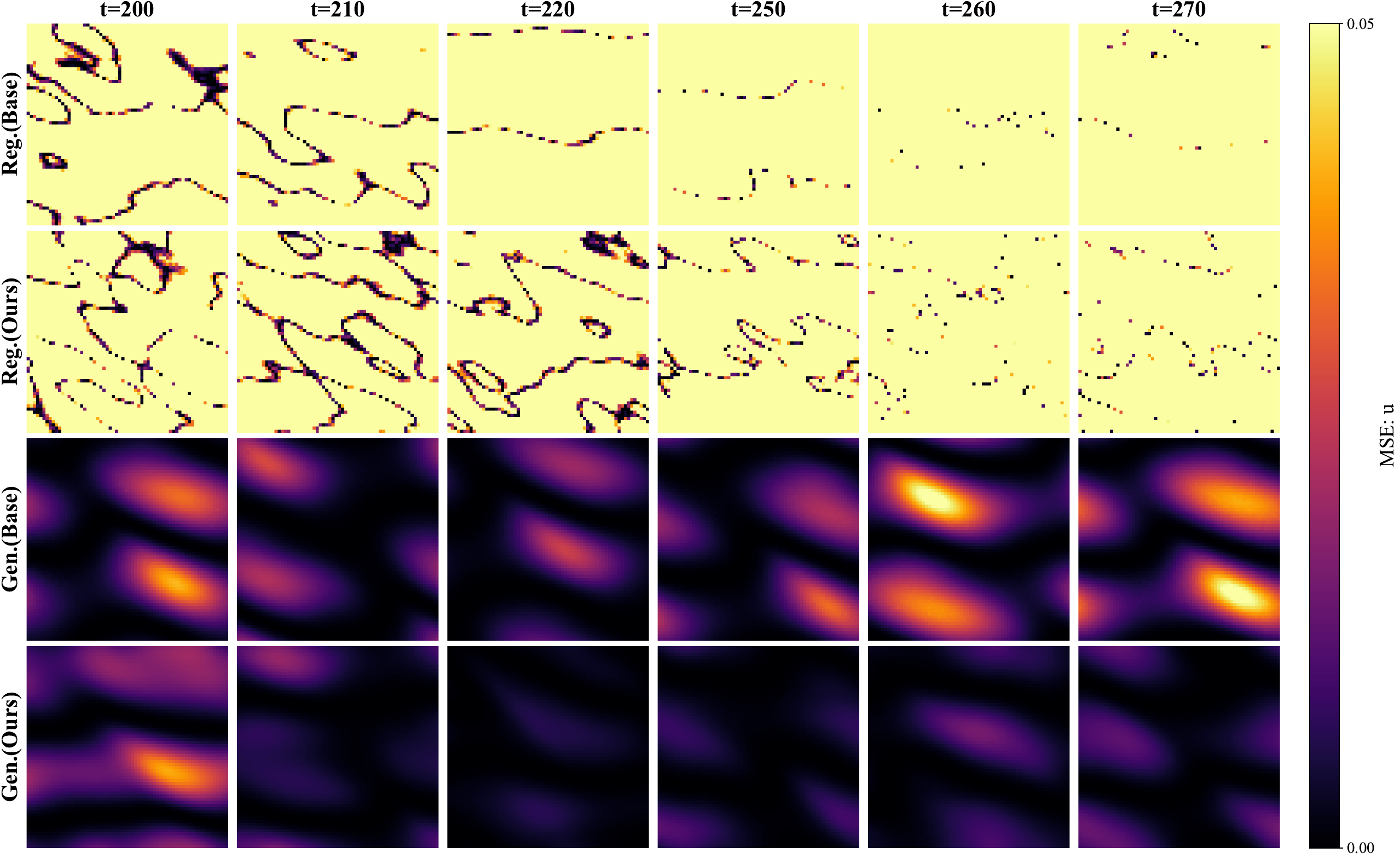} \caption{MSE $u$}
    \end{subfigure}
    \hfill
    \begin{subfigure}[b]{0.48\textwidth}
        \includegraphics[width=\linewidth]{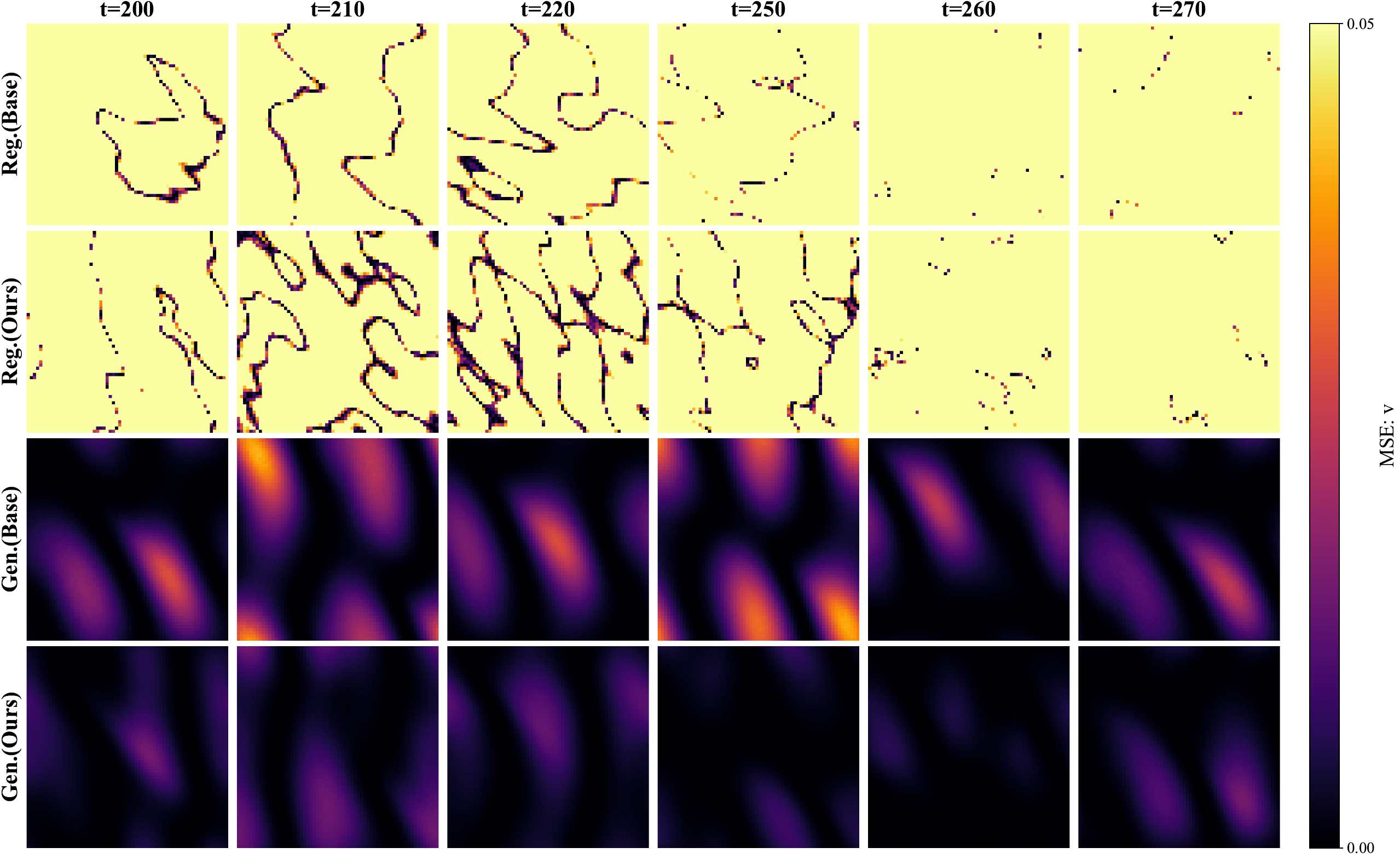} \caption{MSE $v$}
    \end{subfigure}

    \begin{subfigure}[b]{0.48\textwidth}
        \includegraphics[width=\linewidth]{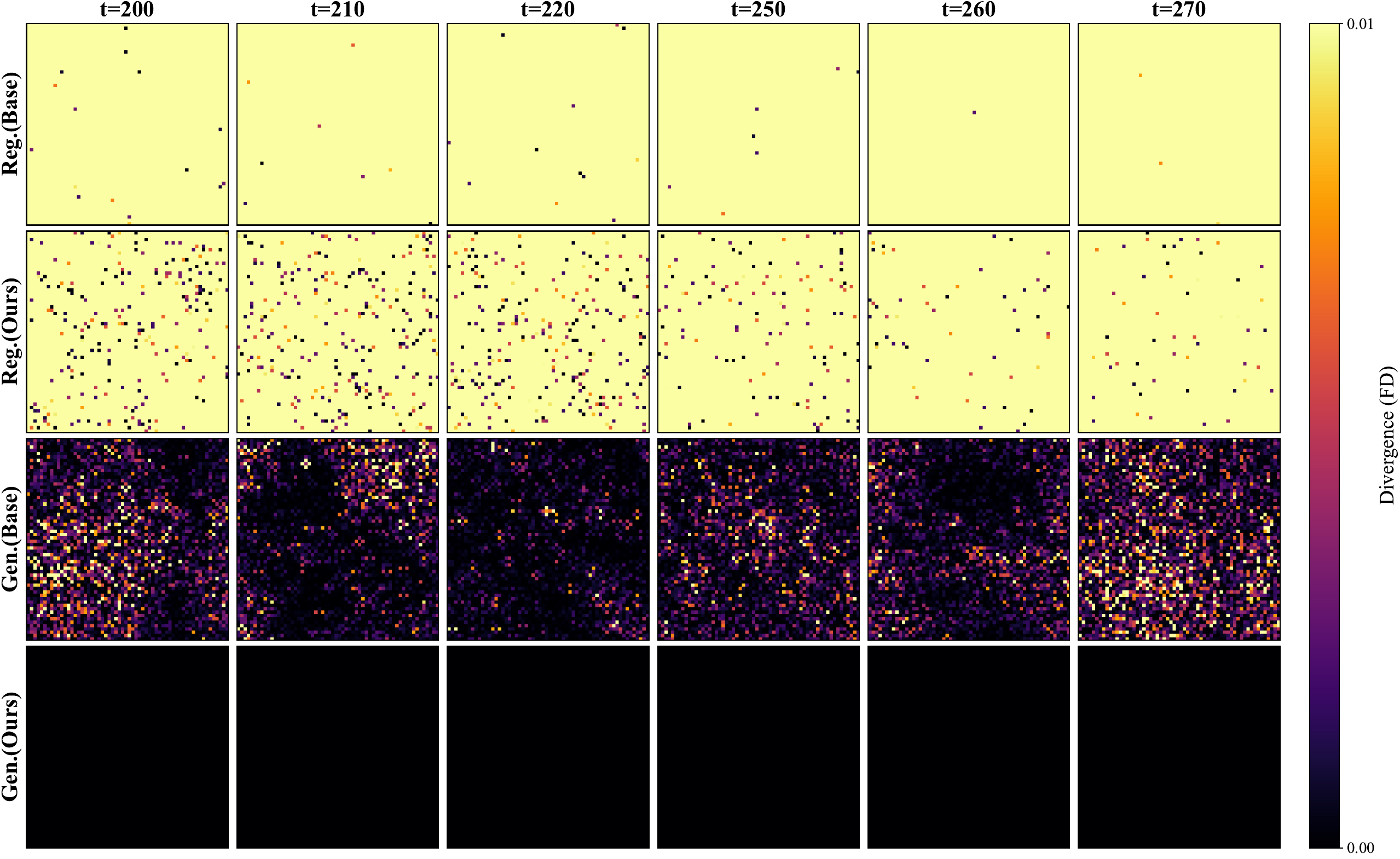} \caption{MSE Divergence}
    \end{subfigure}
    \hfill
    \begin{subfigure}[b]{0.48\textwidth}
        \includegraphics[width=\linewidth]{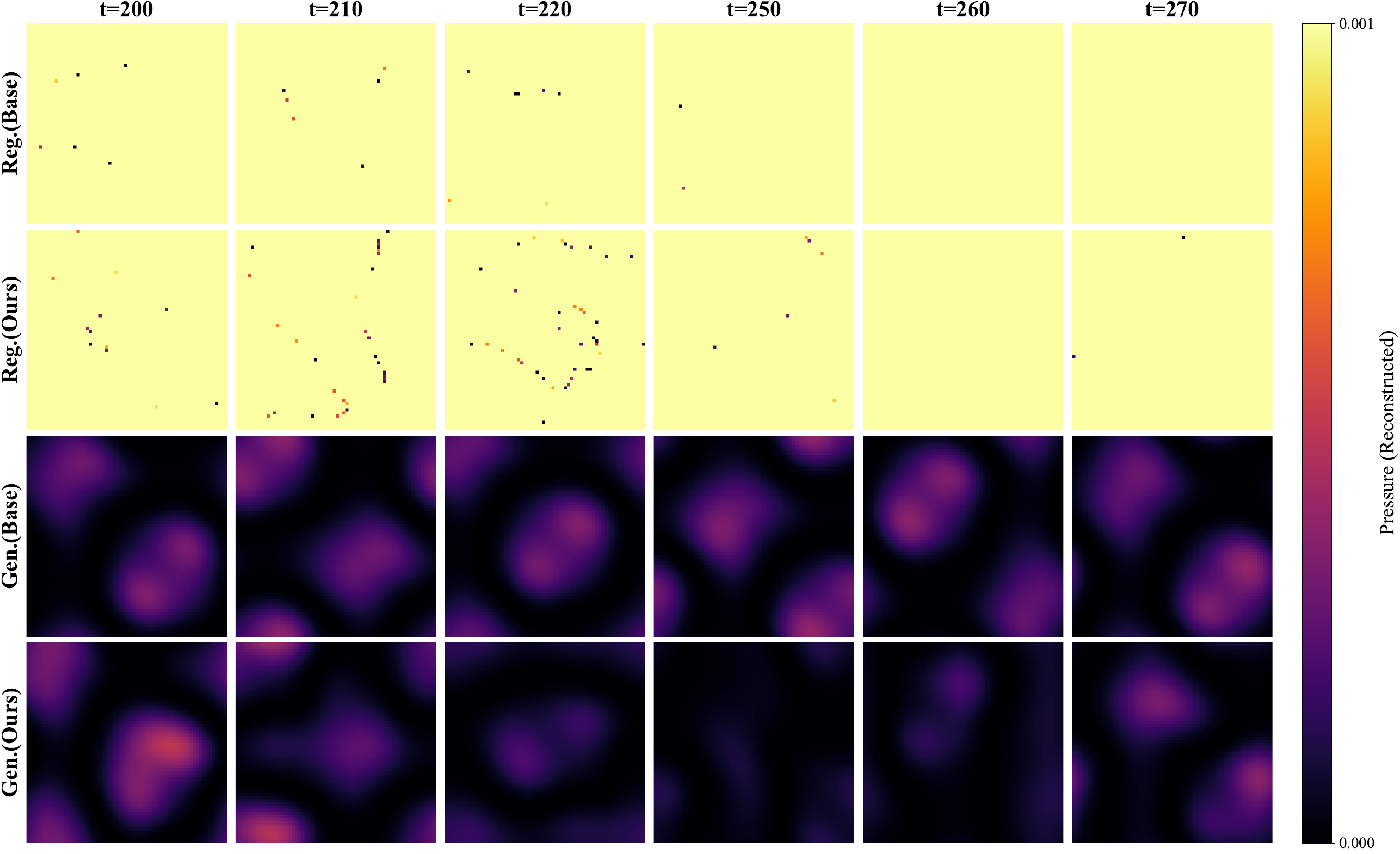} \caption{MSE Pressure}
    \end{subfigure}

    \begin{subfigure}[b]{0.48\textwidth}
        \includegraphics[width=\linewidth]{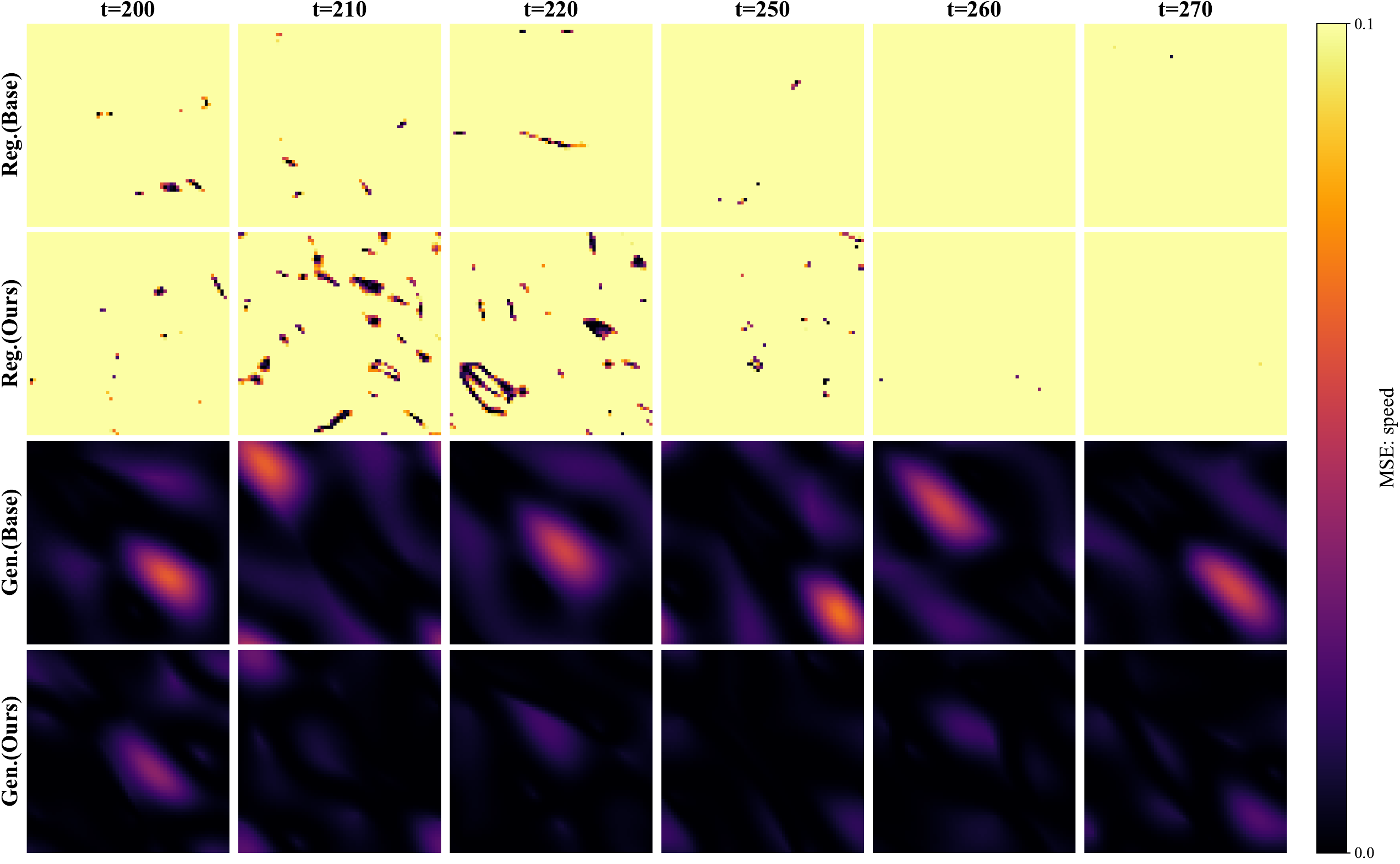} \caption{MSE Speed}
    \end{subfigure}
    \hfill
    \begin{subfigure}[b]{0.48\textwidth}
        \includegraphics[width=\linewidth]{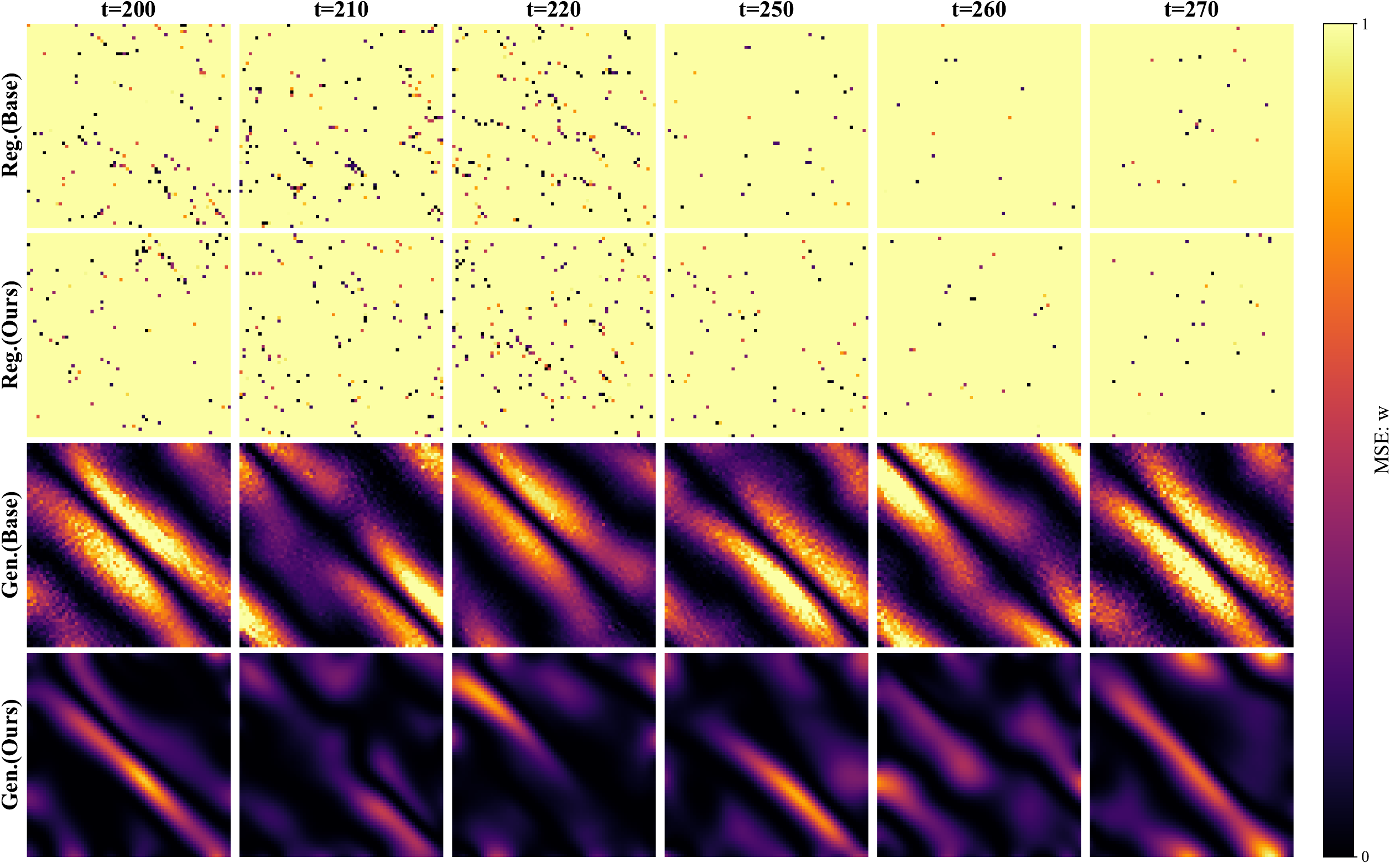} \caption{MSE $\omega$}
    \end{subfigure}
    
    \caption{\textbf{Long-term Stability (Extrapolation).} 
    Top: Enstrophy spectrum. 
    Rows 2-4: Baseline errors saturate, while ours remain bounded.}
    \label{fig:long_mse}
\end{figure*}

\end{document}